\newcommand{\shortrnote}[1]{ &  & \text{\footnotesize\llap{#1}}}
\newcommand{\longrnote}[1]{ &  &  \\   &  &  &  &  & \notag \text{\footnotesize\llap{#1}}}
\def\BState{\State\hskip-\ALG@thistlm}
\newtheorem{definition}{Definition}
\newtheorem{problem}{Problem}
\newtheorem{assumption}{Assumption}
\newtheorem{claim}{Claim}
\newtheorem{lemma}{Lemma}
\newtheorem{remark}{Remark}
\newtheorem{theorem}{Theorem}
\newcommand{\Ocal}{{\mathcal{O}}}
\def\one{{\mathbbm{1}}}
\def\Ncal{{\mathcal{N}}}
\def\Dcal{{\mathcal{D}}}
\def\Lcal{{\mathcal{L}}}
\def\ba{{\boldsymbol{a}}}
\def\bb{{\boldsymbol{b}}}
\def\be{{\boldsymbol{e}}}
\def\br{{\boldsymbol{r}}}
\def\bx{{\boldsymbol{x}}}
\def\by{{\boldsymbol{y}}}
\def\bv{{\boldsymbol{v}}}
\def\bu{{\boldsymbol{u}}}
\def\bz{{\boldsymbol{z}}}
\def\Sbb{{\mathbb{S}}}
\def\Diag{{\text{diag}}}
\DeclarePairedDelimiter\floor{\lfloor}{\rfloor}
\newcommand{\relu}{\text{ReLU}}
\def\eqref#1{equation~\ref{#1}}
\def\floor#1{\lfloor #1 \rfloor}
\def\1{\bm{1}}
\DeclareMathAlphabet{\mathsfit}{\encodingdefault}{\sfdefault}{m}{sl}
\SetMathAlphabet{\mathsfit}{bold}{\encodingdefault}{\sfdefault}{bx}{n}
\newcommand{\E}{\mathbb{E}}
\newcommand{\R}{\mathbb{R}}
\DeclareMathOperator{\Tr}{Tr}
\def\@fnsymbol#1{\ensuremath{\ifcase#1\or *\or \dagger\or \ddagger\or
		\mathsection\or \mathparagraph\or \|\or **\or \dagger\dagger
		\or \ddagger\ddagger \else\@ctrerr\fi}}
\newcommand{\fig}{./figures/}
\title{SGD Learns One-Layer Networks in WGANs}
\author[$\thanks{University of Texas at Austin, Email: \url{leiqi@ices.utexas.edu}}$]{Qi Lei }
\author[$\thanks{Princeton University. Email:  \url{jasondlee88@gmail.com}}$]{\ \ Jason D. Lee }
\author[$\thanks{University of Texas at Austin, Email: \url{dimakis@austin.utexas.edu}}$]{\ \ Alexandros G. Dimakis\ }
\author[$\thanks{Massachusetts Institute of Technology, Email: \url{costis@csail.mit.edu} }$]{\ Constantinos Daskalakis }
\affil[ ]{}
\begin{document}
	
	\maketitle
	
	\begin{abstract}
Generative adversarial networks (GANs) are a widely used framework for learning generative models. Wasserstein GANs (WGANs), one of the most successful variants of GANs, require solving a minmax optimization problem to global optimality, but are in practice successfully trained using stochastic gradient descent-ascent. In this paper, we show that, when the generator is a one-layer network, stochastic gradient descent-ascent converges to a global solution with polynomial time and sample complexity. 
	\end{abstract}

\section{Introduction}
Generative Adversarial Networks (GANs) \citep{goodfellow2014generative} are a prominent framework for learning generative models of complex, real-world distributions given samples from these distributions.
GANs and their variants have been successfully applied to numerous datasets and tasks, including image-to-image translation \citep{isola2017image}, image super-resolution \citep{ledig2017photo}, domain adaptation \citep{tzeng2017adversarial}, probabilistic inference \citep{dumoulin2016adversarially}, compressed sensing \citep{bora2017compressed} and many more. These advances owe in part to the success of Wasserstein GANs (WGANs) \citep{arjovsky2017wasserstein,gulrajani2017improved}, leveraging
the neural net induced integral probability metric to better measure the difference between a target and a generated distribution. 

Along with the aforementioned empirical successes, there have been 
theoretical studies of the statistical properties of GANs---see e.g.~~\citep{zhang2018discrimination,arora2017generalization,arora2018gans,bai2018approximability,dumoulin2016adversarially} and their references. These works have shown that, with an appropriate design of the generator and discriminator, the global optimum of the WGAN objective identifies the target distribution with low sample complexity. However, these results cannot be algorithmically attained via practical GAN training algorithms.

On the algorithmic front, prior work has  focused on the stability and convergence properties of gradient descent-ascent (GDA) and its variants in GAN training and more general min-max optimization problems; see e.g.~\citep{nagarajan2017gradient,heusel2017gans,mescheder2017numerics,mescheder2018training,daskalakis2017training,daskalakis2018last,daskalakis2018limit,GidelHPPHLM19,LiangS19,mokhtari2019unified,jin2019minmax,lin2019gradient,lei2020last,lei2019primal,lei2017doubly} 
and their references. These works have studied conditions under which GDA converges to a globally optimal solution in the convex-concave objective, or local stability in the non-convex non-concave setting. These results do not ensure convergence to a globally optimal generator, or in fact even convergence to a locally optimal generator.


Thus a natural question is whether:
\begin{center}
	\emph{ Are GANs able to learn high-dimensional distributions in polynomial time and polynomial/parametric sample complexity, and thus bypass the curse of dimensionality?}
\end{center}
The aforementioned prior works stop short of this goal due to a) the intractability of min-max optimization in the non-convex setting, and b) the curse of dimensionality in learning with Wasserstein distance in high dimensions~\citep{bai2018approximability}.

A notable exception is \citet{feizi2017understanding} which shows that for WGANs with a linear generator and quadratic discriminator GDA succeeds in learning a Gaussian  using polynomially many samples in the dimension. 

In the same vein, we are the first to our knowledge to study the global convergence properties of stochastic GDA in the GAN setting, and establishing such guarantees for non-linear generators. In particular, we study the WGAN formulation for learning a single-layer generative model with some reasonable choices of activations including tanh, sigmoid and leaky ReLU.
 
{\bf Our contributions. }
For WGAN with a one-layer generator network using an activation from a large family of functions and a quadratic discriminator, we show that stochastic gradient descent-ascent learns a target distribution using polynomial time and samples, under the assumption that the target distribution is realizable in the architecture of the generator. This is achieved by \textit{simultaneously} satisfying the following two criterion:
\begin{enumerate} \item Proving that stochastic gradient-descent attains a globally optimal generator in the metric induced by the discriminator,

\item Proving that appropriate design of the discriminator  ensures a parametric $\Ocal(\frac{1}{\sqrt{n}})$ statistical rate \citep{zhang2018discrimination,bai2018approximability} that matches the lower bound for learning one-layer generators as shown in \cite{wu2019learning}.
\end{enumerate}

\section{Related Work}
We briefly review relevant results in GAN training and learning generative models:

\subsection{Optimization viewpoint}
For standard GANs and WGANs with appropriate regularization, \cite{nagarajan2017gradient}, \cite{mescheder2017numerics} and \cite{heusel2017gans} establish sufficient conditions to achieve local convergence and stability properties for GAN training. 
At the equilibrium point, if the Jacobian of the associated gradient vector field has only eigenvalues with negative real-part, GAN training is verified to converge locally for small enough learning rates. A follow-up paper by \citep{mescheder2018training} shows the necessity of these conditions by identifying a counterexample that fails to converge locally for gradient descent based GAN optimization. The lack of global convergence prevents the analysis from yielding any guarantees for learning the real distribution.




The work of \citep{feizi2017understanding} described above has similar goals as our paper, namely understanding the convergence properties of basic dynamics in  simple WGAN formulations. However, they only consider linear generators, which restrict the WGAN model to learning a Gaussian. Our work goes a  step further, considering WGANs whose generators  are one-layer neural networks with a broad selection of activations. We show that with a proper gradient-based algorithm, we can still recover the ground truth parameters of the underlying distribution.

%

More broadly, WGANs typically result in nonconvex-nonconcave min-max optimization problems.
In these problems, a global min-max solution may not exist, and there are various notions of local min-max solutions, namely local min-local max solutions~\citep{daskalakis2018limit}, and local min solutions of the max objective~\citep{jin2019minmax}, the latter being guaranteed to exist under mild conditions. In fact, \cite{lin2019gradient} show that GDA is able to find stationary points of the max objective in nonconvex-concave objectives.  Given that GDA may not even converge for convex-concave objectives, another line of work  has studied variants of GDA that exhibit global convergence to the min-max solution~\citep{daskalakis2017training,daskalakis2018last,GidelHPPHLM19,LiangS19,mokhtari2019unified}, which is established for GDA variants that add negative momentum to the dynamics. While the convergence of GDA with negative momentum is shown in convex-concave settings, there is experimental evidence supporting that it improves GAN training~\citep{daskalakis2017training,GidelHPPHLM19}. 


\subsection{Statistical viewpoint}
Several  works have studied the issue of mode collapse. One might doubt the ability of GANs to actually learn the distribution vs just memorize the training data \citep{arora2017generalization,arora2018gans,dumoulin2016adversarially}. Some corresponding cures have been proposed. For instance, \cite{zhang2018discrimination,bai2018approximability} show for specific generators combined with appropriate parametric discriminator design, WGANs can attain parametric statistical rates, avoiding the exponential in dimension sample complexity \citep{liang2018well, bai2018approximability,feizi2017understanding}.

Recent work of \cite{wu2019learning} provides an algorithm to learn the distribution of a single-layer ReLU generator network. While our conclusion appears similar, our focus is very different. Our paper  targets understanding when a WGAN formulation trained with stochastic GDA can learn in polynomial time and sample complexity. Their work instead relies on a specifically tailored algorithm for learning  truncated normal distributions~\citep{DaskalakisGTZ18}.

\section{Preliminaries} 

\textbf{Notation.} We consider GAN formulations for learning a generator $G_A:\R^k\rightarrow \R^d$ of the form $\bz\mapsto \bx=\phi(A\bz)$, where $A$ is a $d \times k$ parameter matrix and $\phi$  some activation function. We consider discriminators $D_{\bv}:\R^d\rightarrow \R$ or $D_V:\R^d\rightarrow \R$ respectively when the discriminator functions are parametrized by either vectors or matrices.  
We assume latent variables $\bz$ are sampled from the normal  $\Ncal(0,I_{k\times k})$,  where $I_{k\times k}$ denotes the identity matrix of size $k$. The real/target distribution outputs samples $\bx \sim \Dcal= G_{A^*}(\Ncal(0,I_{k_0\times k_0}))$, for some ground truth parameters $A^*$, where $A^*$ is $d \times k_0$, and we take $k\geq k_0$ for enough expressivity, taking $k=d$ when $k_0$ is unknown.

The Wasserstain GAN under our choice of generator and discriminator is naturally formulated as: 
$$\min_{A\in \R^{d\times k}} \max_{\bv\in \R^d}   f(A,\bv),\footnote{We will replace $\bv$ by matrix parameters $V\in\R^{d\times d}$ when necessary.}$$
for $f(A,\bv)\equiv \E_{\bx\sim \Dcal} D_{\bv}(\bx) -  \E_{\bz\sim \Ncal(0,I_{k\times k})} D_{\bv}(G_A(\bz)).$ 

We use $\ba_i$ to denote the $i$-th row vector of $A$. We sometimes omit the 2 subscript, using $\|\bx\|$ to denote the $2$-norm of vector $\bx$, and $\|X\|$ to denote the spectral norm of  matrix $X$ when there is no ambiguity.
$\Sbb^n\subset \R^{n\times n}$ represents all the symmetric matrices of dimension $n\times n$. 
We use $D f(X_0)[B]$ to denote the directional derivative of function $f$ at point $X_0$ with direction $B$:
$D f(X_0)[B] = \lim_{t\rightarrow 0}\frac{f(X_0+tB)-f(X_0)}{t}.$

\subsection{Motivation and Discussion}
To provably learn one-layer generators with nonlinear activations, the design of the discriminator must strike a delicate balance:
\begin{enumerate}
	\item (Approximation.)  The discriminator should be large enough to
	be able to distinguish the true distribution from incorrect generated ones. To be more specific, the max function $g(A)=\max_{\bx} f(A,V)$ captures some distance from our learned generator to the target generators. This distance should only have global minima that correspond to the ground truth distribution.
\item (Generalizability.) 
The discriminator should be small enough
so that it can be learned with few samples.
In fact, our method guarantees
an $\Ocal(1/\sqrt{n})$ parametric rate that matches the lower bound established in \cite{wu2019learning}. 
\item (Stability.) 
The discriminator should be carefully designed so that simple local algorithms such as gradient descent ascent can find the global optimal point. 
\end{enumerate}
Further, min-max optimization with non-convexity in either side is intractable. In fact, gradient descent ascent does not even yield last iterate convergence for bilinear forms, and it requires more carefully designed algorithms like Optimistic Gradient Descent Ascent \cite{daskalakis2018limit} and Extra-gradient methods \cite{korpelevich1976extragradient}.
In this paper we show a stronger hardness result.
We show that for simple bilinear forms with ReLU activations, it is NP-hard to even find a stationary point.
\begin{theorem}
	\label{thm:hardness}
	Consider the min-max optimization on the following  ReLU-bilinear form: 
	\begin{equation*}
	\min _{\bx}\max_{\by} \left\{ f(\bx,\by)=\sum_{i=1}^n \phi(A_i\bx+\bb_i)^\top \by \right\},
	\end{equation*}
	where $\bx\in \R^d$, $A_i\in \R^{\Ocal(d)\times d}$ and $\phi$ is ReLU activation. As long as $n\geq 4,$ the problem of checking whether $f$ has any stationary point is NP-hard in $d$. 
\end{theorem}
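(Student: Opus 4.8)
The plan is to reduce from an NP-hard decision problem — I would aim for a variant of subset-sum or partition, or an integer-feasibility problem — by encoding it into the question of whether the inner-maximization-outer-minimization objective has a stationary point. The key structural observation is that for fixed $\bx$, the function $\by \mapsto f(\bx,\by)$ is linear, so $\max_{\by} f(\bx,\by)$ is $+\infty$ unless $\sum_i \phi(A_i\bx + \bb_i) = \vzero$, in which case the inner value is $0$ for all $\by$. Consequently, a point $(\bx,\by)$ can only be a stationary point (in the sense that the gradient in $\bx$ vanishes and the gradient in $\by$ vanishes, or the relevant subgradient/directional-derivative conditions hold at a point where the ReLU kinks) if $\bx$ lies in the zero set $Z = \{\bx : \sum_{i=1}^n \phi(A_i \bx + \bb_i) = \vzero\}$. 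So the heart of the construction is to choose $n \geq 4$ matrices $A_i \in \R^{\Ocal(d)\times d}$ and vectors $\bb_i$ so that $Z$ is nonempty if and only if a given instance of the NP-hard problem is satisfiable, and moreover so that whenever $Z$ is nonempty the stationarity conditions in $\bx$ can also be met (e.g. by padding with coordinates that make the outer gradient vanish on $Z$, or by arranging that the relevant activation pattern is locally affine with zero gradient).

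The concrete steps I would carry out are: (1) Fix the reduction source; I expect an ILP-feasibility / subset-sum style problem where membership in $Z$ forces certain coordinates of $\bx$ to be $0/1$-valued and forces a linear equation (the "sum" constraint) to hold. The ReLU is the natural gadget for this: a constraint like $\phi(t) + \phi(-t) - |t|$-type identities, or pairs $\phi(x_j)$, $\phi(1-x_j)$ summing to a constant only when $x_j \in [0,1]$, can be combined so that equality to a target vector pins down integrality. With $n=4$ copies one has enough ReLU blocks to simultaneously encode the box constraints and the summation/target constraint. (2) Show the forward direction: given a satisfying assignment, exhibit an explicit $\bx^\star \in Z$ and verify it is a stationary point — here I would make sure the outer objective's directional derivatives $Df(\bx^\star,\by)[\cdot]$ vanish, using that on $Z$ the function $g(\bx)=\max_\by f(\bx,\by)$ is identically $0$ on a neighborhood within $Z$, or by choosing the gadget so the active ReLU pattern at $\bx^\star$ yields a locally linear piece with gradient $\vzero$. (3) Show the converse: any stationary point $(\bx,\by)$ must have $\bx \in Z$ (else the inner sup is $+\infty$ and no stationarity in $\by$ is possible — one must be careful to define stationarity for this unbounded, nonsmooth problem, presumably via Clarke subdifferentials or via the fact that $g$ takes value $+\infty$ off $Z$), and then that $\bx \in Z$ decodes to a satisfying assignment. (4) Check the dimension bookkeeping: each $A_i$ has $\Ocal(d)$ rows, the construction is polynomial in the instance size, and $n\geq 4$ suffices.

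The main obstacle I anticipate is pinning down the right notion of "stationary point" for a minmax objective that is simultaneously nonsmooth (ReLU kinks) and can take the value $+\infty$, and then making the NP-hardness robust to that choice — one wants the statement to hold whether one means a point where Clarke-stationarity holds in both variables, or a local-minmax point, or a point where gradient-descent-ascent would halt. A related subtlety is ensuring the forward direction actually produces a \emph{stationary} point and not merely a point in $Z$: I need the gadget to guarantee that at the decoded $\bx^\star$ the outer variable also has no first-order descent direction, which likely requires adding a few auxiliary coordinates or an extra ReLU block whose role is purely to zero out the outer gradient on $Z$ while not enlarging $Z$ spuriously. Once the gadget is calibrated so that "$Z\neq\emptyset$" $\Leftrightarrow$ "satisfiable" $\Leftrightarrow$ "a stationary point exists," the reduction is complete; everything else (polynomial size, the $\Ocal(d)$ row bound, the $n\geq 4$ threshold) should be a routine accounting.
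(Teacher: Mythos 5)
Your core structural observation is the same as the paper's: stationarity in $\by$ forces each ReLU coefficient block $\phi(A_i\bx+\bb_i)$ to vanish, so existence of a stationary point reduces to nonemptiness of the zero set $Z$, and one then encodes an NP-complete feasibility problem into $Z$. Also, your forward direction is simpler than you fear: for any $\bx^\star\in Z$ the pair $(\bx^\star,\vzero)$ is already stationary, because $f(\cdot,\vzero)\equiv 0$ makes the $\bx$-gradient vanish identically; no auxiliary coordinates, no extra ReLU block, and no extended-value/Clarke machinery are needed (the paper works with plain first-order stationarity of $f$ itself, not of $\max_{\by}f$).

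The genuine gap is that your proposal stops exactly where the proof content lies: the gadget that makes deciding $Z\neq\emptyset$ NP-hard is missing, and the constraints you sketch would not deliver it. Every constraint of the form $\phi(\mathrm{affine}(\bx))=\vzero$ is just a conjunction of affine inequalities, and your pair gadget $\phi(x_j)+\phi(1-x_j)=1$ only enforces the convex box $x_j\in[0,1]$; a zero set cut out by such constraints together with a linear target equation (as in a subset-sum or ILP relaxation) is a polyhedron, whose nonemptiness is decided by linear programming in polynomial time, so that reduction collapses. The hardness must come from a reverse-convex equality, and this is precisely the paper's key construction: it reduces from 3SAT using blocks $\phi(\bx-\one)=0$ and $\phi(-\bx-\one)=0$ to force $\bx\in[-1,1]^d$, a scalar coefficient of the form $\phi(\cdot)+\phi(-\cdot)-d$ whose vanishing forces an absolute-value sum to equal $d$ and hence pins every coordinate to $\pm 1$ (the integrality gadget), and a block $\phi(-A\bx+2\one)=0$ with $3$-sparse $\pm1$ clause rows, whose vanishing certifies every clause is satisfied by the $\pm1$ assignment. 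Until you exhibit an analogous integrality-forcing gadget and check it fits the format (coefficients of the $\by$ blocks, $A_i\in\R^{\Ocal(d)\times d}$, $n\geq 4$), the equivalence ``$Z\neq\emptyset$ iff satisfiable'' is not routine accounting --- it is the theorem.
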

We defer the proof to the Appendix where we show 3SAT is reducible to the above problem. This theorem shows that in general, adding non-linearity (non-convexity) in min-max forms makes the problem intractable. However, we are able to show gradient descent ascent finds global minima for training one-layer generators with non-linearity. This will rely on carefully designed discriminators, regularization and specific structure that we considered.  

Finally we note that understanding the process of learning one-layer generative model is important in practice as well. For instance, Progressive GAN \cite{karras2017progressive} proposes the methodology to learn one-layer at a time, and grow both the generator and discriminator progressively during the learning process. Our analysis implies further theoretical support for this kind of progressive learning procedure.

\section{Warm-up: Learning the Marginal Distributions}
\label{sec:first_moment} 
As a warm-up, we ask whether a simple linear discriminator is sufficient for the purposes of learning the marginal distributions of all coordinates of ${\cal D}$. Notice that in our setting, the $i$-th output of the generator is  $\phi(x)$ where $x\sim\Ncal(0,\|\ba_i\|^2)$, and is thus solely determined by $\|\ba_i\|_2$. 
With a linear discriminator $D_{\bv}(\bx)=\bv^\top \bx$, our minimax game becomes:
\begin{equation}
\label{eqn:first_moment}
\min_{A\in \R^{d\times k}} \max_{\bv\in \R^d}   f_1(A,\bv),
\end{equation} 
for $f_1(A,\bv)\equiv \E_{\bx\sim \Dcal} \left[\bv^\top \bx\right] - \E_{\bz\sim \Ncal(0,I_{k\times k})} \left[\bv^\top \phi(A\bz)\right]$.

Notice that when the activation $\phi$ is an odd function, such as the tanh activation, the symmetric property of the Gaussian distribution ensures that $\E_{\bx\sim \Dcal}[\bv^\top \bx]=0$, hence the linear discriminator in $f_1$ reveals no information about $A^\ast$. Therefore specifically for odd activations (or odd plus a constant activations), we instead use an adjusted rectified linear discriminator $D_{\bv}(\bx)\equiv \bv^\top R(\bx -C)$ to enforce some bias, where $C=\frac{1}{2}(\phi(x)+\phi(-x))$ for all $x$, and $R$ denotes the ReLU activation. 
Formally, we slightly modify our loss function as:
\begin{align}
\label{eqn:first_moment2}
\bar{f}_1(A,\bv)\equiv & \E_{\bx\sim \Dcal} \left[\bv^\top R(\bx-C)\right]- \E_{\bz\sim \Ncal(0,I_{k\times k})} \left[\bv^\top R(\phi(A\bz)-C)\right]. 
\end{align}
We will show that we can learn each marginal of ${\cal D}$ if the activation function $\phi$ satisfies the following.
\begin{assumption}
	\label{assump:activation}
	The activation function $\phi$ satisfies either one of the following:\\
	1. $\phi$ is an odd function plus constant, and $\phi$ is monotone increasing; \\
	2. The even component of $\phi$, i.e.~$\frac{1}{2}(\phi(x)+\phi(-x))$, is positive and monotone increasing on $x\in [0,\infty)$. 
\end{assumption}
\begin{remark}
	All common activation functions like (Leaky) ReLU, tanh or sigmoid function satisfy Assumption \ref{assump:activation}. 
\end{remark}
\begin{lemma}
	\label{lemma:first_moment1}
	Suppose $A^*\neq 0$. Consider $f_1$ with activation that satisfies Assumption \ref{assump:activation}.2 and $\bar{f}_1$ with activation that satisfies Assumption \ref{assump:activation}.1. The stationary points of such $f_1$ and $\bar{f}_1$ yield parameters $A$ satisfying $\|\ba_i\|=\|\ba^*_i\|, \forall i\in [d]$.
\end{lemma}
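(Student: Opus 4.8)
The plan is to exploit that both $f_1$ and $\bar f_1$ are \emph{affine} in the discriminator parameter $\bv$, so the first‑order stationarity condition in $\bv$ alone already forces the generated first moments to match the target coordinatewise; a one‑dimensional monotonicity argument then upgrades moment‑matching to norm‑matching, and the stationarity condition in $A$ is not needed at all.

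First I would rewrite $f_1(A,\bv)=\bv^\top\!\big(\vmu^\ast-\vmu(A)\big)$ with $\vmu^\ast=\E_{\bx\sim\Dcal}[\bx]$ and $\vmu(A)=\E_{\bz\sim\Ncal(0,I_{k\times k})}[\phi(A\bz)]$, and likewise $\bar f_1(A,\bv)=\bv^\top\!\big(\bar\vmu^\ast-\bar\vmu(A)\big)$ with $R(\,\cdot-C)$ replacing the identity. Since $\ba_i^\top\bz\sim\Ncal(0,\|\ba_i\|^2)$, the $i$‑th coordinate of $\vmu(A)$ is $h(\|\ba_i\|)$ where $h(s)\define\E_{g\sim\Ncal(0,s^2)}[\phi(g)]$, and the $i$‑th coordinate of $\bar\vmu(A)$ is $\bar h(\|\ba_i\|)$ with $\bar h(s)\define\E_{g\sim\Ncal(0,s^2)}[R(\phi(g)-C)]$. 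At any stationary point $(A,\bv)$ we have $\nabla_\bv f_1=\vmu^\ast-\vmu(A)=0$ (resp. $\nabla_\bv\bar f_1=0$), i.e. $h(\|\ba_i\|)=h(\|\ba_i^\ast\|)$ (resp. $\bar h(\|\ba_i\|)=\bar h(\|\ba_i^\ast\|)$) for every $i\in[d]$.

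The second step is to show $h$ and $\bar h$ are strictly increasing on $[0,\infty)$, which then yields $\|\ba_i\|=\|\ba_i^\ast\|$ and finishes the proof. By symmetry of the Gaussian, $h(s)=\E_{g\sim\Ncal(0,s^2)}\!\big[\tfrac12(\phi(g)+\phi(-g))\big]=\E_{u\sim\Ncal(0,1)}[\phi_{\mathrm{even}}(s|u|)]$, where $\phi_{\mathrm{even}}$ is the even component of $\phi$; under Assumption~\ref{assump:activation}.2 each integrand $\phi_{\mathrm{even}}(s|u|)$ is strictly increasing in $s$ for $u\neq0$, and $u\neq0$ almost surely, so $h$ is strictly increasing. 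For Assumption~\ref{assump:activation}.1 write $\phi=\psi+c$ with $\psi$ odd and monotone increasing; then $C=\tfrac12(\phi(x)+\phi(-x))=c$, so $R(\phi(g)-C)=R(\psi(g))=\psi(R(g))$ (using $\psi(0)=0$ and monotonicity of $\psi$), whence $\bar h(s)=\E_{u\sim\Ncal(0,1)}[\psi(s\,R(u))]$, which is strictly increasing in $s$ since on $\{u>0\}$ the term $\psi(su)$ strictly increases with $s$ while on $\{u\le 0\}$ it equals $\psi(0)=0$ independently of $s$.

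The point requiring the most care is the rectifier in the odd‑activation case: one must verify the identity $R(\phi(g)-C)=\psi(R(g))$ and—more to the point—confirm that composing with $R$, which annihilates half the Gaussian mass, does not destroy strict monotonicity of $\bar h$; it does not, because $\{u>0\}$ still has positive probability and contributes a term strictly increasing in $s$. A minor regularity remark worth including: although $R$ is nonsmooth, $\bar f_1$ is differentiable in $(A,\bv)$ because the expectation over the Gaussian $\bz$ smooths it, so "stationary point" is well defined; and the hypothesis $A^\ast\neq 0$ is needed only to rule out the degenerate all‑zero configuration in the subsequent development, which is harmless here since $h$ attains $h(0)=\phi_{\mathrm{even}}(0)$ and $\bar h$ attains $\bar h(0)=0$ only at $s=0$.
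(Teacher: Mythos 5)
Your proposal is correct and follows essentially the same route as the paper's proof: use only the stationarity condition $\nabla_{\bv}f_1=0$ (resp.\ $\nabla_{\bv}\bar f_1=0$) to force coordinatewise matching of the expected (adjusted) outputs, observe that each coordinate depends on $A$ only through $\|\ba_i\|$ via a scalar function of the form $h(s)=\E_{x\sim\Ncal(0,s^2)}\tilde{\phi}(x)$ with $\tilde{\phi}$ positive and increasing on $[0,\infty)$, and conclude norm matching from monotonicity of $h$. Your identity $R(\phi-C)=\psi\circ R$ for the odd-plus-constant case and your pointwise-monotone-integrand argument are just cosmetic variants of the paper's definition of $\tilde{\phi}$ and its Claim that $h$ is increasing, so no substantive difference.
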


To bound the capacity of the discriminator, WGAN adds an Lipschitz constraint: $\|D_\bv\|\leq 1$, or simply $\|\bv\|_2\leq 1$. To make the training process easier, we instead regularize the discriminator. 
For the regularized formulation we have:
\begin{theorem}
	\label{thm:marginal}
	In the same setting as Lemma \ref{lemma:first_moment1}, alternating gradient descent-ascent with proper learning rates on 
	\begin{align*}
	& \min_{A}\max_{\bv} \{f_1(A,\bv) - \|\bv\|^2/2\},\\
	\text{or respectively}~~~~~~ & \min_{A}\max_{\bv} \{\bar{f}_1(A,\bv) - \|\bv\|^2/2\},
	\end{align*}
	recovers $A$ such that $\|\ba_i\|=\|\ba^*_i\|,\forall i\in [d]$. 
\end{theorem}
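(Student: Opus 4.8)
The plan is to exploit the very special structure of the regularized games to reduce the alternating GDA dynamics to a family of \emph{decoupled two–dimensional} systems, one per output coordinate $i\in[d]$, and then to prove global convergence of each such system to the fixed point with $r_i=\|\ba^\ast_i\|$, $v_i=0$; that this fixed point indeed carries the correct row norm is exactly Lemma~\ref{lemma:first_moment1}.

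\textbf{Step 1: reduction to scalar dynamics.} Since $f_1$ is linear in $\bv$, write $f_1(A,\bv)=\bv^\top(\vmu^\ast-\vmu(A))$ with $\vmu^\ast=\E_{\bx\sim\Dcal}[\bx]$ and $\vmu(A)_i=\E_{z\sim\Ncal(0,\|\ba_i\|^2)}[\phi(z)]$, so that $f_1(A,\bv)-\tfrac12\|\bv\|^2=\sum_{i=1}^d\big(v_i(\vmu^\ast_i-\vmu(A)_i)-\tfrac12 v_i^2\big)$ splits into $d$ independent scalar games (the same holds for $\bar f_1$ with $\vmu$ replaced by the rectified moment). Moreover, by rotational invariance of the standard Gaussian, $\nabla_{\ba_i}f_1(A,\bv)=-v_i\,\gamma_i(\|\ba_i\|)\,\ba_i$ for a scalar function $\gamma_i$, so the \emph{direction} of each $\ba_i$ is frozen along the trajectory and only $r_i=\|\ba_i\|$ evolves. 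Writing $m(r)$ for $\vmu(A)_i$ as a function of $r_i=r$ (resp.\ the rectified moment in the $\bar f_1$ case), a Stein-type identity gives $\gamma_i(r)=m'(r)/r$, which is strictly positive because Assumption~\ref{assump:activation} makes $m$ strictly increasing — the same monotonicity underlying Lemma~\ref{lemma:first_moment1}. With $m^\ast:=m(\|\ba^\ast_i\|)$ and the discriminator updated first, the alternating updates become, for each $i$,
\begin{align*}
v^{t+1} &= (1-\beta)\,v^t+\beta\big(m^\ast-m(r^t)\big),\\
r^{t+1} &= r^t\big(1+\alpha\, v^{t+1}\gamma_i(r^t)\big).
\end{align*}

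\textbf{Step 2: convergence of the planar system.} For $\alpha,\beta$ small enough one first shows the trajectory stays in a compact set $r^t\in[r_{\min},r_{\max}]$ with $r_{\min}>0$ (here $A^\ast\neq 0$ gives $\|\ba^\ast_i\|>0$) and $v^t$ bounded; on this set $\gamma_i$ and $m'$ are bounded above and below by positive constants. The key feature is the \emph{self-correcting sign structure}: if $r^t<\|\ba^\ast_i\|$ then $m^\ast-m(r^t)>0$, so $v$ drifts positive, which (as $\gamma_i>0$) increases $r$ and pushes $m(r)$ toward $m^\ast$, and symmetrically when $r^t>\|\ba^\ast_i\|$. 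To make this quantitative I would use a Lyapunov function coupling the two coordinates, e.g.\ $V^t=c_1\big(m(r^t)-m^\ast\big)^2+c_2\big(v^t-(m^\ast-m(r^t))\big)^2$, and show $V^{t+1}\le(1-c\alpha)V^t$: the second summand contracts because the $-\tfrac12\|\bv\|^2$ term makes the inner problem strongly concave, and once $v$ approximately tracks $m^\ast-m(r)$ the $r$-update is an Euler step of the flow $\dot r=m'(r)(m^\ast-m(r))$, along which $\frac{d}{dt}\big(m(r)-m^\ast\big)^2=-2m'(r)^2\big(m(r)-m^\ast\big)^2\le 0$. Equivalently, one can argue in two stages: the sign structure drives $r^t$ monotonically into an arbitrarily small neighborhood of $\|\ba^\ast_i\|$, where the linearization $\left(\begin{smallmatrix}1-\beta & -\beta a\\ \alpha a(1-\beta) & 1-\alpha\beta a^2\end{smallmatrix}\right)$, $a=m'(\|\ba^\ast_i\|)$, has determinant $1-\beta\in(0,1)$ and, for small $\alpha$, spectral radius below $1$, giving local linear convergence. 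Either route yields $r_i^t\to\|\ba^\ast_i\|$ for all $i$, which is the claim.

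\textbf{Main obstacle.} The delicate part is global convergence of the \emph{coupled, discrete-time} system at \emph{fixed} (non-vanishing) learning rates: without a two-timescale separation the $v$- and $r$-updates stay entangled, so the Lyapunov coefficients and the learning-rate bounds must be tuned so that the contraction dominates the discretization/overshoot error, while one simultaneously confines the iterates to the region where $\gamma_i(r)$ is bounded below so progress never stalls. A secondary technical nuisance is treating the two cases of Assumption~\ref{assump:activation} uniformly: for odd-plus-constant activations one must work with $\bar f_1$ and verify that the corresponding $\gamma_i$ is still $\tilde m'(r)/r>0$, a computation that involves a boundary term from the ReLU kink whose sign is only settled by the monotonicity of the rectified moment.
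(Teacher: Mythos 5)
Your Step~1 is sound and matches the reduction the paper uses implicitly: the game decouples across rows, the gradient in $\ba_i$ is radial so only $r_i=\|\ba_i\|$ evolves, and the fixed point is identified through the strict monotonicity of $r\mapsto\E_{g\sim\Ncal(0,1)}\tilde{\phi}(rg)$, which is exactly the content of Lemma~\ref{lemma:first_moment1} and Claim~\ref{claim:increasing}. The genuine gap is Step~2, which is where the theorem actually lives and which you leave as a program rather than a proof. The contraction $V^{t+1}\le(1-c\alpha)V^t$ is asserted, not shown, and the claim that the sign structure drives $r^t$ \emph{monotonically} toward $\|\ba_i^*\|$ is false in general: $v^t$ lags $m^*-m(r^t)$ and can transiently carry the wrong sign, so $r^t$ need not be monotone; the linearization at $(r^*,0)$ only yields local convergence; and the confinement $r^t\in[r_{\min},r_{\max}]$ with $r_{\min}>0$ is assumed rather than proved, even though $r=0$ is a spurious fixed point of your dynamics and $\gamma_i(r)=m'(r)/r$ blows up as $r\to 0$ for ReLU-like even parts (where $m(r)\propto r$), so a uniform step-size condition cannot even be formulated before confinement is established. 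These are precisely the points you flag as the "main obstacle," i.e.\ the hard part of the argument is missing.

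You also miss the much shorter route that the phrase "proper learning rates" is pointing at, and which the paper takes. Since the regularized inner problem $\max_{\bv}\{f_1(A,\bv)-\|\bv\|^2/2\}$ is a $1$-strongly-concave quadratic in $\bv$, a single ascent step with step size $1$ lands exactly on the maximizer, so alternating GDA is exactly gradient descent on $h(A)=\frac{1}{2}\|\E_{\bz}\tilde{\phi}(A^*\bz)-\E_{\bz}\tilde{\phi}(A\bz)\|^2$. This separates over rows into $h_i(\ba_i)=\frac{1}{2}\bigl(\E\tilde{\phi}((\ba_i^*)^\top\bz)-\E\tilde{\phi}(\ba_i^\top\bz)\bigr)^2$, and $\nabla h_i(\ba_i)=0$ with $\ba_i\neq 0$ forces $\E\tilde{\phi}(\ba_i^\top\bz)=\E\tilde{\phi}((\ba_i^*)^\top\bz)$ because $\E[z_1\tilde{\phi}'(az_1)]\neq 0$ for $a\neq 0$; Lemma~\ref{lemma:first_moment1} then gives $\|\ba_i\|=\|\ba_i^*\|$. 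Thus the only dynamical fact needed is that gradient descent reaches a stationary point of $h$, and the coupled fixed-step two-variable analysis you outline is unnecessary. (The paper must also exclude the degenerate stationary points with $\ba_i=0$ and does so by fiat, so your confinement issue is shared; but there it is a single isolated caveat rather than an ingredient your quantitative contraction argument depends on.)
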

All the proofs of the paper can be found in the appendix. We show that all local min-max points in the sense of \citep{jin2019minmax} of the original problem are global min-max points and recover the correct norm of $\ba_i^*,\forall i$. 
Notice for the source data distribution  $\bx=(x_1,x_2,\cdots x_d)\sim \Dcal$ with activation $\phi$, the marginal distribution of each $x_i$ follows $\phi(\Ncal(0,\|\ba_i^*\|^2))$ and is determined by $\|\ba_i^*\|$. Therefore we have learned the marginal distribution for each entry $i$. It remains to learn the joint distribution. 

\section{Learning the Joint Distribution}
In the previous section, we utilize a (rectified) linear discriminator, such that each coordinate $v_i$ interacts with the $i$-th random variable. With the (rectified) linear discriminator, WGAN learns the correct $\|\ba_i\|$, for all $i$. However, since there's no interaction between different coordinates of the random vector, we do not expect to learn the joint distribution with a linear discriminator.  
 
To proceed, a natural idea is to use a quadratic discriminator $D_{V}(\bx):=\bx^\top V\bx=\langle \bx \bx^\top, V\rangle $ to enforce component interactions. Similar to the previous section, we study the regularized version: 
\begin{eqnarray}
\label{eqn:cov}
&&\min_{A\in \R^{d\times k}} \max_{V\in \R^{d\times d}} \{f_2(A,V)-\frac{1}{2}\|V\|_F^2\},
\end{eqnarray}     
where
\begin{align*}
f_2(A,V)=& \E_{\bx\sim \Dcal} D_V(\bx) - \E_{\bz\sim \Ncal(0,I_{k\times k})} D_V(\phi(A\bz))\hspace{4cm} \\
	\nonumber
	=&  \left\langle \E_{\bx\sim \Dcal} \left[\bx\bx^\top \right]- \E_{\bz\sim \Ncal(0,I_{k\times k})} \left[\phi(A\bz)\phi(A\bz)^\top\right], V\right\rangle. 
\end{align*}
By adding a regularizer on $V$ and explicitly maximizing over $V$:
\begin{eqnarray*}
	g(A)&\equiv & \max_{V}\left\{ f_2(A,V)-\frac{1}{2}\|V\|_F^2 \right\}\\
	&=&\frac{1}{2}\left\|\E_{\bx\sim \Dcal} \left[ \bx\bx^\top\right] -\E_{\bz\sim \Ncal(0,I_{k\times k})}\left[\phi(A\bz)\phi(A\bz)^\top\right] \right\|^2_F.
\end{eqnarray*}
In the next subsection, we first focus on analyzing the second-order stationary points of $g$, then we establish that  gradient descent ascent converges to second-order stationary points of $g$ .

\subsection{Global Convergence for Optimizing the Generating Parameters}
We first assume that both $A$ and $A^*$  have unit row vectors, and then extend to general case since we  already know how to learn the row norms from Section \ref{sec:first_moment}. To explicitly compute $g(A)$, we rely on the property of Hermite polynomials. Since normalized Hermite polynomials $\{h_i\}_{i=0}^\infty$ forms an orthonomal basis in the functional space, we rewrite the activation function as $\phi(\bx)=\sum_{i=0}^\infty \sigma_i h_i$, where $\sigma_i$ is the $i$-th Hermite coefficient.
We use the following claim:
\begin{claim}[\citep{ge2017learning} Claim 4.2]
	Let $\phi$ be a function from $\R$ to $\R$ such that $\phi \in L^2(\R, e^{-x^2/2})$, and let its Hermite expansion be $\phi=\sum_{i=1}^\infty \sigma_i h_i$. Then, for any unit vectors $\bu, \bv \in \R^d$, we have that
	$$\E_{\bx\sim \Ncal(0,I_{d\times d})} \left[ \phi(\bu^\top \bx) \phi(\bv^\top \bx) \right]
	= \sum_{i=0}^\infty
	\sigma_i^2(\bu^\top \bv)^i. $$
\end{claim}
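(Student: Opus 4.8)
The plan is to reduce the $d$-dimensional integral to a two-dimensional one and then use the orthogonality of Hermite polynomials under correlated Gaussians. First I would note that, since $\bu$ and $\bv$ are unit vectors, the pair $(\bu^\top\bx,\bv^\top\bx)$ with $\bx\sim\Ncal(0,I_{d\times d})$ is a centered jointly Gaussian vector with unit marginal variances and correlation $\rho:=\bu^\top\bv$. Consequently $\E_{\bx}[\phi(\bu^\top\bx)\phi(\bv^\top\bx)]$ depends on $\bu,\bv$ only through $\rho$, and equals $\E[\phi(X)\phi(Y)]$ where $(X,Y)$ is a standard bivariate Gaussian with correlation $\rho$ (the degenerate cases $\rho=\pm1$ causing no trouble).

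The substantive ingredient is the identity $\E[h_i(X)h_j(Y)]=\delta_{ij}\rho^i$ for such $(X,Y)$. I would establish it by writing $Y=\rho X+\sqrt{1-\rho^2}\,Z$ with $Z\sim\Ncal(0,1)$ independent of $X$, and invoking the fact that the Ornstein--Uhlenbeck (Mehler) semigroup acts diagonally on the Hermite basis, i.e. $\E[h_j(Y)\mid X]=\rho^{j}h_j(X)$; this is a standard Hermite identity, provable from Mehler's formula for the bivariate generating function or by a short induction using the recurrence $h_{j+1}(x)=\tfrac{1}{\sqrt{j+1}}(x\,h_j(x)-\sqrt{j}\,h_{j-1}(x))$ together with $h_j'(x)=\sqrt{j}\,h_{j-1}(x)$. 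Given this, conditioning on $X$ and using orthonormality of $\{h_i\}$ in $L^2(\R,e^{-x^2/2})$ yields $\E[h_i(X)h_j(Y)]=\rho^{j}\,\E[h_i(X)h_j(X)]=\rho^{j}\delta_{ij}$.

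Finally I would expand $\phi(X)=\sum_{i\ge0}\sigma_i h_i(X)$ and $\phi(Y)=\sum_{j\ge0}\sigma_j h_j(Y)$, which converge in $L^2(\R,e^{-x^2/2})$ because $\phi\in L^2(\R,e^{-x^2/2})$ gives $\sum_i\sigma_i^2<\infty$, and then interchange summation and expectation to obtain
$$\E[\phi(X)\phi(Y)]=\sum_{i,j\ge0}\sigma_i\sigma_j\,\E[h_i(X)h_j(Y)]=\sum_{i\ge0}\sigma_i^2\rho^i.$$
The interchange is justified by applying Cauchy--Schwarz to partial sums (each $\phi$-factor has bounded $L^2$ norm) together with dominated convergence, using $\sum_i\sigma_i^2<\infty$ and $|\rho|\le1$. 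The only genuine obstacle is the diagonal action $\E[h_j(Y)\mid X]=\rho^{j}h_j(X)$; everything else — the reduction to two dimensions and the $L^2$ interchange — is routine, so that is where I would concentrate the argument (or simply cite the classical Hermite/Mehler identities).
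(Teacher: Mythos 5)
Your proof is correct: the reduction to a standard bivariate Gaussian pair $(X,Y)$ with correlation $\rho=\bu^\top\bv$, the key identity $\E[h_i(X)h_j(Y)]=\delta_{ij}\rho^i$ obtained from the Ornstein--Uhlenbeck/Mehler action $\E[h_j(Y)\mid X]=\rho^j h_j(X)$, and the $L^2$ justification of the term-by-term interchange are all sound, and this is essentially the standard argument. Note that the paper does not prove this claim itself --- it imports it verbatim from Ge et al.\ (2017, Claim 4.2) --- but the same bivariate Hermite orthogonality fact you prove is exactly what the paper later invokes (without proof) in its appendix lemma extending these inner-product computations to non-unit generating vectors, so your write-up supplies the missing ingredient rather than diverging from the paper's route.
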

Therefore we could compute the value of $f_2$ explicitly using the Hermite polynomial expansion:
\begin{equation*}
f_2(A,V) = \left\langle \sum_{i=0}^\infty \sigma_i^2\left((A^*(A^*)^\top)^{\circ i} - (AA^\top)^{\circ i}\right), V\right\rangle.
\end{equation*}
Here $X^{\circ i}$ is the Hadamard power operation where $(X^{\circ i})_{jk} = (X_{jk})^i$. Therefore we have:
\begin{eqnarray*}
	g(A) &=& \frac{1}{2}\left\|\sum_{i=0}^\infty \sigma_i^2\left((A^*(A^*)^\top)^{\circ i} - (AA^\top)^{\circ i}\right)\right\|^2_F
\end{eqnarray*}	
	We reparametrize with $Z=AA^\top$ and define $\tilde g(Z) = g(A)$ with
 individual component functions $$\tilde{g}_{jk}(z) \equiv \frac{1}{2}(\sum_{i=0}^\infty \sigma_i^2((z_{jk}^*)^i-z^i))^2.$$ Accordingly $z_{jk}^* = \langle \ba_j^*,\ba_k^*\rangle$ is the $(j,k)$-th component of the ground truth covariance matrix $A^*(A^*)^\top$.
\begin{assumption}
	\label{assump:activation2}
	The activation function $\phi$ is an odd function plus constant. In other words, its Hermite expansion $\phi=\sum_{i=0}^{\infty}\sigma_ih_i$ satisfies $\sigma_i=0$ for even $i\geq 2$. Additionally we assume $\sigma_1\neq 0$.  
\end{assumption}
\begin{remark}
	Common activations like tanh and sigmoid satisfy Assumption \ref{assump:activation2}. 
\end{remark}

\begin{lemma}
	\label{lemma:second_moment1}
	For activations including leaky ReLU and functions satisfying Assumption \ref{assump:activation2}, $\tilde{g}(Z)$ has a unique stationary point where $Z=A^*(A^*)^\top$. 
\end{lemma}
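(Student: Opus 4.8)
The plan is to analyze the component functions $\tilde g_{jk}$ separately, since $\tilde g(Z) = \sum_{j,k} \tilde g_{jk}(z_{jk})$ decouples into a sum over the entries of $Z = AA^\top$, and then argue that the only stationary point of the decoupled objective that is simultaneously realizable as a Gram matrix $AA^\top$ with unit rows is $Z = A^*(A^*)^\top$. Fix an entry $(j,k)$ and write $\psi(z) = \sum_{i=0}^\infty \sigma_i^2 z^i$, so that $\tilde g_{jk}(z) = \tfrac12(\psi(z_{jk}^*) - \psi(z))^2$ and $\tilde g_{jk}'(z) = -(\psi(z_{jk}^*) - \psi(z))\psi'(z)$. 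The key structural fact is that both $A$ and $A^*$ have unit rows, so every argument $z_{jk} = \langle \ba_j, \ba_k\rangle$ and $z_{jk}^* = \langle \ba_j^*, \ba_k^*\rangle$ lies in $[-1,1]$, with the diagonal entries exactly equal to $1$. Hence it suffices to understand the stationary points of $\tilde g_{jk}$ on the interval $[-1,1]$.

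The central claim I would establish is that on $[-1,1]$ the only stationary point of $\tilde g_{jk}$ is $z = z_{jk}^*$. This needs the two factors $(\psi(z_{jk}^*)-\psi(z))$ and $\psi'(z)$ to be nonzero away from $z_{jk}^*$. For this I would use Assumption~\ref{assump:activation2} (odd-plus-constant, so $\sigma_i = 0$ for even $i \ge 2$, and $\sigma_1 \ne 0$) together with the leaky ReLU case handled separately: in both cases $\psi(z) = \sigma_0^2 + \sigma_1^2 z + \sum_{i \ge 3,\, i\ \mathrm{odd}} \sigma_i^2 z^i$, so $\psi'(z) = \sigma_1^2 + \sum_{i\ge 3}\sigma_i^2 i z^{i-1} \ge \sigma_1^2 - \sum_{i\ge 3}\sigma_i^2 i$, and more to the point $\psi'(z) > 0$ on $(-1,1)$ because $\psi$ restricted to that interval is strictly increasing — the dominant linear term $\sigma_1^2 z$ plus the fact that the odd higher-order terms, being odd powers, do not create interior critical points when combined appropriately (one checks $\psi' > 0$ on $(-1,1)$ directly from $\sum_i \sigma_i^2 < \infty$ and $\sigma_1 \ne 0$; for leaky ReLU one uses the explicit Hermite coefficients). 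Strict monotonicity of $\psi$ on $[-1,1]$ immediately gives that $\psi(z) = \psi(z_{jk}^*)$ forces $z = z_{jk}^*$, and that $\psi'(z) \ne 0$ on the open interval, so the product $\tilde g_{jk}'(z)$ vanishes on $[-1,1]$ only at $z_{jk}^*$ (the endpoints $z = \pm 1$ require a small separate check, but the diagonal is pinned to $1 = z_{jk}^*$ anyway and the off-diagonal stationarity is the content).

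Having shown each $\tilde g_{jk}$ has $z_{jk}^*$ as its unique stationary point in the feasible range, I would conclude: at any stationary point $Z$ of $\tilde g$ arising from an admissible $A$, each coordinate satisfies $\partial \tilde g / \partial z_{jk} = \tilde g_{jk}'(z_{jk}) = 0$, hence $z_{jk} = z_{jk}^*$ for all $j,k$, i.e. $Z = A^*(A^*)^\top$. Conversely $Z = A^*(A^*)^\top$ is clearly a (global, zero-value) stationary point. The main obstacle is the verification that $\psi$ is strictly monotone on $[-1,1]$ under Assumption~\ref{assump:activation2} and, separately, for leaky ReLU: for the odd-plus-constant case one must rule out that the higher odd Hermite coefficients $\sigma_i^2$ conspire to flatten or reverse $\psi'$ somewhere in $(-1,1)$, which follows from $\psi'(z) = \sum_{i\ge1} i\sigma_i^2 z^{i-1}$ and the observation that the worst case is $z\to -1$ where one still needs $\sigma_1^2 > \sum_{i\ge 3,\,\mathrm{odd}} i\sigma_i^2 |z|^{i-1}$; for tanh and sigmoid this is a finite explicit estimate on their known Hermite expansions, and for leaky ReLU one uses the closed form of its Hermite coefficients. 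I would also remark that this lemma only asserts uniqueness of the \emph{stationary} point, deferring to the subsequent second-order analysis the claim that gradient descent-ascent actually converges there.
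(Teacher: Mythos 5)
Your overall strategy matches the paper's: decompose $\tilde g(Z)=\sum_{jk}\tilde g_{jk}(z_{jk})$, write $\tilde g_{jk}'(z)=-(\psi(z_{jk}^*)-\psi(z))\,\psi'(z)$ with $\psi(z)=\sum_i\sigma_i^2 z^i$, and show both factors vanish only at $z=z_{jk}^*$. But the step you leave open is exactly the crux, and the route you propose for closing it does not work. You claim that positivity of $\psi'$ on $(-1,1)$ requires ruling out that the higher odd coefficients ``conspire to flatten or reverse $\psi'$,'' and that in the worst case one needs $\sigma_1^2>\sum_{i\ge 3,\ \mathrm{odd}} i\sigma_i^2|z|^{i-1}$, to be checked by explicit estimates on the Hermite expansions of tanh and sigmoid. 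No such domination condition is available from Assumption \ref{assump:activation2} (which only asserts $\sigma_1\neq 0$ and vanishing even coefficients), it is not verified in your sketch, and if it were genuinely needed the lemma would not hold at the stated generality. In fact it is unnecessary: for odd $i$ the exponent $i-1$ is even, so every term $i\sigma_i^2 z^{i-1}$ is nonnegative and $\psi'(z)=\sigma_1^2+\sum_{i\ge 3,\ \mathrm{odd}} i\sigma_i^2 z^{i-1}\ge\sigma_1^2>0$ for \emph{all} real $z$, with no restriction to $[-1,1]$. Likewise the first factor satisfies $\frac{\psi(z)-\psi(z^*)}{z-z^*}=\sigma_1^2+\sum_{i\ge 3,\ \mathrm{odd}}\sigma_i^2\frac{z^i-(z^*)^i}{z-z^*}\ge\sigma_1^2>0$, because $z^i-(z^*)^i$ has the same sign as $z-z^*$ when $i$ is odd. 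This is how the paper factorizes $\tilde g_{jk}'(z)=(z-z^*)\cdot(\mathrm{I})\cdot(\mathrm{II})$ with both $\mathrm{I}$ and $\mathrm{II}$ strictly positive, yielding the global (not just on $[-1,1]$) uniqueness of the stationary point without any quantitative control of the Hermite coefficients.

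The leaky ReLU case also needs a different treatment than ``use the closed form of its Hermite coefficients'': leaky ReLU is not odd-plus-constant (it has nonzero even coefficients), so the odd-power argument does not apply to it. The paper instead computes the kernel directly, $\E\left[\phi(\ba_i^\top\bz)\phi(\ba_j^\top\bz)\right]=(1-\alpha)^2h(\ba_i^\top\ba_j)+\alpha\,\ba_i^\top\ba_j$ with $h(x)=\frac{1}{\pi}\left(\sqrt{1-x^2}+(\pi-\cos^{-1}(x))x\right)$, so that $\tilde g_{jk}'(z)$ carries the factor $(1-\alpha)^2h'(z)+\alpha\ge\alpha>0$; this is precisely what fails for plain ReLU, where $h'(-1)=0$ (the endpoint issue you flag but defer). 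As written, your proposal has a genuine gap at the monotonicity step and an unsubstantiated leaky ReLU case; both are fixable by the sign/even-power observations above rather than by coefficient estimates.
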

Notice $\tilde{g}(Z)=\sum_{jk}\tilde{g}_{jk}(z_{jk})$ is  separable across $z_{jk}$, where each $\tilde{g}_{jk}$ is a polynomial scalar function. Lemma \ref{lemma:second_moment1} comes from the fact that the only zero point for $\tilde{g}'_{jk}$ is $z_{jk}=z^*_{jk}$, for odd activation $\phi$ and leaky ReLU. Then we migrate this good property to the original problem we want to solve:
\begin{problem}
	We optimize over function $g$ when $\|\ba^*_i\|=1,\forall i$:
	\begin{align*}
	\min_{A}& \left\{g(A)\equiv\frac{1}{2}\left\|\sum_{i=0}^\infty \sigma_i^2\left((A^*(A^*)^\top)^{\circ i} - (AA^\top)^{\circ i}\right)\right\|^2_F\right\}\\
		\text{s.t. }& \ba_i^\top \ba_i=1,\forall i.
	\end{align*}	
	\label{problem:unit_covariance}
\end{problem}
Existing work \cite{journee2008low} connects $\tilde{g}(Z)$ to the optimization over factorized version for $g(A)$ ($g(A)\equiv \tilde{g}(AA^\top)$). Specifically, when $k=d$, all second-order stationary points for $g(A)$ are first-order stationary points for $\tilde{g}(Z)$. Though $\tilde g$ is not convex, we are able to show that its first-order stationary points are global optima when the generator is sufficiently expressive, i.e., $k\geq k_0$. In reality we won't know the latent dimension $k_0$, therefore we just choose $k=d$ for simplicity. 
We  get the following conclusion:
\begin{theorem}
	\label{thm:second_moment}
	For activations including leaky ReLU and functions satisfying Assumption \ref{assump:activation2}, when $k=d$, all second-order KKT points for problem \ref{problem:unit_covariance} are  global minima. Therefore alternating projected gradient descent-ascent on Eqn. (\ref{eqn:cov}) converges to $A$ such that $AA^\top = A^*(A^*)^\top$.
\end{theorem}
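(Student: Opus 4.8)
The plan is to prove the two assertions separately: (A) every second-order KKT point of Problem~\ref{problem:unit_covariance} satisfies $AA^\top = A^*(A^*)^\top$, and (B) alternating projected gradient descent-ascent on~(\ref{eqn:cov}) converges to such a point. Part (A) carries essentially all of the difficulty; Part (B) will follow once we observe that the inner maximization is benign.

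\emph{Part (A).} I would reparametrize by $Z = AA^\top$ and let $M(Z)\in\Sbb^d$ be the symmetric matrix with entries $M_{jk} = \tilde g'_{jk}(z_{jk})$, so that $\nabla\tilde g(Z) = M(Z)$ and $\nabla_A g(A) = 2\,M(Z)A$. The Lagrangian of Problem~\ref{problem:unit_covariance} is $g(A) - \sum_i \lambda_i(\ba_i^\top\ba_i - 1)$ with diagonal multiplier matrix $\Lambda = \mathrm{diag}(\lambda_1,\dots,\lambda_d)$; first-order stationarity reads $(M(Z) - \Lambda)A = 0$, and the second-order necessary condition requires the second directional derivative to satisfy $D^2 g(A)[B,B] \ge 2\sum_i \lambda_i\|\bb_i\|^2$ for every $B$ with $\ba_i^\top\bb_i = 0$ for all $i$. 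I would then split on $\mathrm{rank}(A)$. If $A$ has full rank $d$ (possible since $k=d$), then $(M(Z)-\Lambda)A = 0$ forces $M(Z) = \Lambda$. If $A$ is rank-deficient, pick $\bw \in \ker A$ with $\bw \ne 0$ and test the second-order condition along the rank-one directions $B = \bv\bw^\top$, $\bv \in \R^d$ arbitrary: these lie in the tangent space since $\ba_i^\top\bb_i = v_i(A\bw)_i = 0$, and because $A\bw = 0$ the first-order change of $Z$ vanishes, $(A+tB)(A+tB)^\top = Z + t^2\|\bw\|^2\,\bv\bv^\top$; hence $D^2 g(A)[B,B] = 2\|\bw\|^2\,\bv^\top M(Z)\bv$ and the inequality collapses to $\bv^\top(M(Z)-\Lambda)\bv \ge 0$ for all $\bv$, i.e.\ $M(Z) - \Lambda \succeq 0$. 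In either case $S := M(Z) - \Lambda$ is positive semidefinite with $SA = 0$, hence $SZ = 0$; this says precisely that $Z$ is a first-order KKT point of $\min\{\tilde g(Z) : Z \succeq 0,\ z_{ii}=1\}$, recovering (and slightly sharpening, since $S\succeq 0$ is obtained directly) the reduction attributed to~\cite{journee2008low}.

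It then remains to conclude $Z = Z^* := A^*(A^*)^\top$. By Lemma~\ref{lemma:second_moment1} and its proof, each scalar function $\tilde g_{jk}$ has $z^*_{jk}$ as its unique stationary point and, being a square that vanishes there, as its global minimum, so $\sign\tilde g'_{jk}(z) = \sign(z - z^*_{jk})$; since $|z_{jk}| = |\langle\ba_j,\ba_k\rangle| \le 1$, this applies at our $Z$, giving $\langle \nabla\tilde g(Z),\, Z - Z^*\rangle = \sum_{jk}\tilde g'_{jk}(z_{jk})(z_{jk}-z^*_{jk}) \ge 0$, with equality iff $Z = Z^*$. On the other hand, writing $\nabla\tilde g(Z) = \Lambda + S$: the equal unit diagonals of $Z$ and $Z^*$ give $\langle\Lambda, Z - Z^*\rangle = 0$, while $SZ = 0$ together with $S,Z^* \succeq 0$ give $\langle S, Z - Z^*\rangle = -\Tr(SZ^*) \le 0$. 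Chaining these yields $0 \le \langle\nabla\tilde g(Z), Z-Z^*\rangle \le 0$, so every term in the sum vanishes and $Z = Z^*$, which is the claim.

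\emph{Part (B).} In~(\ref{eqn:cov}) the inner problem over $V$ is a strongly concave quadratic (thanks to the $-\tfrac12\|V\|_F^2$ term) whose maximizer is available in closed form with optimal value exactly $g(A)$, which is smooth — indeed real-analytic — in $A$ by the Hermite expansion. Thus gradient ascent on $V$ converges geometrically, and on the appropriate (slower) timescale the $A$-updates track projected/Riemannian gradient descent for $\min g(A)$ over the product of $d$ unit spheres $\{\ba_i^\top\ba_i = 1\}$; invoking standard guarantees that (perturbed, or randomly initialized) gradient descent converges to a second-order stationary point, and combining with Part (A), the iterates converge to $A$ with $AA^\top = A^*(A^*)^\top$. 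The general, non-unit-row case then follows by rescaling rows with the norms recovered in Section~\ref{sec:first_moment}. The step I expect to be the main obstacle is Part (A): making the rank-deficient second-order argument airtight (the admissible tangent directions and the exact cancellation of the first-order term in $Z$), and verifying that $\sign\tilde g'_{jk}(z) = \sign(z - z^*_{jk})$ genuinely holds on all of $[-1,1]$ — in particular for leaky ReLU, which is not odd-plus-constant and whose Hermite coefficients must be checked so that $z \mapsto \sum_i \sigma_i^2 z^i$ (or at least $\tilde g'_{jk}$) behaves monotonically there.
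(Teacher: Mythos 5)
Your proposal is correct and follows essentially the same route as the paper: reduce second-order KKT points of Problem~\ref{problem:unit_covariance} to first-order KKT points of the reparametrized PSD-constrained problem in $Z=AA^\top$ (your rank-deficient test direction $B=\bv\bw^\top$ with $\bw\in\ker A$ is exactly the argument the paper uses for the $\epsilon$-approximate version in Lemma~\ref{lemma:optimality}, there cited to \cite{journee2008low}), and then exploit $S\succeq 0$, $SZ=0$, the matched unit diagonals, and the sign structure $\tilde g'_{jk}(z)=(z-z^*_{jk})\cdot(\text{positive})$ from Lemma~\ref{lemma:second_moment1} to force $\langle\nabla\tilde g(Z),Z-Z^*\rangle=0$ and hence $Z=Z^*$. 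The remaining points you flag (the leaky ReLU sign condition and convergence of the descent-ascent dynamics to an SOSP) are handled in the paper exactly as you anticipate, via the leaky ReLU computation in Lemma~\ref{lemma:second_moment1} and the appeal to standard SOSP-convergence results in Section~\ref{sec:finite-sample}.
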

The extension for non-unit vectors is straightforward, and we defer the analysis to the Appendix. 

This main theorem demonstrates the success of gradient descent ascent on learning the ground truth generator. This result is achieved by analyzing two factors. One is the geometric property of our loss function, i.e., all second-order KKT points are global minima. Second, all global minima satisfy $AA^\top = A^*(A^*)^\top$, and for the problem we considered, i.e., one-layer generators, retrieving parameter $AA^\top$ is sufficient in learning the whole generating distribution. 

\section{Finite Sample Analysis}
\label{sec:finite-sample}
\begin{algorithm*}[bht]
\caption{Online stochastic gradient descent ascent on WGAN }
	\label{algorithm:main}
\begin{algorithmic}[1]
\STATE {\bfseries Input:} $n$ training samples: $\bx_1,\bx_2,\cdots \bx_n,$ where each $\bx_i\sim \phi(A^*\bz),\bz\sim \Ncal(0,I_{k\times k})$, 
learning rate for generating parameters $\eta$, number of iterations $T$.
\STATE Random initialize generating matrix $A^{(0)}$.
\FOR{$t=1,2,\cdots,T$}
\STATE Generate $m$ latent variables $\bz_1^{(t)},\bz_2^{(t)},\cdots,\bz_m^{(t)}\sim \mathcal{N}(0,I_{k\times k})$ for the generator. The empirical function becomes $$\tilde{f}_{m,n}^{(t)}(A,V)=\left\langle \frac{1}{m}\sum_{i=1}^m\phi(A\bz_i^{(t)})\phi(A\bz_i^{(t)})^\top- \frac{1}{n}\sum_{i=1}^n \bx_i\bx_i^\top, V\right\rangle-\frac{1}{2}\|V\|^2$$
\STATE Gradient ascent on $V$ with optimal step-size $\eta_V=1$:
$$ V^{(t)} \leftarrow V^{(t)} - \eta_V \nabla_V \tilde f_{m,n}^{(t)}(A^{(t-1)},V^{(t-1)}) .$$
\STATE Sample noise $\be$ uniformly from unit sphere
\STATE  Projected Gradient Descent on $A$, with constraints $C=\{A|(AA^\top)_{ii}=(A^\ast A^{\ast \top})_{ii} \}$ : $$
A^{(t)}\leftarrow \text{Proj}_C(A^{(t-1)}-\eta (\nabla_A \tilde{f}_{m,n}^{(t)}(A^{(t-1)},V^{(t)})+\be)).$$
\ENDFOR 
\STATE {\bfseries Output:} $A^{(T)}(A^{(T)})^\top$
\end{algorithmic} 
\end{algorithm*}

In the previous section, we demonstrate the success of using gradient descent ascent on the population risk. This leaves us the question on how many samples do we need to achieve small error. In this section, we analyze Algorithm \ref{algorithm:main}, i.e., gradient descent ascent on the following empirical loss:
\begin{align*}
\tilde f_{m,n}^{(t)}(A,V) = & \left\langle \frac{1}{m}\sum_{i=1}^m\phi(A\bz_i^{(t)})\phi(A\bz_i^{(t)})^\top- \frac{1}{n}\sum_{i=1}^n \bx_i\bx_i^\top, V\right\rangle -\frac{1}{2}\|V\|^2.
\end{align*}
Notice in each iteration, gradient ascent with step-size 1 finds the optimal solution for $V$. By Danskin's theorem \citep{danskin2012theory}, our min-max optimization is essentially gradient descent over $\tilde{g}_{m,n}^{(t)}(A)\equiv\max_{V}\tilde f_{m,n}^{(t)}(A,V)= \frac{1}{2} \|\frac{1}{m}\sum_{i=1}^m\phi(A\bz_i^{(t)})\phi(A\bz_i^{(t)})^\top- \frac{1}{n}\sum_{i=1}^n \bx_i\bx_i^\top\|_F^2$ with a batch of samples $\{\bz_i^{(t)}\}$, i.e., stochastic gradient descent for $f_n(A)\equiv \E_{\bz_i\sim \Ncal(0,I_{k\times k}),\forall i\in[m] } [\tilde g_{m,n}(A)]$. 

Therefore to bound the difference between $f_n(A)$ and the population risk $g(A)$, we analyze the sample complexity required on the observation side ($\bx_i\sim \Dcal, i\in [n]$) and the mini-batch size required on the learning part ($\phi(A\bz_j), \bz_j\sim \Ncal(0,I_{k\times k}), j\in [m]$). We will show that with large enough $n,m$, the algorithm specified in Algorithm \ref{algorithm:main} that optimizes over the empirical risk will yield the ground truth covariance matrix with high probability. 

Our proof sketch is roughly as follows:

1. With high probability, projected stochastic gradient descent finds a second order stationary point $\hat{A}$  of $f_n(\cdot)$ as shown in Theorem 31 of \citep{ge2015escaping}. 

2. For sufficiently large $m$, our empirical objective, though a biased estimator of the population risk $g(\cdot)$, achieves good $\epsilon$-approximation to the population risk on both the gradient and Hessian  (Lemmas \ref{lemma:concentration_m}\&\ref{lemma:concentration_n}). 
Therefore $\hat{A}$ is also an $\Ocal(\epsilon)$-approximate second order stationary point (SOSP) for the population risk $g(A)$.

3. We show that any $\epsilon$-SOSP $\hat{A}$ for $g(A)$ yields an $\Ocal(\epsilon)$-first order stationary point (FOSP) $\hat{Z}\equiv \hat{A}\hat{A}^\top$ for the semi-definite programming on $\tilde{g}(Z)$ (Lemma \ref{lemma:optimality}).  

4. We show that any $\Ocal(\epsilon)$-FOSP of function $\tilde{g}(Z)$ induces at most $\Ocal(\epsilon)$ absolute error compared to the ground truth covariance matrix $Z^*=A^*(A^*)^\top$ (Lemma \ref{lemma:recovery_error}). 

\subsection{Observation Sample Complexity} 
For simplicity, we assume the activation and its gradient satisfy Lipschitz continuous, and let the Lipschitz constants be 1 w.l.o.g.:
\begin{assumption}
	\label{assump:obs}
	Assume the activation is $1$-Lipschitz and $1$-smooth. 
\end{assumption}
To estimate observation sample complexity, we will bound the gradient and Hessian for the population risk and empirical risk on the observation samples:
\begin{align*}
	g(A)
	\equiv &
	\frac{1}{2}\left\|\E_{\bx\sim \Dcal} \left[ \bx\bx^\top\right]-\E_{\bz\sim \Ncal(0,I_{k\times k})}\left[\phi(A\bz)\phi(A\bz)^\top\right] \right\|^2_F, \\
	g_n(A)
	\equiv &
	\frac{1}{2}\left\|\frac{1}{n}\sum_{i=1}^n \bx_i\bx_i^\top -\E_{\bz\sim \Ncal(0,I_{k\times k})}\left[\phi(A\bz)\phi(A\bz)^\top\right] \right\|^2_F.
\end{align*}
We calculate the gradient estimation error due to finite samples. 
\begin{claim}
	\begin{eqnarray*}
	\nabla g(A) - \nabla g_n(A) = 2 \E_{\bz}\left[\Diag(\phi'(A\bz))(X-X_n)\phi(A\bz)\bz^\top \right],
	\end{eqnarray*}
	where $X=\E_{\bx\sim \Dcal}[\bx\bx^\top]$, and $X_n = \frac{1}{n}\sum_{i=1}^n\bx_i\bx_i^\top$. The directional derivative with arbitrary direction $B$ is:
	\begin{align*}
		&D \nabla g(A)[B] -D \nabla g_n(A)[B]\\
		=& 2\E_{\bz}\left[ \Diag(\phi'(A\bz))(X_n-X)\phi'(A\bz)\circ (B\bz)\bz^\top \right] + 2\E_{\bz}\left[ \Diag(\phi''(A\bz)\circ ( B\bz))(X_n-X)\phi(A\bz)\bz^\top \right]
	\end{align*}
\end{claim}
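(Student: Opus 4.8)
This claim is a direct differentiation-under-the-expectation computation; the plan is to carry out the chain rule carefully and to check that $\nabla_A$ may be interchanged with $\E_{\bz}$. Write $\Phi(A) \define \E_{\bz}\!\left[\phi(A\bz)\phi(A\bz)^\top\right]$, so that $g(A) = \tfrac12\|X-\Phi(A)\|_F^2$ and $g_n(A) = \tfrac12\|X_n - \Phi(A)\|_F^2$, with $X$ and $X_n$ symmetric and constant in $A$; everything then reduces to the single object $D\Phi(A)[B]$. Since $\phi$ acts coordinatewise and $D(A\bz)[B] = B\bz$, the chain rule gives $D\!\left[\phi(A\bz)\right][B] = \phi'(A\bz)\circ(B\bz)$, and the product rule then gives
$$D\Phi(A)[B] = \E_{\bz}\!\left[(\phi'(A\bz)\circ(B\bz))\,\phi(A\bz)^\top + \phi(A\bz)\,(\phi'(A\bz)\circ(B\bz))^\top\right].$$

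Next I would assemble the gradient. For any constant symmetric $C$ one has $D\!\left[\tfrac12\|C-\Phi(A)\|_F^2\right][B] = -\langle C - \Phi(A),\, D\Phi(A)[B]\rangle$, so subtracting the instances $C=X$ and $C=X_n$ cancels the $\Phi(A)$ term and leaves $Dg(A)[B] - Dg_n(A)[B]$ equal to the inner product of $(X_n - X)$ with $D\Phi(A)[B]$, the orientation of the second-moment difference being fixed precisely by this cancellation. Since $X_n - X$ is symmetric, the two summands of $D\Phi(A)[B]$ contribute equally, producing the factor $2$; then the elementary identities $\langle S,\bu\bv^\top\rangle = \bu^\top S\bv$, the Hadamard identity $(\bd\circ\be)^\top\bw = (\bd\circ\bw)^\top\be$, and $\langle \bu,\,B\bz\rangle = \langle \bu\bz^\top,\,B\rangle$ let me peel the direction $B$ off and read the result back as a matrix, recovering the first displayed identity. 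For the directional derivative of the gradient I would simply differentiate that matrix-valued integrand factor by factor in direction $B$, using $D\!\left[\Diag(\phi'(A\bz))\right][B] = \Diag(\phi''(A\bz)\circ(B\bz))$ and $D\!\left[\phi(A\bz)\right][B] = \phi'(A\bz)\circ(B\bz)$, with the matrix difference and the $\bz^\top$ factor inert; collecting the two resulting terms gives the second displayed identity.

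Finally I would justify interchanging $\nabla_A$ with $\E_{\bz}$, which is where Assumption~\ref{assump:obs} enters. Since $\phi$ is $1$-Lipschitz and $1$-smooth, the quantities $\phi(A\bz)$, $\phi'(A\bz)$ and $\phi''(A\bz)$ are each dominated by an affine function of $\|A\bz\|$ and hence, on a neighborhood of any fixed $A$, by a fixed polynomial in $\|\bz\|$; combined with the finiteness of all Gaussian moments this dominates every integrand above, and its $A$-derivative, locally uniformly, so the Leibniz rule applies and the manipulations are rigorous.

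I do not expect a real obstacle here: the only delicate part is the index bookkeeping in passing between the Hadamard/outer-product form of the directional derivatives and the $\Diag(\cdot)$ form of the matrix gradient — keeping track of which vector each Hadamard product is attached to — together with invoking symmetry of $X_n - X$ exactly once to generate the factor of $2$.
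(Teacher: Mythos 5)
Your proposal is correct and takes essentially the same route as the paper's own proof: differentiate $S(A)=\phi(A\bz)\phi(A\bz)^\top$ under the expectation via the chain/product rule, use symmetry of $X_n-X$ to merge the two terms into the factor $2$, and differentiate the resulting integrand once more in direction $B$ for the second identity (your domination argument justifying the exchange of $\nabla_A$ with $\E_{\bz}$ is a small extra the paper omits). One caveat: your calculation, like the paper's own, yields the orientation $X_n-X$ in the gradient formula, so the $(X-X_n)$ in the claim's first display is a sign typo in the statement (harmless downstream, since only norms are used), and your assertion that you "recover the first displayed identity" should really be that you recover it up to that sign, consistently with the $(X_n-X)$ appearing in the directional-derivative display.
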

\begin{lemma}
	\label{lemma:error_Xm}
	Suppose the activation satisfies Assumption \ref{assump:obs}. We get
	$$\Pr[\|X-X_n\|\leq \epsilon \|X\| ]\geq 1-\delta ,$$ 
	for $n\geq \tilde{\Theta}(d/\epsilon^2 \log^2(1/\delta))$\footnote{We will use $\tilde{\Theta}$ throughout the paper to hide log factors of $d$ for simplicity.}. 
\end{lemma}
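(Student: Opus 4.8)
The plan is to read $\|X-X_n\|$ as a matrix concentration statement and prove it by truncation followed by the matrix Bernstein inequality. The key structural fact is that each sample $\bx_i=\phi(A^*\bz_i)$ is a Lipschitz image of a standard Gaussian: because $\phi$ is $1$-Lipschitz and applied coordinatewise, $\bz\mapsto\phi(A^*\bz)$ is $\|A^*\|$-Lipschitz, and in the normalized setting $\|A^*\|\le\|A^*\|_F\le\sqrt d$. Hence $\|\bx_i\|$ is a $\|A^*\|$-Lipschitz function of a Gaussian and, by the Borell--TIS inequality, concentrates around $\E\|\bx_i\|\le(\Tr X)^{1/2}=O(\sqrt d)$ (each $X_{ii}=\E_{g\sim\Ncal(0,1)}[\phi(g)^2]=O(1)$ since $\phi$ is $1$-Lipschitz, using Assumption~\ref{assump:obs}). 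A union bound over $i\in[n]$ then gives $\|\bx_i\|\le R$ for all $i$ with probability $\ge 1-\delta/2$, where $R^2=\tilde\Theta\!\big(d\log(1/\delta)\big)$ once the threshold is chosen polylogarithmically large.

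On this event $X_n$ is unchanged if we replace each $\bx_i$ by its truncation $\tilde\bx_i=\bx_i\one[\|\bx_i\|\le R]$, so it suffices to control (i) the exponentially small deterministic bias $\|X-\E[\tilde\bx\tilde\bx^\top]\|$ --- bounded via Cauchy--Schwarz by $(\E\|\bx\|^4)^{1/2}\Pr[\|\bx\|>R]^{1/2}$, which our choice of $R$ makes $\le\epsilon\|X\|/4$ --- and (ii) the fluctuation $\|\frac1n\sum_i(\tilde\bx_i\tilde\bx_i^\top-\E[\tilde\bx\tilde\bx^\top])\|$. For (ii) I apply matrix Bernstein: each centred summand has operator norm $\le R^2/n$, and since $(\tilde\bx\tilde\bx^\top)^2=\|\tilde\bx\|^2\,\tilde\bx\tilde\bx^\top\preceq R^2\,\tilde\bx\tilde\bx^\top$ the matrix variance is $\le R^2\|X\|/n$. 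Bernstein then gives fluctuation $\le\epsilon\|X\|/2$ with probability $\ge1-\delta/2$ as soon as $n\gtrsim R^2\log(d/\delta)/(\epsilon^2\|X\|)$; plugging in $R^2=\tilde\Theta(d\log(1/\delta))$ and $\|X\|=\Omega(1)$ turns this into $n\ge\tilde\Theta\!\big(d\,\epsilon^{-2}\log^2(1/\delta)\big)$. A triangle inequality over the two events finishes the proof.

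The bookkeeping to watch, and where essentially all the work is, is the choice of the truncation radius $R$: it must be large enough that the bias in (i) is negligible yet small enough that $R^2$ carries only one factor of $\log(1/\delta)$ into the Bernstein variance --- the second factor being the usual $\log(d/\delta)$ confidence term, which is how $\log^2(1/\delta)$ arises. The only other point needing justification is the scale-free constant $\|X\|=\Omega(1)$; this holds because $\|X\|\ge\max_iX_{ii}$ and, in the regime we work in (unit rows, $\sigma_1\ne0$ as in Assumption~\ref{assump:activation2}), $X_{ii}\ge\sigma_1^2>0$. An alternative that sidesteps explicit truncation is to invoke a sub-Gaussian sample-covariance bound directly, but since $\bx$ is sub-Gaussian only with parameter $\Theta(\|A^*\|)=\Theta(\sqrt d)$ relative to its $O(1)$-scale covariance, the same effective-rank bookkeeping reappears.
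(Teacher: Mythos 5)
Your proof is correct and takes essentially the same route as the paper: the paper likewise bounds the samples' magnitudes with high probability (coordinate-wise, using $1$-Lipschitzness of $\phi$ and a union bound over $i\in[n],j\in[d]$) and then applies a bounded-support covariance-estimation result (Vershynin, Corollary 5.52), itself a matrix-Bernstein consequence, to conclude $\|X-X_n\|\le\epsilon\|X\|$ for $n\ge\tilde{\Theta}(d\epsilon^{-2}\log^2(1/\delta))$. Your version is simply more self-contained and a bit more careful, replacing the cited corollary with an explicit truncation-plus-matrix-Bernstein argument (including the truncation bias, which the paper's conditioning step glosses over) and using Borell--TIS on $\|\bx_i\|$ in place of the coordinate-wise bound.
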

Bounding the relative difference between sample and population covariance matrices is essential for us to bound the estimation error in both gradient and its directional derivative. We can show the following relative error:  
\begin{lemma}
	\label{lemma:concentration_m} 
Suppose the activation satisfies Assumption \ref{assump:activation2}\&\ref{assump:obs}. With samples $n\geq \tilde{\Theta}(d/\epsilon^2 \log^2(1/\delta))$, we get: 
$$\|\nabla g(A)-\nabla g_n(A)\|_2\leq  \Ocal(\epsilon d\|A\|_2), 
$$ 
with probability $1-\delta$. Meanwhile, $$\|D \nabla g(A)[B] -D\nabla g_n(A)[B]\|_2\leq \Ocal(\epsilon d^{3/2}\|A\|_2\|B\|_2),$$ with probability $1-\delta$. 
\end{lemma}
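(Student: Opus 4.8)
The plan is to turn both inequalities into deterministic estimates of Gaussian integrals by inserting the closed-form expressions for $\nabla g(A)-\nabla g_n(A)$ and $D\nabla g(A)[B]-D\nabla g_n(A)[B]$ from the Claim above (the one expressing these differences as expectations involving $X-X_n$). Each of them is an expectation over $\bz\sim\Ncal(0,I_{k\times k})$ of a matrix-valued integrand whose only dependence on the data is through the single matrix $X-X_n$. So I would first condition on the event $\mathcal E=\{\|X-X_n\|_2\le\epsilon\|X\|_2\}$, which Lemma~\ref{lemma:error_Xm} supplies with probability at least $1-\delta$ once $n\ge\tilde\Theta(d/\epsilon^2\log^2(1/\delta))$, and record that $\|X\|_2$ is a bounded quantity (it is the operator norm of the covariance of a $1$-Lipschitz image of a standard Gaussian with $A^*$ normalized; in any case $\|X\|_2\le\Tr X=\E_{\bz}\|\phi(A^*\bz)\|_2^2<\infty$), so that on $\mathcal E$ one simply has $\|X-X_n\|_2=\Ocal(\epsilon)$. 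On $\mathcal E$ everything that follows is purely analytic, and \emph{both} claimed bounds follow from it, so no union bound beyond Lemma~\ref{lemma:error_Xm} is needed.

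For the gradient I would bound $\big\|\E_{\bz}[\Diag(\phi'(A\bz))(X-X_n)\phi(A\bz)\bz^\top]\big\|_2$ by moving $\|\cdot\|_2$ inside the expectation (the integrand is rank one, so its spectral norm factorizes). The diagonal prefactor costs at most $\|\phi'\|_\infty\le1$ by Assumption~\ref{assump:obs} ($1$-Lipschitzness); the middle factor costs $\|X-X_n\|_2=\Ocal(\epsilon)$ on $\mathcal E$; and the residual $\E_{\bz}[\|\phi(A\bz)\|_2\,\|\bz\|_2]$ is, by Cauchy--Schwarz, at most $\sqrt{\E\|\phi(A\bz)\|_2^2}\cdot\sqrt{\E\|\bz\|_2^2}$. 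Here $\E\|\bz\|_2^2=k=d$, and $\E\|\phi(A\bz)\|_2^2\le 2\|A\|_F^2+2d\,\phi(0)^2=\Ocal(d\,\|A\|_2^2)$ again by $1$-Lipschitzness, so $\E_{\bz}[\|\phi(A\bz)\|_2\|\bz\|_2]=\Ocal(d\|A\|_2)$ and altogether $\|\nabla g(A)-\nabla g_n(A)\|_2=\Ocal(\epsilon d\|A\|_2)$. (If one wanted to trim the $\|A\|_2$ one could integrate the trailing $\bz^\top$ by parts via Stein's identity, replacing a $\|\bz\|_2$-moment by a factor $A$; this is not needed for the stated bound.)

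For the directional derivative I would run the same reduction on each of the two summands in the Claim. In the first summand $\phi(A\bz)$ is replaced by $\phi'(A\bz)\circ(B\bz)$, whose $\ell_2$ norm is $\le\|B\bz\|_2\le\|B\|_2\|\bz\|_2$ using $\|\phi'\|_\infty\le1$; in the second summand the outer diagonal factor becomes $\Diag(\phi''(A\bz)\circ(B\bz))$, whose operator norm is $\|\phi''(A\bz)\circ(B\bz)\|_\infty\le\|B\bz\|_\infty\le\|B\|_2\|\bz\|_2$ using $1$-smoothness, and the other factors are exactly as in the gradient case. Thus, compared with the gradient, each summand carries one extra factor of order $\|\bz\|_2$, which raises the relevant Gaussian moment by one order and hence the bound by a factor $\sqrt d$; combined with the extra $\|B\|_2$ this yields $\|D\nabla g(A)[B]-D\nabla g_n(A)[B]\|_2=\Ocal(\epsilon d^{3/2}\|A\|_2\|B\|_2)$.

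The main obstacle is the polynomial-in-$d$ bookkeeping: one must carry the \emph{relative} covariance error $\|X-X_n\|_2\le\epsilon\|X\|_2$ of Lemma~\ref{lemma:error_Xm} (as opposed to an absolute bound) through the non-linearity $\phi$ and the Gaussian integrals without leaking spurious powers of $d$, which is exactly why one keeps $\Diag(\phi')$ and $\Diag(\phi''\circ(B\bz))$ in operator norm and tracks the $\|\bz\|_2$-moments precisely. A secondary, routine point is to justify differentiating under the expectation (and, if used, the integration by parts), which is immediate from the at-most-linear growth of $\phi$ and $\phi'$ under Assumption~\ref{assump:obs} and the subgaussian tails of $\bz$.
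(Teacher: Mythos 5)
Your proposal follows essentially the same route as the paper's proof: insert the closed-form expressions for $\nabla g(A)-\nabla g_n(A)$ and $D\nabla g(A)[B]-D\nabla g_n(A)[B]$, condition on the event $\|X-X_n\|\le \epsilon\|X\|$ from Lemma~\ref{lemma:error_Xm}, push the spectral norm inside the Gaussian expectation, bound the pointwise factors by $1$-Lipschitzness/$1$-smoothness, and pay $\E\|\bz\|^2=d$ for the gradient (resp.\ one extra $\|\bz\|$-moment, hence $d^{3/2}$, for the directional derivative). Your use of Cauchy--Schwarz for $\E[\|\phi(A\bz)\|\,\|\bz\|]$ instead of the paper's pointwise bound $\|\phi(A\bz)\|\le\|A\|\,\|\bz\|$ (under the convention $\phi(0)=0$) is an immaterial difference.

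The one step where your justification is weaker than what the claimed bounds require is the constant in $\|X-X_n\|=\Ocal(\epsilon)$. Lemma~\ref{lemma:error_Xm} only gives a \emph{relative} bound $\epsilon\|X\|$, so you need $\|X\|_2=\Ocal(1)$, dimension-free; your fallback argument $\|X\|_2\le\Tr X=\E\|\phi(A^*\bz)\|^2$ only gives $\Ocal(d)$ (each diagonal entry of $X$ is a constant, so the trace is $\Theta(d)$), which as written would leak an extra factor of $d$ into both conclusions. The paper's proof handles exactly this point by evaluating the entries $X_{ij}=\E[\phi(\ba_i^{*\top}\bz)\phi(\ba_j^{*\top}\bz)]$ under the normalization $\|\ba_i^*\|=1$ (with the bias cancelled) and asserting $\|X\|_2\le 2/\pi$. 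So you should replace the trace bound by an explicit operator-norm estimate for $X$ of this kind (constant, independent of $d$); with that in place, the rest of your argument goes through as written and matches the paper's.
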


\subsection{Bounding Mini-batch Size}
Normally for empirical risk for supervised learning, the mini-batch size can be arbitrarily small since the estimator of the gradient is unbiased. However in the WGAN setting, notice for each iteration, we randomly sample a batch of random variables $\{\bz_i\}_{ i\in[m]}$, and obtain a  gradient of $$\tilde{g}_{m,n}(A)\equiv 
\frac{1}{2}\left\|\frac{1}{n}\sum_{i=1}^n \bx_i\bx_i^\top - \frac{1}{m}\sum_{j=1}^m\phi(A\bz_j)\phi(A\bz_j)^\top \right\|^2_F,$$
in Algorithm \ref{algorithm:main}. However, the finite sum is inside the Frobenius norm and the gradient on each mini-batch may no longer be an unbiased estimator for our target $$g_n(A)=\frac{1}{2}\left\|\frac{1}{n}\sum_{i=1}^n \bx_i\bx_i^\top - \E_{\bz}\left[\phi(A\bz)\phi(A\bz)^\top\right] \right\|^2_F.$$

In other words, we conduct stochastic gradient descent over the function $f(A) \equiv \E_{\bz} \tilde{g}_{m,n}(A)$. Therefore we just need to analyze the gradient error between this $f(A)$ and $g_n(A)$ (i.e. $\tilde{g}_{m,n}$ is almost an unbiased estimator of $g_n$). Finally with the concentration bound derived in last section, we get the error bound between $f(A)$ and $g(A)$. 
\begin{lemma}
	\label{lemma:concentration_n} 
	The empirical risk $\tilde{g}_{m,n}$ is almost an unbiased estimator of $g_n$. Specifically, the expected function $f(A)=\E_{\bz_i\sim \Ncal(0,I_{k\times k}),i\in[m]} [\tilde{g}_{m,n}]$ satisfies:
	\begin{equation*}
	\|\nabla f(A) - \nabla g_n(A)\|\leq \Ocal(\frac{1}{m}\|A\|^3d^2).
	\end{equation*}
	For arbitrary direction matrix $B$,
	\begin{equation*}
	\|D\nabla f(A)[B] - D\nabla g_n(A)[B]\|\leq \Ocal(\frac{1}{m}\|B\|\|A\|^3d^{5/2}).
	\end{equation*}
\end{lemma}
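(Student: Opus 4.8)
\emph{Proof plan.} The statement reduces to a bias--variance identity. Write $Y \define \frac1n\sum_{i=1}^n \bx_i\bx_i^\top$ (a fixed matrix once the observed data is drawn), $W_j \define \phi(A\bz_j)\phi(A\bz_j)^\top$, $\bar W \define \frac1m\sum_{j=1}^m W_j$, and $M(A)\define \E_{\bz\sim\Ncal(0,I_{k\times k})}[\phi(A\bz)\phi(A\bz)^\top]$, so that $\tilde g_{m,n}(A)=\frac12\|Y-\bar W\|_F^2$ and $g_n(A)=\frac12\|Y-M(A)\|_F^2$. Expanding $\|Y-\bar W\|_F^2=\|Y-M(A)\|_F^2+2\langle Y-M(A),\,M(A)-\bar W\rangle+\|\bar W-M(A)\|_F^2$ and taking $\E$ over $\bz_1,\dots,\bz_m$, the cross term vanishes because $\E[\bar W]=M(A)$, while the last term has expectation $\frac1m\,\E_{\bz}\|\phi(A\bz)\phi(A\bz)^\top-M(A)\|_F^2$ since the $W_j$ are i.i.d. Hence
\[
f(A)-g_n(A)\;=\;\frac{1}{2m}\,h(A),\qquad h(A)\;\define\;\E_{\bz}\big\|\phi(A\bz)\phi(A\bz)^\top-M(A)\big\|_F^2\;=\;\E_{\bz}\|\phi(A\bz)\|_2^4-\|M(A)\|_F^2 .
\]
Everything now reduces to showing $\|\nabla h(A)\|=\Ocal(\|A\|^3 d^2)$ and $\|D\nabla h(A)[B]\|=\Ocal(\|A\|^3 d^{5/2}\|B\|)$; multiplying by $\tfrac1{2m}$ gives the lemma.

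To estimate these I would first differentiate $h$. Since $\phi$ is $C^1$ with Lipschitz derivative (Assumption \ref{assump:obs}) and the integrands below are dominated by fixed polynomials in $\bz$ with finite Gaussian moments, differentiating under the expectation is legitimate. Using $\|\phi(A\bz)\phi(A\bz)^\top\|_F^2=\|\phi(A\bz)\|_2^4=\big(\sum_i\phi(\ba_i^\top\bz)^2\big)^2$ and the double-expectation form $\|M(A)\|_F^2=\E_{\bz,\bz'}\big[(\phi(A\bz)^\top\phi(A\bz'))^2\big]$, the $\ell$-th row of $\nabla h(A)$ is a finite sum of terms of the schematic shape $\E_{\bz}\big[\,q(\phi(A\bz))\,\phi'(\ba_\ell^\top\bz)\,\bz\,\big]$, with $q$ a low-degree monomial in the coordinates of $\phi(A\bz)$, and $D\nabla h(A)[B]$ is obtained from these by one more application of the product/chain rule, which appends a factor of the form $\phi'(\ba_i^\top\bz)(\bb_i^\top\bz)$ or $\phi''(\ba_i^\top\bz)(\bb_i^\top\bz)$ (plus the corresponding $\bz'$-terms for the $\|M(A)\|_F^2$ piece).

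The rest is moment bookkeeping. I would bound the integrands using the Assumption \ref{assump:obs} estimates $|\phi'|\le1$, $|\phi''|\le1$, $|\phi(x)|\le|\phi(0)|+|x|$, hence $\|\phi(A\bz)\|_2^2\le 2|\phi(0)|^2 d+2\|A\|^2\|\bz\|_2^2$, and then apply Cauchy--Schwarz together with the Gaussian moment bounds $\E\|\bz\|_2^p=\Ocal(d^{p/2})$ and $\E|\ba^\top\bz|^p=\Ocal(\|\ba\|^p)$ (keeping $\ba_\ell^\top\bz$, rather than the cruder $\|\ba_\ell\|\|\bz\|$, wherever possible). This controls the $\ell$-th row of $\nabla h$ by $\Ocal(\|A\|^2\|\ba_\ell\|\,d^{3/2})$ and that of $D\nabla h[B]$ by $\Ocal(\|A\|^2\|\ba_\ell\|\,\|\bb_\ell\|\,d^{3/2})$ up to the order needed. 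Summing the squared row bounds and using $\sum_\ell\|\ba_\ell\|^2=\|A\|_F^2\le d\|A\|^2$ and $\sum_\ell\|\ba_\ell\|^2\|\bb_\ell\|^2\le(\max_\ell\|\bb_\ell\|^2)\|A\|_F^2\le\|B\|^2 d\|A\|^2$ then yields the two bounds claimed in the lemma (with a little room to spare).

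The main obstacle is organizational rather than conceptual: one must write out all the product-rule terms for $\nabla h$, for $D\nabla h[B]$, and for the $\|M(A)\|_F^2$ double-expectation piece, identify the dominant monomials in $\|A\|$ and in $d$, and---most delicately---decide where it is safe to replace $\ba^\top\bz$ by $\|\ba\|\|\bz\|$, since each such replacement costs a factor $\sqrt d$ and this is exactly the knob controlling the final power of $d$. The only genuine technical point beyond bookkeeping is the differentiation-under-the-expectation step, which is immediate from the smoothness assumption and dominated convergence.
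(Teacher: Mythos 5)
Your proposal is correct and follows essentially the same route as the paper: the exact bias identity $f(A)-g_n(A)=\frac{1}{2m}\bigl(\E_{\bz}\|\phi(A\bz)\phi(A\bz)^\top\|_F^2-\|\E_{\bz}[\phi(A\bz)\phi(A\bz)^\top]\|_F^2\bigr)$ is precisely the paper's decomposition, after which both you and the paper differentiate this $\Ocal(1/m)$ bias term and control it by Gaussian moment bounds under Assumption \ref{assump:obs}. The paper merely records the resulting gradient and directional-derivative expressions in matrix form (via $\Diag(\phi'(A\bz))(S(A)-\E_S S(A))\phi(A\bz)\bz^\top$ and its analogue) rather than differentiating the scalar variance $h(A)$ directly, a cosmetic difference only.
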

In summary, we conduct concentration bound over the observation samples and mini-batch sizes, and show the gradient of $f(A)$ that Algorithm \ref{algorithm:main} is optimizing over has close gradient and Hessian with the population risk $g(A)$. Therefore a second-order stationary point (SOSP) for $f(A)$ (that our algorithm is guaranteed to achieve) is also an $\epsilon$ approximated SOSP for $g(A)$. Next we show such a point also yield an $\epsilon$ approximated first-order stationary point of the reparametrized function $\tilde{g}(Z)\equiv g(A),\forall Z=AA^\top$. 

\subsection{Relation on Approximate Optimality} 
In this section, we establish the relationship between $\tilde g$ and $g$. We present the general form of our target Problem \ref{problem:unit_covariance}: 
\begin{align}
\label{eqn:origin}
&\min_{A\in\R^{d\times k}} g(A)\equiv \tilde{g}(AA^\top) \\
\nonumber
\text{s.t.}&\Tr(A^\top X_iA) = y_i, X_i \in \Sbb, y_i\in \R, i=1,\cdots, n.
\end{align}
Similar to the previous section, the stationary property might not be obvious on the original problem. Instead, we could look at the re-parametrized version as:
\begin{eqnarray}
\label{eqn:sdo}
&\min_{Z\in\Sbb}& \tilde{g}(Z) \\
\nonumber
&\text{s.t.}& \Tr(X_iZ) = y_i, X_i \in \Sbb, y_i\in \R, i=1,\cdots, n, \\
\nonumber
&& Z\succeq 0,
\end{eqnarray}
\begin{definition}
	A matrix $A\in \R^{d\times k}$ is called an $\epsilon$-approximate second-order stationary point ($\epsilon$-SOSP) of Eqn. (\ref{eqn:origin}) if there exists a vector $\lambda$ such that:
	\begin{eqnarray*}
		\left\{\begin{array}{l}
			\Tr(A^\top X_iA)=y_i, i\in [n] \\
			\|(\nabla_Z \tilde{g}(AA^\top) - \sum_{i=1}^n \lambda_i X_i)\tilde{\ba}_j\|\leq \epsilon \|\tilde{\ba}_j\|, \\
			\hspace{2cm} (\{\tilde{\ba}_j\}_j \text{ span the column space of }A)\\
			\Tr(B^\top D \nabla_A \Lcal(A, \lambda)[B]) \geq -\epsilon \|B\|^2,\\
			\hspace{2cm} \forall B \text{ s.t. }\Tr(B^\top X_iA)=0
		\end{array}
		\right.
	\end{eqnarray*}
	Here $\Lcal(A,\lambda)$ is the Lagrangian form 
	$ \tilde{g}(AA^\top) - \sum_{i=1}^n \lambda_i(\Tr(A^\top X_iA) - y_i).$
\end{definition}
Specifically, when $\epsilon=0$ the above definition is exactly the second-order KKT condition for optimizing (\ref{eqn:origin}).  Next we present the approximate first-order KKT condition for (\ref{eqn:sdo}):
\begin{definition}
	A symmetric matrix $Z \in \Sbb^n$ is an $\epsilon$-approximate first order stationary point of function (\ref{eqn:sdo}) ($\epsilon$-FOSP) if and only if there exist a vector $\sigma \in \R^m$ and a symmetric matrix $S \in \Sbb$ such that the following holds: 
	\begin{eqnarray*}
		\left\{\begin{array}{l}
			\Tr(X_iZ) = y_i, i\in [n]\\
			Z \succeq 0,\\
			S \succeq -\epsilon I,\\
			\|S\tilde{\ba}_j\| \leq \epsilon \|\tilde{\ba}_j\|,\\
			 \hspace{2cm} (\{\tilde{\ba}_j\}_j \text{ span the column space of }Z)\\
			S = \nabla_Z \tilde{g}(Z) - \sum_{i=1}^n
			\sigma_i X_i. 
		\end{array} \right.
	\end{eqnarray*}	
\end{definition}
\begin{lemma}
	\label{lemma:optimality}
	Let latent dimension $k=d$. For an $\epsilon$-SOSP of function (\ref{eqn:origin}) with $A$ and $\lambda$, it infers an $\epsilon$-FOSP of function (\ref{eqn:sdo}) with $Z,\sigma$ and $S$ that satisfies: $Z=AA^\top, \sigma=\lambda$ and $S= \nabla_Z \tilde{g}(AA^\top) - \sum_{i} \lambda_i X_i$.
\end{lemma}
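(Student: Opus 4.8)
The plan is to check, one at a time, the five defining properties of an $\epsilon$-FOSP of the semidefinite program (\ref{eqn:sdo}) at the point $Z=AA^\top$, $\sigma=\lambda$, $S=\nabla_Z\tilde g(AA^\top)-\sum_i\lambda_iX_i$, starting from the three properties guaranteed by the $\epsilon$-SOSP $(A,\lambda)$ of (\ref{eqn:origin}). Four of the five are pure bookkeeping. The affine constraints transfer since $\Tr(X_iZ)=\Tr(X_iAA^\top)=\Tr(A^\top X_iA)=y_i$; positivity $Z=AA^\top\succeq 0$ is automatic; the dual identity $S=\nabla_Z\tilde g(Z)-\sum_i\sigma_iX_i$ holds verbatim once $\sigma=\lambda$; and the ``small on the range'' condition $\|S\tilde{\ba}_j\|\le\epsilon\|\tilde{\ba}_j\|$ is literally the second line of the $\epsilon$-SOSP definition, because $\mathrm{col}(Z)=\mathrm{col}(AA^\top)=\mathrm{col}(A)$ and $\nabla_Z\tilde g(AA^\top)-\sum_i\lambda_iX_i=S$. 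So all the content sits in the remaining condition, the spectral lower bound $S\succeq-\epsilon I$, and this is where the second-order information is spent --- a Burer--Monteiro/factorization-style argument in the spirit of \citep{journee2008low}.

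For that bound I would first record the Lagrangian derivatives. With $\Lcal(A,\lambda)=\tilde g(AA^\top)-\sum_i\lambda_i(\Tr(A^\top X_iA)-y_i)$ one has $\nabla_A\Lcal(A,\lambda)=2SA$ and, by the chain rule,
$$D\nabla_A\Lcal(A,\lambda)[B]=2\,\nabla^2\tilde g(AA^\top)\big[BA^\top+AB^\top\big]\,A+2SB,$$
where $\nabla^2\tilde g(AA^\top)[\cdot]$ is the Hessian of $\tilde g$ as a linear map on symmetric matrices (well-defined since each $\tilde g_{jk}$ is smooth). Now use the hypothesis $k=d$: if $\mathrm{rank}(A)<d$ there is a unit vector $\bu\in\R^k$ with $A\bu=0$, and for any $\bv\in\R^d$ the rank-one direction $B=\bv\bu^\top$ is feasible, $\Tr(B^\top X_iA)=\bv^\top X_iA\bu=0$. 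Plugging this $B$ into the curvature form, the first term on the right above contributes nothing because it carries a trailing factor $A\bu=0$, while $\Tr(B^\top SB)=\|\bu\|^2\,\bv^\top S\bv=\bv^\top S\bv$; so the $\epsilon$-SOSP inequality $\Tr(B^\top D\nabla_A\Lcal(A,\lambda)[B])\ge-\epsilon\|B\|^2$ becomes $2\,\bv^\top S\bv\ge-\epsilon\|\bv\|^2$ for every $\bv$, hence $S\succeq-\tfrac{\epsilon}{2}I\succeq-\epsilon I$.

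It remains to handle the degenerate case $\mathrm{rank}(A)=d$. Then $\mathrm{col}(A)=\R^d$, so the spanning set $\{\tilde{\ba}_j\}$ appearing in the $\epsilon$-SOSP is a basis of all of $\R^d$ (take it orthonormal), and the bound $\|S\tilde{\ba}_j\|\le\epsilon$ already says $\|S\|\le\epsilon$ on $\R^d$, which forces $S\succeq-\epsilon I$ as well. (Had we taken $k>d$, or more generally $k$ larger than the rank of the optimal $Z$, this case could not occur and no split would be needed; the choice $k=d$ is only to avoid having to know $k_0$.) Combining the two cases with the four routine identities above yields the claimed $\epsilon$-FOSP.

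The crux I anticipate is exactly the manipulation of $D\nabla_A\Lcal(A,\lambda)[B]$: one must see that moving along the kernel of $A$ (the ``extra'' directions that rank deficiency provides) kills the $\nabla^2\tilde g$ term, so that the second-order inequality degenerates into a clean statement about $\bv^\top S\bv$; and separately one must notice that when $A$ is full rank these directions are unavailable, so the spectral bound has to be read off the first-order part of the $\epsilon$-SOSP instead. Once the conventions for $\nabla_A$ and $D\nabla_A$ are pinned down, the rest is immediate from $Z=AA^\top$.
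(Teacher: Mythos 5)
Your proposal is correct and follows essentially the same route as the paper: all conditions of the $\epsilon$-FOSP except $S\succeq-\epsilon I$ are immediate bookkeeping, and the spectral bound is obtained by splitting on the rank of $A$ — feeding rank-one directions $B=\bv\bu^\top$ with $A\bu=0$ into the second-order condition (which kills the $\nabla^2\tilde g$ term and yields $S\succeq-\tfrac{\epsilon}{2}I$), and reading the full-rank case off the column-space condition. The only cosmetic difference is that you write the chain-rule expansion of $D\nabla_A\Lcal(A,\lambda)[B]$ explicitly where the paper invokes Lemma 5 of \cite{journee2008low} and discards the multiplier-derivative term via the constraint $\Tr(B^\top X_iA)=0$.
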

Now it remains to show an $\epsilon$-FOSP of $\tilde{g}(Z)$ indeed yields a good approximation for the ground truth parameter matrix.  
\begin{lemma}
	\label{lemma:recovery_error} 
	If $Z$ is an $\epsilon$-FOSP of function (\ref{eqn:sdo}), then $\|Z-Z^*\|_F\leq \Ocal(\epsilon)$. Here $Z^*=A^*(A^*)^\top$ is the optimal solution for function (\ref{eqn:sdo}).   
\end{lemma}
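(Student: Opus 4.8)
The plan is to exploit the separability $\tilde g(Z)=\sum_{j,k}\tilde g_{jk}(z_{jk})$ together with the fact that the constraints in~(\ref{eqn:sdo}) are exactly the unit-diagonal constraints $X_i=\be_i\be_i^\top$, $y_i=1$, and thereby reduce the matrix claim to a one-dimensional estimate for each entry $z_{jk}$. Note that, since $Z,Z^*\succeq 0$ have unit diagonal, $|z_{jk}|,|z^*_{jk}|\le 1$.

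First I would upgrade Lemma~\ref{lemma:second_moment1} to a quantitative statement. Writing $\tilde g_{jk}(z)=\tfrac12 p_{jk}(z)^2$ with $p_{jk}(z)=\sum_{i\ge 0}\sigma_i^2\big((z^*_{jk})^i-z^i\big)$, the argument behind Lemma~\ref{lemma:second_moment1} shows that on $[-1,1]$ the scalar polynomial $p_{jk}$ is strictly monotone, with $|p_{jk}'|$ bounded below by an activation-dependent constant ($\sigma_1^2$ in the odd-plus-constant case, an analogous positive constant for leaky ReLU). Hence $\tilde g_{jk}'(z)=p_{jk}(z)p_{jk}'(z)$ carries the sign of $z-z^*_{jk}$, vanishes only at $z^*_{jk}$, and obeys $|\tilde g_{jk}'(z)|\ge c_0\,|z-z^*_{jk}|$ on $[-1,1]$ for some $c_0=c_0(\phi)>0$. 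I would also record that $\nabla_Z\tilde g(Z)$ is the matrix $M$ with $M_{jk}=\tilde g_{jk}'(z_{jk})$, whose diagonal vanishes since $z^*_{jj}=\|\ba^*_j\|^2=1$ forces $p_{jj}(1)=0$.

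Now unpack the $\epsilon$-FOSP conditions. As $X_i=\be_i\be_i^\top$, we have $\sum_i\sigma_i X_i=\Diag(\sigma)$, so $S=M-\Diag(\sigma)$; in particular $S_{jk}=M_{jk}=\tilde g_{jk}'(z_{jk})$ for $j\ne k$, while $S_{jj}=-\sigma_j$. Taking the spanning vectors $\tilde\ba_j$ to be the columns of the factor $A$ (so $Z=AA^\top$), the condition $\|S\tilde\ba_j\|\le\epsilon\|\tilde\ba_j\|$ becomes $\|SA\|_F\le\epsilon\|A\|_F$, whence $|\langle Z,S\rangle|=|\Tr(A^\top SA)|\le\|A\|_F\|SA\|_F\le\epsilon\|A\|_F^2=\epsilon\,\Tr(Z)=\epsilon d$; the remaining conditions give $S\succeq-\epsilon I$, $Z\succeq 0$, $Z_{ii}=1$. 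The core step is to estimate $\langle Z-Z^*,S\rangle$ in two ways. Since $Z-Z^*$ has zero diagonal, the $\Diag(\sigma)$ term is annihilated, so $\langle Z-Z^*,S\rangle=\langle Z-Z^*,M\rangle=\sum_{j\ne k}(z_{jk}-z^*_{jk})\tilde g_{jk}'(z_{jk})\ge c_0\|Z-Z^*\|_F^2$, by the sign/growth facts above and because the diagonal terms of $Z-Z^*$ vanish. On the other hand $\langle Z,S\rangle\le\epsilon d$ and $\langle Z^*,S\rangle\ge-\epsilon\,\Tr(Z^*)=-\epsilon d$ (from $Z^*\succeq 0$ and $S+\epsilon I\succeq 0$), so $\langle Z-Z^*,S\rangle\le 2\epsilon d$. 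Combining, $\|Z-Z^*\|_F^2\le 2\epsilon d/c_0$; to sharpen this to the stated linear-in-$\epsilon$ rate one refines it to an entrywise estimate, using positivity of the $2\times2$ principal minors of $S+\epsilon I$ together with the smallness of $S$ on $\mathrm{col}(Z)$ to pin $|S_{jk}|=|\tilde g_{jk}'(z_{jk})|=\Ocal(\epsilon)$, whence $|z_{jk}-z^*_{jk}|\le|S_{jk}|/c_0=\Ocal(\epsilon)$ for every entry (polynomial-in-$d$ factors absorbed into $\Ocal(\cdot)$).

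I expect two points to carry the real work. First, the quantitative scalar estimates for leaky ReLU: since its Hermite coefficients are not supported on odd indices, strict monotonicity of $p_{jk}$ and the uniform lower bound $|\tilde g_{jk}'(z)|\ge c_0|z-z^*_{jk}|$ on all of $[-1,1]$ (with $c_0$ independent of $z^*_{jk}$) require a dedicated computation rather than the clean alternating-sign argument. Second, and more essentially, $Z$ may be rank-deficient, and on $\mathrm{null}(Z)$ the matrix $S$ is only bounded below by $-\epsilon I$ with no a priori upper bound; the argument must therefore be routed through quantities that see only $\mathrm{col}(Z)$ (the trace $\Tr(Z)=d$ and the projected action of $S$) and through the off-diagonal identity $S_{jk}=\nabla_Z\tilde g(Z)_{jk}$, which is exactly what lets the unit-diagonal constraints absorb the uncontrolled part of $S$.
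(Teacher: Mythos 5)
Your two-sided estimate is sound as far as it goes, and its lower-bound half is exactly the paper's: $\langle Z-Z^*,\nabla_Z\tilde g(Z)\rangle=\sum_{j\ne k}(z_{jk}-z^*_{jk})\,\tilde g_{jk}'(z_{jk})\ge c_0\|Z-Z^*\|_F^2$ (with $c_0=\sigma_1^4$ for odd-plus-constant activations, $c_0=\alpha^2$ for leaky ReLU). The gap is in how you close the argument. Your upper bound goes through the traces, $\langle Z,S\rangle\le\epsilon\,\Tr(Z)=\epsilon d$ and $\langle Z^*,S\rangle\ge-\epsilon\,\Tr(Z^*)=-\epsilon d$, which only yields $\|Z-Z^*\|_F\le\Ocal(\sqrt{\epsilon d})$, and the sharpening you invoke to recover the stated linear rate does not go through as described. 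The $2\times 2$ principal-minor inequality for $S+\epsilon I\succeq 0$ gives $|S_{jk}|\le\sqrt{(S_{jj}+\epsilon)(S_{kk}+\epsilon)}$, so to conclude $|S_{jk}|=\Ocal(\epsilon)$ you would need $S_{jj}=-\sigma_j\le\Ocal(\epsilon)$. Nothing in the $\epsilon$-FOSP conditions provides this: $\sigma$ is a free dual variable, $\be_j$ need not lie in $\mathrm{col}(Z)$ when $Z$ is rank-deficient, and on $\mathrm{null}(Z)$ the matrix $S$ has no upper bound. The condition $\|S\tilde\ba_i\|\le\epsilon\|\tilde\ba_i\|$ only forces the $j$-th row of $S$ to nearly annihilate the spanning vectors; since the off-diagonal entries $S_{jk}=\tilde g_{jk}'(z_{jk})$ are merely $\Ocal(1)$ on $[-1,1]$, this caps $S_{jj}$ at something like $\Ocal(\sqrt d/|(\tilde\ba_i)_j|)$, not $\Ocal(\epsilon)$. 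In short, "smallness of $S$ on $\mathrm{col}(Z)$" controls $u^\top S v$ for $u,v\in\mathrm{col}(Z)$, but $S_{jk}=\be_j^\top S\be_k$ involves coordinate directions that live partly in $\mathrm{null}(Z)$, which is exactly the uncontrolled region you yourself flag; the entrywise pinning $|S_{jk}|=\Ocal(\epsilon)$ is therefore unproven (it is true a posteriori once the lemma is known, but cannot be used to prove it this way).

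The paper avoids this entirely by never passing through $\Tr(Z)$ and $\Tr(Z^*)$: it bounds the pairing against the error matrix directly, $\langle S,Z-Z^*\rangle\le\|P_T(S)\|_2\,\|Z-Z^*\|_F\le\big(\max_j\tilde\ba_j^\top S\tilde\ba_j\big)\|Z-Z^*\|_F\le\epsilon\|Z-Z^*\|_F$, where $T$ is the tangent cone at $Z$ and the $\tilde\ba_j$ span $\mathrm{col}(Z)$, and then cancels one power of $\|Z-Z^*\|_F$ against the quadratic lower bound to get $\|Z-Z^*\|_F\le\epsilon/\sigma_1^4$ --- linear in $\epsilon$ and dimension-free. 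Note also that even if your entrywise bootstrap could be repaired, it would deliver $\|Z-Z^*\|_F\le\Ocal(\epsilon\,\mathrm{poly}(d))$, which is weaker than the lemma as stated and would propagate extra $d$ factors into the sample/batch sizes in Theorem \ref{thm:main}. To fix your proof, replace the $\langle Z,S\rangle$, $\langle Z^*,S\rangle$ trace bounds with a bound on $\langle S,Z-Z^*\rangle$ itself in the spirit above (using both $S\succeq-\epsilon I$ and the near-vanishing of $S$ on $\mathrm{col}(Z)$ to control the tangent and normal components of the pairing), rather than trying to bound $S$ entrywise.
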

Together with the previous arguments, we finally achieve our main theorem on connecting the recovery guarantees with the sample complexity and batch size\footnote{The exact error bound comes from the fact that when diagonal terms of $AA^\top$ are fixed, $\|A\|_2=\Ocal(\sqrt{d})$.}:
\begin{theorem}
	\label{thm:main} 
For arbitrary $\delta<1,\epsilon$, given small enough learning rate $\eta<1/\text{poly}(d,1/\epsilon,\log(1/\delta))$, 
let sample size $n\geq \tilde{\Theta}(d^5/\epsilon^2 \log^2(1/\delta))$, batch size $m\geq \Ocal(d^5/\epsilon)$, for large enough $T$=poly($1/\eta,1/\epsilon,d,\log(1/\delta)$), 
 the output of Algorithm \ref{algorithm:main} satisfies 
 $$\|A^{(T)}(A^{(T)})^\top - Z^*\|_F\leq \Ocal(\epsilon),$$ 
 with probability $1-\delta$, under Assumptions \ref{assump:activation2} \&\ \ref{assump:obs} and $k=d$. 
\end{theorem}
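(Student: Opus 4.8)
The plan is to chain together the four-step argument sketched in the proof outline, tracking $\epsilon$ and $\delta$ carefully through each reduction and choosing the final parameters so that the accumulated error lands at $\Ocal(\epsilon)$. First I would invoke the convergence guarantee of projected stochastic gradient descent from Theorem~31 of \citep{ge2015escaping}: since Algorithm~\ref{algorithm:main} runs projected SGD on the function $f(A) = \E_{\bz}[\tilde g_{m,n}(A)]$ with injected spherical noise $\be$, for a step size $\eta < 1/\text{poly}(d,1/\epsilon,\log(1/\delta))$ and $T = \text{poly}(1/\eta,1/\epsilon,d,\log(1/\delta))$ iterations, with probability $1-\delta$ the algorithm outputs a point $\hat A$ that is an $\epsilon_0$-approximate second-order stationary point of $f$, where $\epsilon_0$ is a target precision to be back-solved. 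The hypotheses of that theorem (bounded gradients, Lipschitz Hessian, compact constraint manifold) hold here because the constraint set $C$ fixes the diagonal of $AA^\top$, forcing $\|A\|_2 = \Ocal(\sqrt d)$, and the activation is $1$-Lipschitz and $1$-smooth by Assumption~\ref{assump:obs}.

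Next I would transfer stationarity from $f$ to the population risk $g$. By Lemma~\ref{lemma:concentration_m} (observation samples, requiring $n \geq \tilde\Theta(d/\epsilon'^2 \log^2(1/\delta))$) and Lemma~\ref{lemma:concentration_n} (mini-batch size, requiring $m \geq \Ocal(d^{5/2}\|A\|^3/\epsilon')$, which becomes $\Ocal(d^4/\epsilon')$ after substituting $\|A\|=\Ocal(\sqrt d)$), the gradient and the Hessian (tested against any direction $B$) of $f$ differ from those of $g$ by $\Ocal(\epsilon' \cdot \text{poly}(d))$ with probability $1-\delta$. Hence $\hat A$ is an $\Ocal(\epsilon_0 + \epsilon'\,\text{poly}(d))$-SOSP of $g$. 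Then Lemma~\ref{lemma:optimality} (using $k=d$) promotes this to an $\Ocal(\epsilon_0 + \epsilon'\,\text{poly}(d))$-FOSP $\hat Z = \hat A \hat A^\top$ of the semidefinite reformulation \eqref{eqn:sdo}, and Lemma~\ref{lemma:recovery_error} yields $\|\hat Z - Z^*\|_F \leq \Ocal(\epsilon_0 + \epsilon'\,\text{poly}(d))$. Setting $\epsilon' = \epsilon/\text{poly}(d)$ and $\epsilon_0 = \epsilon$ and reading off the polynomial exponents from the lemma statements gives the stated bounds $n \geq \tilde\Theta(d^5/\epsilon^2\log^2(1/\delta))$ and $m \geq \Ocal(d^5/\epsilon)$, absorbing slack into the $\Ocal$ and the degree of the polynomials.

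The main obstacle is the bookkeeping at the interface between the three different objects $f$, $g_n$, $g$ and the semidefinite problem: I must verify that the \emph{approximate} notions of SOSP/FOSP compose correctly, i.e.\ that an $\epsilon$-SOSP of $f$ really is an $\Ocal(\epsilon + \text{error})$-SOSP of $g$ in the precise sense used by Lemma~\ref{lemma:optimality} (both the gradient-projected-onto-column-space condition and the restricted-Hessian condition must degrade only additively under the perturbation bounds of Lemmas~\ref{lemma:concentration_m}--\ref{lemma:concentration_n}), and that the Lagrange multipliers $\lambda$ produced by the constrained stationarity condition remain controlled. A secondary subtlety is checking that Theorem~31 of \citep{ge2015escaping} applies to the \emph{manifold}-constrained problem with the quadratic equality constraints $(AA^\top)_{ii} = (A^*A^{*\top})_{ii}$ rather than an unconstrained one — this is why the algorithm uses $\text{Proj}_C$ — and that the noise $\be$ drawn from the unit sphere supplies the dispersion needed to escape saddle points. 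Once these compositional facts are in place, the choice of $n$, $m$, $\eta$, $T$ is a matter of matching polynomial degrees, and the high-probability statement follows from a union bound over the $\Ocal(1)$ bad events, each of probability $\Ocal(\delta)$.
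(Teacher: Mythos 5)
Your plan follows exactly the paper's proof: invoke Theorem~31 of \citep{ge2015escaping} to get an approximate SOSP of $f$, transfer it to $g_n$ and then $g$ via Lemmas~\ref{lemma:concentration_n} and~\ref{lemma:concentration_m}, pass to an approximate FOSP of $\tilde g$ via Lemma~\ref{lemma:optimality}, and conclude with Lemma~\ref{lemma:recovery_error}, with the same rescaling $\epsilon \mapsto \epsilon/\mathrm{poly}(d)$ (using $\|A\|_2=\Ocal(\sqrt d)$ from the fixed diagonal constraint) to obtain the stated $n$ and $m$. This is correct and essentially identical to the paper's argument, including the bookkeeping caveats the paper itself leaves implicit.
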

Therefore we have shown that with finite samples of poly$(d,1/\epsilon)$, we are able to learn the generating distribution with error measured in the parameter space, using stochastic gradient descent ascent. This echos the empirical success of training WGAN. Meanwhile, notice our error bound matches the lower bound on dependence of $1/\epsilon$, as suggested in \cite{wu2019learning}.

\section{Experiments} 
\begin{figure*}
	\begin{tabular}{cc}
		\includegraphics[width=0.48\linewidth,height=0.3\linewidth]{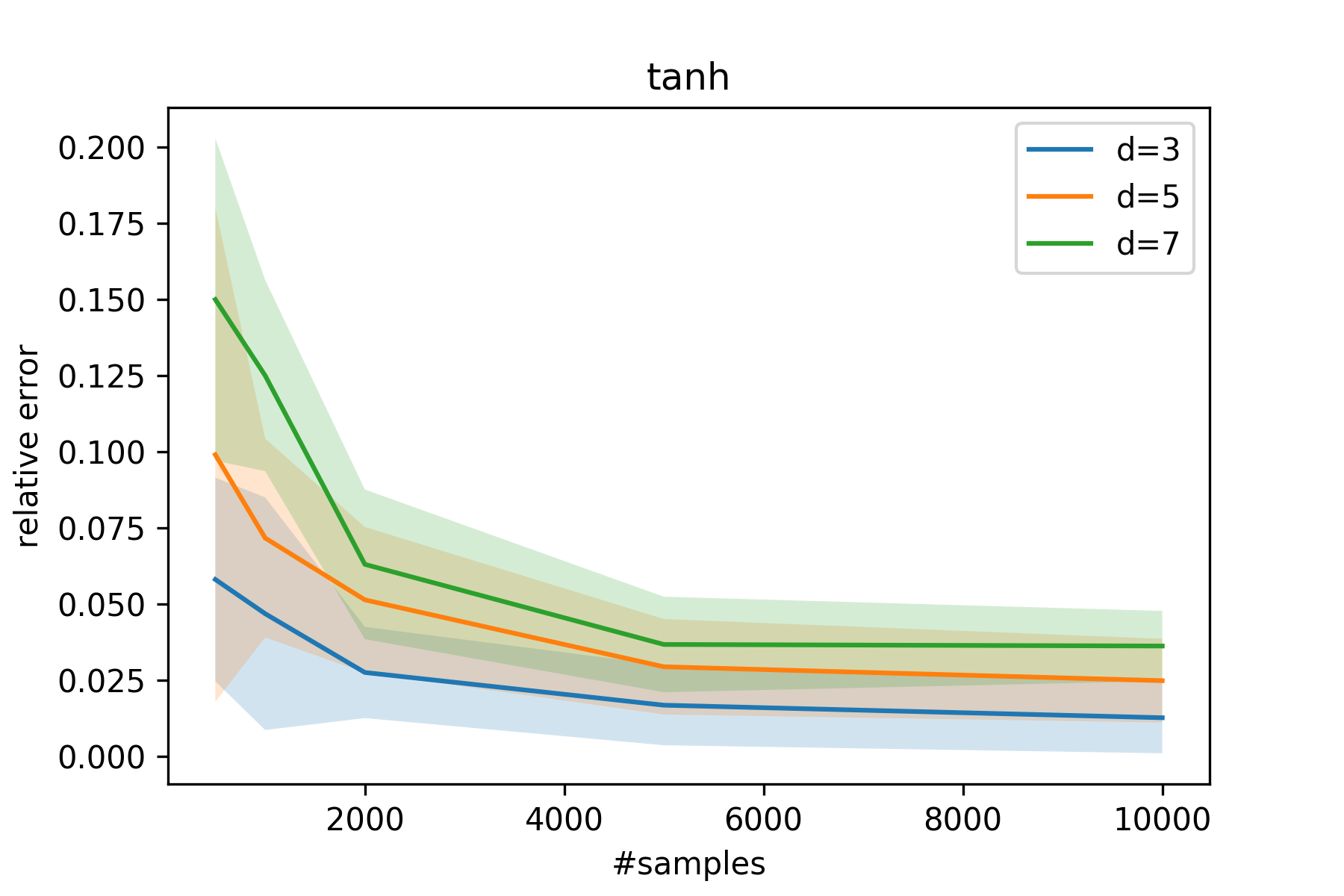} & \includegraphics[width=0.48\linewidth,height=0.3\linewidth]{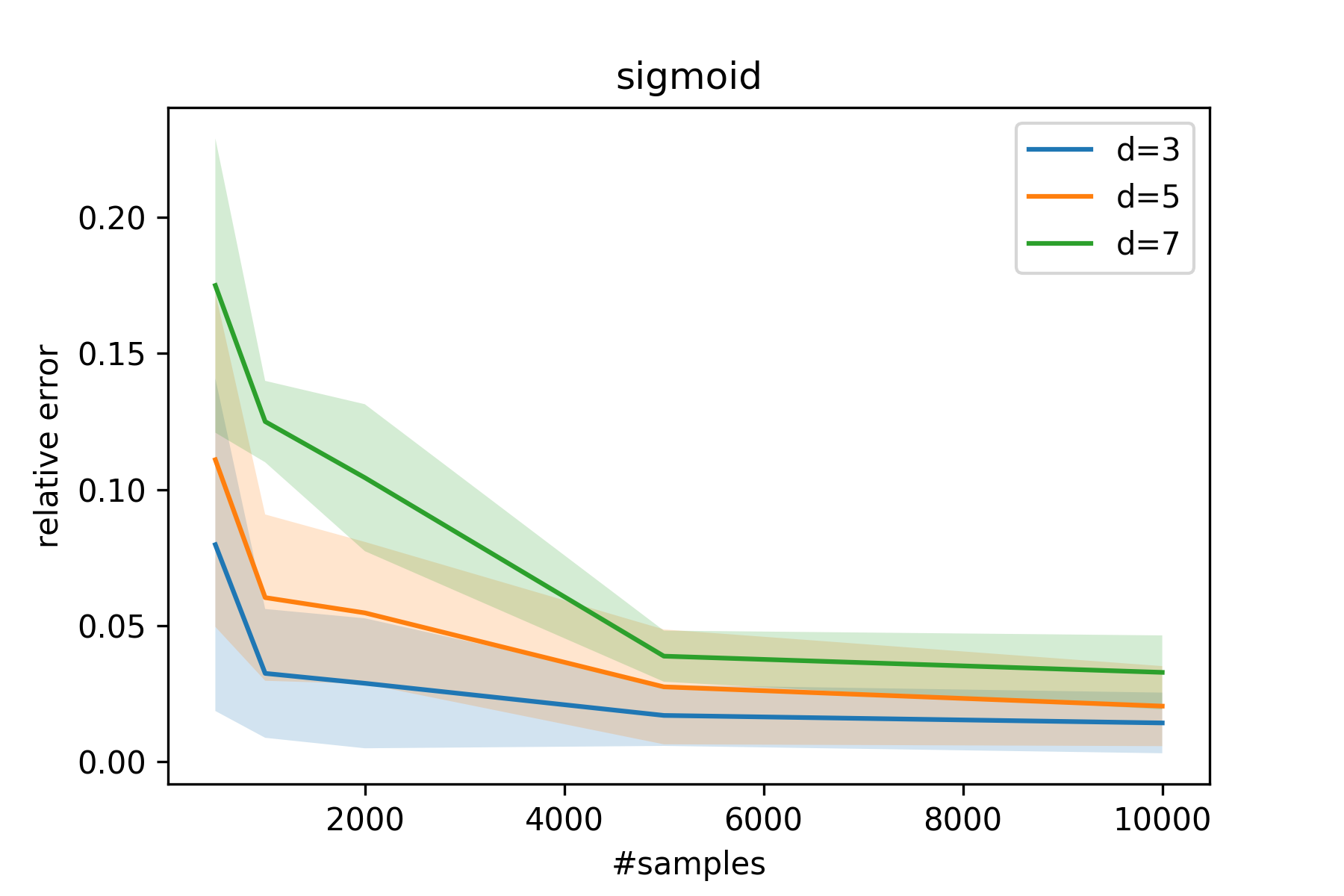}
	\end{tabular}
	\caption{Recovery error ($\|AA^\top-Z^*\|_F$) with different observed sample sizes $n$ and output dimension $d$. }
	\label{fig:samples_complexity}	
\end{figure*}
\begin{figure*}[bht!]
	\begin{tabular}{cc}
		\includegraphics[width=0.48\linewidth,height=0.3\linewidth]{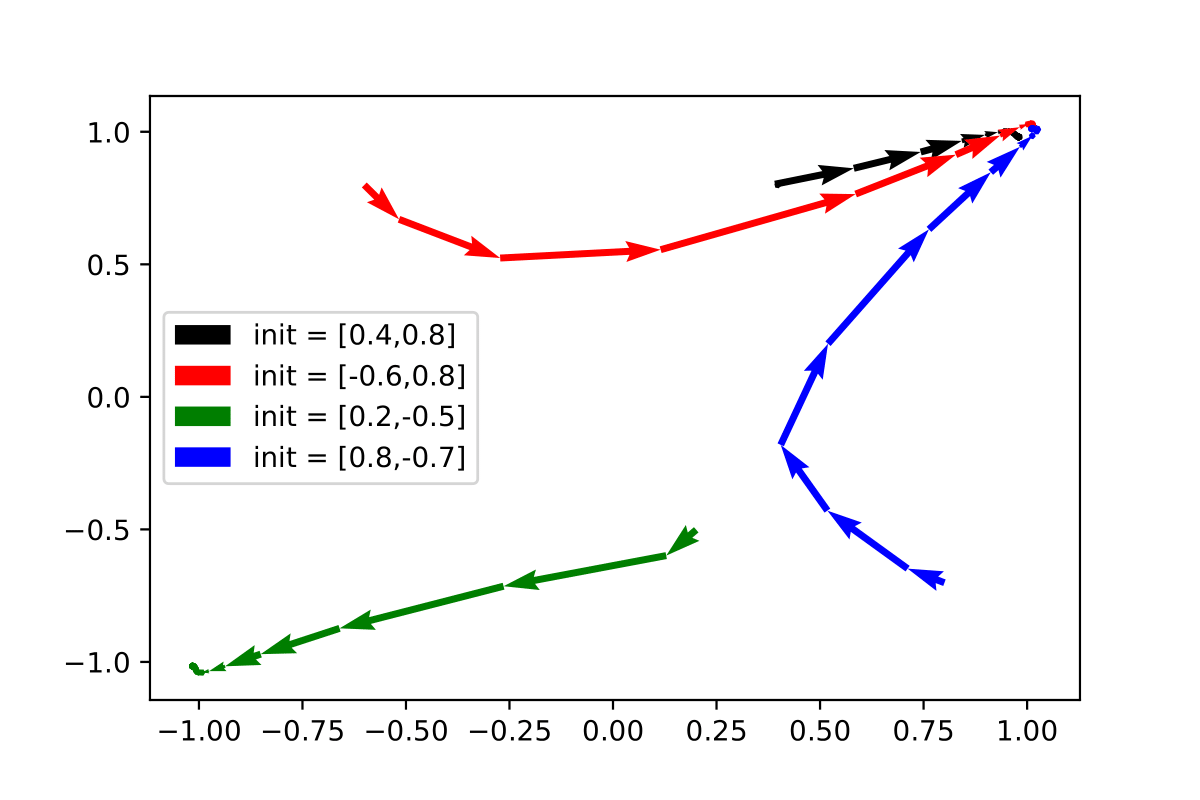} & \includegraphics[width=0.48\linewidth,height=0.3\linewidth]{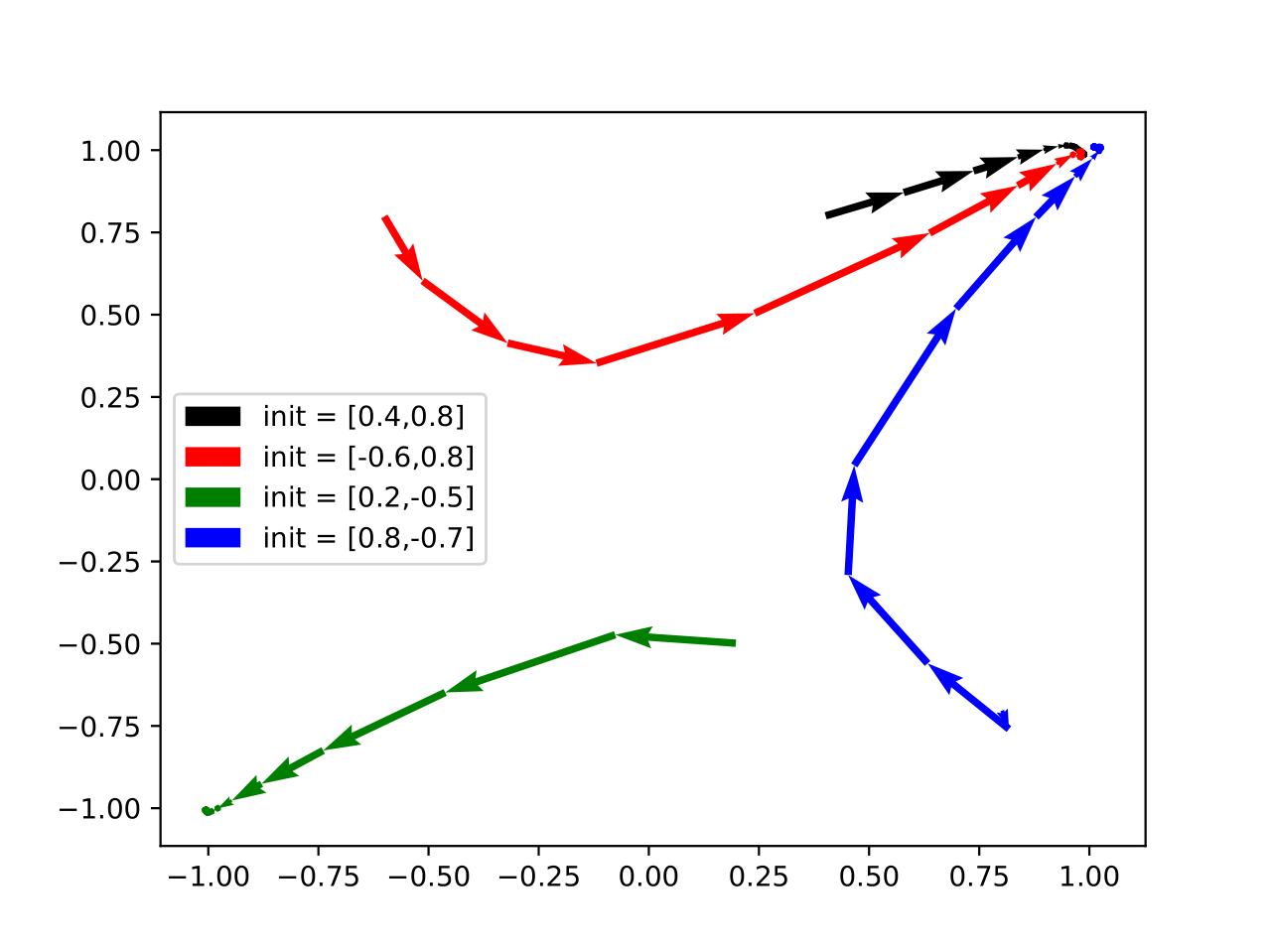} \\
		(a) leaky ReLU activation ($\alpha=0.2$) & (b) tanh activation
	\end{tabular}
	\caption{Comparisons of different performance with leakyReLU and tanh activations. Same color starts from the same starting point. For both cases, parameters always converge to true covariance matrix. 
		Each arrow indicates the progress of 500 iteration steps. }
	\label{fig:bad_recovery}	
\end{figure*}
In this section, we provide simple experimental results to validate the performance of stochastic gradient descent ascent and provide experimental support for our theory. 

We focus on Algorithm \ref{algorithm:main} that targets to recover the parameter matrix. We conduct a thorough empirical studies on three joint factors that might affect the performance: the number of observed samples $m$ (we set $n=m$ as in general GAN training algorithms), the different choices of activation function $\phi$, and the output dimension $d$. 

In Figure \ref{fig:samples_complexity} we plot the relative error for parameter estimation decrease over the increasing sample complexity. We fix the hidden dimension $k=2$, and vary the output dimension over $\{3,5,7\}$ and sample complexity over $\{500,1000,2000,5000,10000\}$. Reported values are averaged from 20 runs and we show the standard deviation with the corresponding colored shadow. Clearly the recovery error decreases with higher sample complexity and smaller output dimension. From the experimental results, we can see that our algorithm always achieves global convergence to the ground truth generators from any random initialization point.  



To visually demonstrate the learning process, we also include a simple comparison for different $\phi$: i.e. leaky ReLU and tanh activations, when $k=1$ and $d=2$. We set the ground truth covariance matrix to be $[1,1;1,1]$, and therefore a valid result should be $[1,1]$ or $[-1,-1]$. 
From Figure \ref{fig:bad_recovery} we could see that for both leaky ReLU and tanh, the stochastic gradient descent ascent performs similarly with exact recovery of the ground truth parameters. 

\section{Conclusion}
We analyze the convergence of stochastic gradient descent ascent for Wasserstein GAN on learning a single layer generator network. We show that stochastic gradient descent ascent algorithm attains the global min-max point, and provably recovers the parameters of the network with $\epsilon$ absolute error measured in Frobenius norm, from $\tilde{\Theta}(d^5/\epsilon^2)$ i.i.d samples.  



\section*{Acknowledgements}
The authors thank the Simons Institute Summer 2019 program on the Foundations of Deep Learning for hosting the authors. JDL acknowledges support of the ARO under MURI Award W911NF-11-1-0303,  the Sloan Research Fellowship, and NSF CCF 2002272. 
A.D. acknowledges the support of NSF Grants 1618689, DMS 1723052, CCF 1763702,
AF 1901292 and research gifts by Google, Western Digital and the Fluor Centennial Teaching Fellowship. C.D. acknowledges support of NSF Awards IIS-1741137, CCF-1617730 and CCF-1901292, a Simons Investigator Award, the DOE PhILMs project (No. DE-AC05-76RL01830), the DARPA award HR00111990021, a Google Faculty award, and the MIT Frank Quick Faculty Research and Innovation Fellowship.
\bibliographystyle{plain}
\bibliography{ref}

\begin{thebibliography}{39}
\providecommand{\natexlab}[1]{#1}
\providecommand{\url}[1]{\texttt{#1}}
\expandafter\ifx\csname urlstyle\endcsname\relax
  \providecommand{\doi}[1]{doi: #1}\else
  \providecommand{\doi}{doi: \begingroup \urlstyle{rm}\Url}\fi

\bibitem[Arjovsky et~al.(2017)Arjovsky, Chintala, and
  Bottou]{arjovsky2017wasserstein}
M.~Arjovsky, S.~Chintala, and L.~Bottou.
\newblock Wasserstein generative adversarial networks.
\newblock In \emph{International conference on machine learning}, pages
  214--223, 2017.

\bibitem[Arora et~al.(2017)Arora, Ge, Liang, Ma, and
  Zhang]{arora2017generalization}
S.~Arora, R.~Ge, Y.~Liang, T.~Ma, and Y.~Zhang.
\newblock Generalization and equilibrium in generative adversarial nets
  ({GAN}s).
\newblock In \emph{Proceedings of the 34th International Conference on Machine
  Learning-Volume 70}, pages 224--232. JMLR. org, 2017.

\bibitem[Arora et~al.(2018)Arora, Risteski, and Zhang]{arora2018gans}
S.~Arora, A.~Risteski, and Y.~Zhang.
\newblock Do {GAN}s learn the distribution? some theory and empirics.
\newblock 2018.

\bibitem[Bai et~al.(2018)Bai, Ma, and Risteski]{bai2018approximability}
Y.~Bai, T.~Ma, and A.~Risteski.
\newblock Approximability of discriminators implies diversity in {GAN}s.
\newblock \emph{arXiv preprint arXiv:1806.10586}, 2018.

\bibitem[Bora et~al.(2017)Bora, Jalal, Price, and Dimakis]{bora2017compressed}
A.~Bora, A.~Jalal, E.~Price, and A.~G. Dimakis.
\newblock Compressed sensing using generative models.
\newblock In \emph{Proceedings of the 34th International Conference on Machine
  Learning-Volume 70}, pages 537--546. JMLR. org, 2017.

\bibitem[Daniely et~al.(2016)Daniely, Frostig, and Singer]{daniely2016toward}
A.~Daniely, R.~Frostig, and Y.~Singer.
\newblock Toward deeper understanding of neural networks: The power of
  initialization and a dual view on expressivity.
\newblock In \emph{Advances In Neural Information Processing Systems}, pages
  2253--2261, 2016.

\bibitem[Danskin(2012)]{danskin2012theory}
J.~M. Danskin.
\newblock \emph{The theory of max-min and its application to weapons allocation
  problems}, volume~5.
\newblock Springer Science \& Business Media, 2012.

\bibitem[Daskalakis and Panageas(2018{\natexlab{a}})]{daskalakis2018last}
C.~Daskalakis and I.~Panageas.
\newblock Last-iterate convergence: Zero-sum games and constrained min-max
  optimization.
\newblock \emph{arXiv preprint arXiv:1807.04252}, 2018{\natexlab{a}}.

\bibitem[Daskalakis and Panageas(2018{\natexlab{b}})]{daskalakis2018limit}
C.~Daskalakis and I.~Panageas.
\newblock The limit points of (optimistic) gradient descent in min-max
  optimization.
\newblock In \emph{Advances in Neural Information Processing Systems}, pages
  9236--9246, 2018{\natexlab{b}}.

\bibitem[Daskalakis et~al.(2017)Daskalakis, Ilyas, Syrgkanis, and
  Zeng]{daskalakis2017training}
C.~Daskalakis, A.~Ilyas, V.~Syrgkanis, and H.~Zeng.
\newblock Training gans with optimism.
\newblock \emph{arXiv preprint arXiv:1711.00141}, 2017.

\bibitem[Daskalakis et~al.(2018)Daskalakis, Gouleakis, Tzamos, and
  Zampetakis]{DaskalakisGTZ18}
C.~Daskalakis, T.~Gouleakis, C.~Tzamos, and M.~Zampetakis.
\newblock Efficient statistics, in high dimensions, from truncated samples.
\newblock In \emph{the 59th {IEEE} Annual Symposium on Foundations of Computer
  Science ({FOCS})}, 2018.

\bibitem[Dumoulin et~al.(2016)Dumoulin, Belghazi, Poole, Mastropietro, Lamb,
  Arjovsky, and Courville]{dumoulin2016adversarially}
V.~Dumoulin, I.~Belghazi, B.~Poole, O.~Mastropietro, A.~Lamb, M.~Arjovsky, and
  A.~Courville.
\newblock Adversarially learned inference.
\newblock \emph{arXiv preprint arXiv:1606.00704}, 2016.

\bibitem[Feizi et~al.(2017)Feizi, Farnia, Ginart, and
  Tse]{feizi2017understanding}
S.~Feizi, F.~Farnia, T.~Ginart, and D.~Tse.
\newblock Understanding {GAN}s: the {LQG} setting.
\newblock \emph{arXiv preprint arXiv:1710.10793}, 2017.

\bibitem[Ge et~al.(2015)Ge, Huang, Jin, and Yuan]{ge2015escaping}
R.~Ge, F.~Huang, C.~Jin, and Y.~Yuan.
\newblock Escaping from saddle points—online stochastic gradient for tensor
  decomposition.
\newblock In \emph{Conference on Learning Theory}, pages 797--842, 2015.

\bibitem[Ge et~al.(2017)Ge, Lee, and Ma]{ge2017learning}
R.~Ge, J.~D. Lee, and T.~Ma.
\newblock Learning one-hidden-layer neural networks with landscape design.
\newblock \emph{arXiv preprint arXiv:1711.00501}, 2017.

\bibitem[Gidel et~al.(2019)Gidel, Hemmat, Pezeshki, Priol, Huang,
  Lacoste{-}Julien, and Mitliagkas]{GidelHPPHLM19}
G.~Gidel, R.~A. Hemmat, M.~Pezeshki, R.~L. Priol, G.~Huang,
  S.~Lacoste{-}Julien, and I.~Mitliagkas.
\newblock Negative momentum for improved game dynamics.
\newblock In \emph{the 22nd International Conference on Artificial Intelligence
  and Statistics ({AISTATS})}, 2019.

\bibitem[Goodfellow et~al.(2014)Goodfellow, Pouget-Abadie, Mirza, Xu,
  Warde-Farley, Ozair, Courville, and Bengio]{goodfellow2014generative}
I.~Goodfellow, J.~Pouget-Abadie, M.~Mirza, B.~Xu, D.~Warde-Farley, S.~Ozair,
  A.~Courville, and Y.~Bengio.
\newblock Generative adversarial nets.
\newblock In \emph{Advances in neural information processing systems}, pages
  2672--2680, 2014.

\bibitem[Gulrajani et~al.(2017)Gulrajani, Ahmed, Arjovsky, Dumoulin, and
  Courville]{gulrajani2017improved}
I.~Gulrajani, F.~Ahmed, M.~Arjovsky, V.~Dumoulin, and A.~C. Courville.
\newblock Improved training of wasserstein gans.
\newblock In \emph{Advances in neural information processing systems}, pages
  5767--5777, 2017.

\bibitem[Heusel et~al.(2017)Heusel, Ramsauer, Unterthiner, Nessler, and
  Hochreiter]{heusel2017gans}
M.~Heusel, H.~Ramsauer, T.~Unterthiner, B.~Nessler, and S.~Hochreiter.
\newblock Gans trained by a two time-scale update rule converge to a local nash
  equilibrium.
\newblock In \emph{Advances in Neural Information Processing Systems}, pages
  6626--6637, 2017.

\bibitem[Isola et~al.(2017)Isola, Zhu, Zhou, and Efros]{isola2017image}
P.~Isola, J.-Y. Zhu, T.~Zhou, and A.~A. Efros.
\newblock Image-to-image translation with conditional adversarial networks.
\newblock In \emph{Proceedings of the IEEE conference on computer vision and
  pattern recognition}, pages 1125--1134, 2017.

\bibitem[Jin et~al.(2019)Jin, Netrapalli, and Jordan]{jin2019minmax}
C.~Jin, P.~Netrapalli, and M.~I. Jordan.
\newblock Minmax optimization: Stable limit points of gradient descent ascent
  are locally optimal.
\newblock \emph{arXiv preprint arXiv:1902.00618}, 2019.

\bibitem[Journ{\'e}e et~al.(2008)Journ{\'e}e, Bach, Absil, and
  Sepulchre]{journee2008low}
M.~Journ{\'e}e, F.~Bach, P.-A. Absil, and R.~Sepulchre.
\newblock Low-rank optimization for semidefinite convex problems.
\newblock \emph{arXiv preprint arXiv:0807.4423}, 2008.

\bibitem[Karras et~al.(2017)Karras, Aila, Laine, and
  Lehtinen]{karras2017progressive}
T.~Karras, T.~Aila, S.~Laine, and J.~Lehtinen.
\newblock Progressive growing of gans for improved quality, stability, and
  variation.
\newblock \emph{arXiv preprint arXiv:1710.10196}, 2017.

\bibitem[Korpelevich(1976)]{korpelevich1976extragradient}
G.~Korpelevich.
\newblock The extragradient method for finding saddle points and other
  problems.
\newblock \emph{Matecon}, 12:\penalty0 747--756, 1976.

\bibitem[Ledig et~al.(2017)Ledig, Theis, Husz{\'a}r, Caballero, Cunningham,
  Acosta, Aitken, Tejani, Totz, Wang, et~al.]{ledig2017photo}
C.~Ledig, L.~Theis, F.~Husz{\'a}r, J.~Caballero, A.~Cunningham, A.~Acosta,
  A.~Aitken, A.~Tejani, J.~Totz, Z.~Wang, et~al.
\newblock Photo-realistic single image super-resolution using a generative
  adversarial network.
\newblock In \emph{Proceedings of the IEEE conference on computer vision and
  pattern recognition}, pages 4681--4690, 2017.

\bibitem[Lei et~al.(2017)Lei, Yen, Wu, Dhillon, and Ravikumar]{lei2017doubly}
Q.~Lei, I.~E.-H. Yen, C.-y. Wu, I.~S. Dhillon, and P.~Ravikumar.
\newblock Doubly greedy primal-dual coordinate descent for sparse empirical
  risk minimization.
\newblock In \emph{International Conference on Machine Learning}, pages
  2034--2042, 2017.

\bibitem[Lei et~al.(2019)Lei, Zhuo, Caramanis, Dhillon, and
  Dimakis]{lei2019primal}
Q.~Lei, J.~Zhuo, C.~Caramanis, I.~S. Dhillon, and A.~G. Dimakis.
\newblock Primal-dual block generalized frank-wolfe.
\newblock In \emph{Advances in Neural Information Processing Systems}, pages
  13866--13875, 2019.

\bibitem[Lei et~al.(2020)Lei, Nagarajan, Panageas, and Wang]{lei2020last}
Q.~Lei, S.~G. Nagarajan, I.~Panageas, and X.~Wang.
\newblock Last iterate convergence in no-regret learning: constrained min-max
  optimization for convex-concave landscapes.
\newblock \emph{arXiv preprint arXiv:2002.06768}, 2020.

\bibitem[Liang(2018)]{liang2018well}
T.~Liang.
\newblock On how well generative adversarial networks learn densities:
  Nonparametric and parametric results.
\newblock \emph{arXiv preprint arXiv:1811.03179}, 2018.

\bibitem[Liang and Stokes(2019)]{LiangS19}
T.~Liang and J.~Stokes.
\newblock Interaction matters: {A} note on non-asymptotic local convergence of
  generative adversarial networks.
\newblock In \emph{the 22nd International Conference on Artificial Intelligence
  and Statistics ( {AISTATS})}, 2019.

\bibitem[Lin et~al.(2019)Lin, Jin, and Jordan]{lin2019gradient}
T.~Lin, C.~Jin, and M.~I. Jordan.
\newblock On gradient descent ascent for nonconvex-concave minimax problems.
\newblock \emph{arXiv preprint arXiv:1906.00331}, 2019.

\bibitem[Mescheder et~al.(2017)Mescheder, Nowozin, and
  Geiger]{mescheder2017numerics}
L.~Mescheder, S.~Nowozin, and A.~Geiger.
\newblock The numerics of {GAN}s.
\newblock In \emph{Advances in Neural Information Processing Systems}, pages
  1825--1835, 2017.

\bibitem[Mescheder et~al.(2018)Mescheder, Geiger, and
  Nowozin]{mescheder2018training}
L.~Mescheder, A.~Geiger, and S.~Nowozin.
\newblock Which training methods for {GAN}s do actually converge?
\newblock \emph{arXiv preprint arXiv:1801.04406}, 2018.

\bibitem[Mokhtari et~al.(2019)Mokhtari, Ozdaglar, and
  Pattathil]{mokhtari2019unified}
A.~Mokhtari, A.~Ozdaglar, and S.~Pattathil.
\newblock A unified analysis of extra-gradient and optimistic gradient methods
  for saddle point problems: Proximal point approach.
\newblock \emph{arXiv preprint arXiv:1901.08511}, 2019.

\bibitem[Nagarajan and Kolter(2017)]{nagarajan2017gradient}
V.~Nagarajan and J.~Z. Kolter.
\newblock Gradient descent {GAN} optimization is locally stable.
\newblock In \emph{Advances in Neural Information Processing Systems}, pages
  5585--5595, 2017.

\bibitem[Tzeng et~al.(2017)Tzeng, Hoffman, Saenko, and
  Darrell]{tzeng2017adversarial}
E.~Tzeng, J.~Hoffman, K.~Saenko, and T.~Darrell.
\newblock Adversarial discriminative domain adaptation.
\newblock In \emph{Proceedings of the IEEE Conference on Computer Vision and
  Pattern Recognition}, pages 7167--7176, 2017.

\bibitem[Vershynin(2010)]{vershynin2010introduction}
R.~Vershynin.
\newblock Introduction to the non-asymptotic analysis of random matrices.
\newblock \emph{arXiv preprint arXiv:1011.3027}, 2010.

\bibitem[Wu et~al.(2019)Wu, Dimakis, and Sanghavi]{wu2019learning}
S.~Wu, A.~G. Dimakis, and S.~Sanghavi.
\newblock Learning distributions generated by one-layer {ReLU} networks.
\newblock \emph{arXiv preprint arXiv:1909.01812}, 2019.

\bibitem[Zhang et~al.(2018)Zhang, Liu, Zhou, Xu, and
  He]{zhang2018discrimination}
P.~Zhang, Q.~Liu, D.~Zhou, T.~Xu, and X.~He.
\newblock On the discrimination-generalization tradeoff in {GAN}s.
\newblock 2018.

\end{thebibliography}

\appendix 
\section{Omitted Proof for Hardness}
\begin{proof}[Proof of Theorem \ref{thm:hardness}]
We consider the problem:
\begin{equation*}
f(\bx,\by)=\phi(-A\bx+2\one)^\top \by_1 + (\phi(\one^\top\bx)+\phi(-\one^\top\bx)-n)y_2+\phi(\bx-1)^\top \by_3 +\phi(-\bx-1)^\top \by_4. 
\end{equation*}
It could be easily verified that $f$ falls into the problem set we consider with proper stacking of $\by_1,\by_3,\by_4$ and scalar $y_2$. We write it in this form for the ease for interpretation and reduction proof. First, notice if there exists a stationary point $\bx^*,\by^*$, $\nabla_{\by} f(\bx^*,\by^*)=0.$ Therefore each term on $\bx $ should be 0. One on hand, the last two terms $\phi(-\bx^*-1)=0$ and $\phi(\bx^*-1)=0$ makes sure that $x^*_i\in [-1,1]$. Then the second term that guarantees $\sum_i|x^*_i|=n$ means $x_i^*$ could only take binary values. Finally notice any 3SAT problem could be written as a matrix $A\in \R^{m\times d}$ where each row is 3-sparse and binary, and $\ba_i$ dot product with a binary vector could only take the value of $-3,-1,1,3$. And if the value is greater or equal to $-2$, it means the corresponding clause is satisfied. In fact, we note that $\phi(-A\bx^*+2\one)=0$ means that $A\bx^* \geq -2$ meaning each conjunction is satisfied. Therefore checking if there exists a stationary point is equivalent to answer the question whether 3SAT is satisfiable. 

\end{proof}

\section{Omitted Proof for Learning the Distribution}

\subsection{Stationary Point for Matching First Moment} \label{appendix:first lemma}
\begin{proof}[Proof of Lemma \ref{lemma:first_moment1}]
To start with, we consider odd-plus-constant monotone increasing activations. Notice that by proposing a rectified linear discriminator, we have essentially modified the activation function as $\tilde{\phi}:=R(\phi-C)$, where  $C=\frac{1}{2}(\phi(x)+\phi(-x))$ is the constant bias term of $\phi$.	Observe that we can rewrite the objective $\bar{f}_1$ for this case as follows:
$$ f_1(A,\bv)=	\E_{\bz\sim \Ncal(0,I_{k_0\times k_0})} \bv^\top\tilde{\phi}(A^*\bz) - \E_{\bz\sim \Ncal(0,I_{k\times k})} \bv^\top\tilde{\phi}(A\bz).$$
Moreover, notice that  $\tilde{\phi}$ 
is positive and increasing on its support which is $[0,+\infty)$. 
	

Now let us consider the other case in our statement where $\phi$ has a positive and monotone increasing even component in $[0,+\infty)$. In this case, let us take:
$$\tilde{\phi}(x)=\begin{cases}
\phi(x)+\phi(-x), &x\ge 0\\
0, &\text{o.w.}\end{cases}$$
Because of the symmetry of the Gaussian distribution, we can rewrite the objective function for this case as follows:
$$ f_1(A,\bv)=	\E_{\bz\sim \Ncal(0,I_{k_0\times k_0})} \bv^\top\tilde{\phi}(A^*\bz) - \E_{\bz\sim \Ncal(0,I_{k\times k})} \bv^\top\tilde{\phi}(A\bz).$$
Moreover, notice that  $\tilde{\phi}$ 
is positive and increasing on its support which is $[0,+\infty)$.


To conclude, in both cases, the optimization objective can be written as follows, where $\tilde{\phi}$ satisfies Assumption \ref{assump:activation}.2 and is only non-zero on $[0,+\infty)$. 
$$ f_1(A,\bv)=	\E_{\bz\sim \Ncal(0,I_{k_0\times k_0})} \bv^\top\tilde{\phi}(A^*\bz) - \E_{\bz\sim \Ncal(0,I_{k\times k})} \bv^\top\tilde{\phi}(A\bz).$$

	 The stationary points of the above objective satisfy:
\begin{equation}
\left\{\begin{array}{l}
\nabla_{\bv} f_1(A,\bv) = \E_{\bz\sim \Ncal(0,I_{k_0\times k_0})} \tilde{\phi}(A^*\bz) - \E_{\bz\sim \Ncal(0,I_{k\times k})} \tilde{\phi}(A\bz)=0,\\
\nonumber 
\nabla_{\ba_j} f_1(A,\bv) = -\E_{\bz\sim \Ncal(0,I_{k\times k})} v_j\tilde{\phi}'(\ba_j^\top \bz)\bz=0. 
\end{array} \right. 
\end{equation}
We focus on the gradient over $\bv$. To achieve $\nabla_{\bv} f_1(A,\bv)=0$, the stationary point satisfies:
$$\forall j,   \E_{\bz\sim \Ncal(0,I_{k_0\times k_0})}  \tilde{\phi}((\ba_j^*)^\top\bz) = \E_{\bz\sim \Ncal(0,I_{k\times k})} \tilde{\phi}(\ba_j^\top\bz), \text{ i.e.} $$
\begin{equation}
\label{eqn:match_variance1}
\forall j, \E_{x\sim \Ncal(0,\|\ba^*_j\|^2)}  \tilde{\phi}(x) = \E_{x' \sim \Ncal(0,\|\ba_j\|^2)} \tilde{\phi}(x').
\end{equation}


To recap, for activations $\phi$ that follow Assumption \ref{assump:activation}, in both cases we have written the necessary condition on stationary point to be Eqn. (\ref{eqn:match_variance1}), where $\tilde{\phi}$ is defined differently for odd or non-odd activations, but in both cases it is positive and monotone increasing on its support $[0,\infty)$. We then argue the only solution for Eqn. (\ref{eqn:match_variance1}) satisfies $\|\ba_j\|=\|\ba_j^*\|, \forall j$. This follows directly from the following claim:

\begin{claim}
\label{claim:increasing}
The function $h(\alpha):= \E_{x\sim \Ncal(0,\alpha^2)}f(x),\alpha>0 $ is a monotone increasing function if $f$ is positive and monotone increasing on its support $[0,\infty)$. 
\end{claim}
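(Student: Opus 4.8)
\textbf{Proof proposal for Claim~\ref{claim:increasing}.}
The plan is to show that the derivative $h'(\alpha)$ is strictly positive for $\alpha>0$. The natural first move is a change of variables: write $x=\alpha u$ with $u\sim\Ncal(0,1)$, so that $h(\alpha)=\E_{u\sim\Ncal(0,1)}f(\alpha u)=\int_{0}^{\infty}f(\alpha u)\,\varphi(u)\,du$, where $\varphi$ is the standard Gaussian density and we have used that $f$ is supported on $[0,\infty)$ and that $u>0$ with probability $1/2$ under this restriction. Differentiating under the integral sign (justified by the $1$-Lipschitz / smoothness-type regularity we may assume on $\tilde\phi$, or by monotone convergence since $f$ is monotone), we get $h'(\alpha)=\int_0^\infty u\,f'(\alpha u)\,\varphi(u)\,du\ge 0$ because $u\ge 0$ on the domain of integration and $f'\ge 0$ since $f$ is monotone increasing. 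To get \emph{strict} monotonicity one observes that $f$ is not constant on $[0,\infty)$ (it is positive and increasing, hence eventually strictly larger than its value near $0$), so $f'>0$ on a set of positive Lebesgue measure, making the integral strictly positive.

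An alternative route, which avoids differentiability assumptions on $f$ altogether, is a stochastic-dominance argument: for $0<\alpha_1<\alpha_2$, the random variable $|\Ncal(0,\alpha_2^2)|$ stochastically dominates $|\Ncal(0,\alpha_1^2)|$ (indeed $\alpha_2|u|$ pointwise dominates $\alpha_1|u|$ for the same standard normal $u$), and since $f$ restricted to $[0,\infty)$ is monotone increasing, $\E f(|\Ncal(0,\alpha_i^2)|)$ is monotone in $i$; combined with the fact that $f$ vanishes off $[0,\infty)$ and the symmetry of the Gaussian, $\E_{x\sim\Ncal(0,\alpha^2)}f(x)=\tfrac12\E f(|\Ncal(0,\alpha^2)|)$ up to the handling of the atom at $0$, which is harmless. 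Strictness again follows because $f$ is genuinely increasing (not a.e.\ constant) on its support, so the dominance is strict in expectation.

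Given the claim, the lemma follows immediately: Eqn.~(\ref{eqn:match_variance1}) says $h(\|\ba_j\|)=h(\|\ba_j^*\|)$ for the function $h$ built from $\tilde\phi$, and since $\tilde\phi$ is positive and monotone increasing on its support $[0,\infty)$ in both cases of Assumption~\ref{assump:activation}, the claim gives that $h$ is strictly monotone, hence injective on $(0,\infty)$, forcing $\|\ba_j\|=\|\ba_j^*\|$ for every $j$ (using $A^*\neq 0$ so that at least the nonzero rows are pinned down; rows with $\ba_j^*=0$ are forced to $\ba_j=0$ as well by the same injectivity together with $h(0)=f(0)\cdot$const being the minimum value).

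I expect the main obstacle to be purely technical rather than conceptual: carefully justifying differentiation under the integral (or, in the dominance approach, carefully treating the constant bias term $C$ and the atom/behavior of $\tilde\phi$ at the boundary point $0$ where it transitions from zero to positive), and making sure the argument delivers \emph{strict} monotonicity rather than just non-decrease, since injectivity of $h$ is what the lemma actually needs. The regularity assumptions on $\phi$ available in the paper (monotonicity, and later Lipschitz/smoothness) are more than enough to push this through, so the claim should go through cleanly.
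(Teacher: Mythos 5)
Your first argument is essentially the paper's own proof: substitute $x=\alpha u$ to reduce to a standard normal, differentiate in $\alpha$, and conclude positivity of $h'$ from the positivity and monotonicity of $f$ on $[0,\infty)$; your version actually keeps the Gaussian normalization $\tfrac{1}{\alpha\sqrt{2\pi}}$ that the paper's displayed computation drops (the paper thereby picks up an extra $\alpha f(\alpha y)$ term, which only makes strict positivity of $h'$ more immediate), and your remark that strictness needs $f$ to be non-constant on a set of positive measure is the right extra care. The stochastic-dominance alternative is a nice differentiability-free variant, but it is not needed; the claim goes through as in the paper.
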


We could see from Claim \ref{claim:increasing} that the LHS and RHS of Eqn. (\ref{eqn:match_variance1}) is simply $h(\|\ba_j\|)$ and $h(\|\ba_j^*\|)$ for each $j$. Now that $h$ is an monotone increasing function, the unique solution for $h(\|\ba_j\|)=h(\|\ba_j^*\|)$ is to match the norm: $\|\ba_j\|=\|\ba_j^*\|,\forall j$. 


\begin{proof}[Proof of Claim \ref{claim:increasing}]
\begin{eqnarray*}
h(\alpha)&=&\E_{x \sim \Ncal(0,\alpha^2)} f(x)\\
&=& \int_{0}^\infty f(x) e^{-\frac{x^2}{2\alpha^2}}dx\\
&\overset{y:=x/\alpha}{=}&  \int_{0}^\infty \alpha f(\alpha y) e^{-\frac{y^2}{2}}dy\\
&=& \E_{y \sim \Ncal(0,1)} \alpha f(\alpha y). 
\end{eqnarray*}	
Notice $h'(\alpha)= \E_{x \sim \Ncal(0,1)} [\alpha x f'(\alpha x)+f(\alpha x)]$. Since $f$, $f'$, and $\alpha>0$, and we only care about the support of $f$ where $x$ is also positive, therefore $h'$ is always positive and $h$ is monotone increasing. 
\end{proof}



To sum up, at stationary point where $\nabla f_1(A,\bv)=0$, we have
$$\forall i, \|\ba_i^*\|=\|\ba_i\|.$$


\end{proof}

\subsection{Proof of Theorem \ref{thm:marginal}}
\begin{proof}[Proof of Theorem \ref{thm:marginal}]
We will take optimal gradient ascent steps with learning rate $1$ on the discriminator side $\bv$, hence the function we will actually be optimizing over becomes (using the notation for $\tilde{\phi}$ from section~\ref{appendix:first lemma}):
\begin{equation*}
h(A)=\max_{\bv}f_1(A,\bv)=\frac{1}{2}\left\|\E_{\bz\sim \Ncal(0,I_{k_0\times k_0})} \tilde{\phi}(A^*\bz) - \E_{\bz\sim \Ncal(0,I_{k\times k})} \tilde{\phi}(A\bz)\right\|^2.
\end{equation*}
We just want to verify that there's no spurious local minimum for $h(A)$. Notice there's no interaction between each row vector of $A$. Therefore we instead look at each $h_i:=\frac{1}{2}\left(\E_{\bz\sim \Ncal(0,I_{k_0\times k_0})} \tilde{\phi}((\ba_i^*)^\top\bz) - \E_{\bz\sim \Ncal(0,I_{k\times k})} \tilde{\phi}(\ba_i^\top\bz)\right)^2 $ for each $i$. Now $\nabla h_i(\ba_i)=-\left(\E_{\bz\sim \Ncal(0,I_{k_0\times k_0})} \tilde{\phi}((\ba_i^*)^\top\bz) - \E_{\bz\sim \Ncal(0,I_{k\times k})} \tilde{\phi}(\ba_i^\top\bz)\right)(\E_{\bz\sim \Ncal(0,I_{k\times k})}\bz\tilde{\phi}'(\ba_i^\top\bz))$. 
Due to the symmetry of the Gaussian, we  take $\ba_i=a\be_1$, where $a=\|\ba_i\|$. It is easy to see that checking whether $\E_{\bz\sim \Ncal(0,I_{k\times k})}\bz\tilde{\phi}'(\ba_i^\top\bz)=0$ is  equivalent to checking whether $\E_{z_1\sim \Ncal(0,1)}z_1\tilde{\phi}'(az_1)=0$. 

Recall that $\tilde{\phi}$ is supported on $[0,+\infty)$ and it is monotonically increasing on its support. Hence, $\E_{z_1\sim \Ncal(0,1)}z_1\tilde{\phi}'(az_1)\neq 0$ unless $a=0$. Hence, suppose $\|\ba_i\|\neq 0,\forall i$. Then $\nabla_A h(A)=0$ iff $h(A)=0$, i.e. $\E_{\bz\sim \Ncal(0,I_{k_0\times k_0})} \tilde{\phi}(A^*\bz) = \E_{\bz\sim \Ncal(0,I_{k\times k})} \tilde{\phi}(A\bz)$. 


Therefore all stationary points of $h(A)$ are global minima where $\E_{\bz\sim \Ncal(0,I_{k_0\times k_0})} \tilde{\phi}(A^*\bz) = \E_{\bz\sim \Ncal(0,I_{k\times k})} \tilde{\phi}(A\bz)$ and according to Lemma \ref{lemma:first_moment1}, this only happens when $\|\ba_i\|=\|\ba_i^*\|,\forall i\in[d]$. 
\end{proof}

\subsection{Stationary Points for WGAN with Quadratic Discriminator}

\begin{proof}[Proof of Lemma \ref{lemma:second_moment1}]
To study the stationary point for $\tilde{g}(Z)=\sum_{jk} \tilde{g}_{jk}(z_{jk})$, we look at each individual $\tilde{g}_{jk}(z) \equiv \frac{1}{2}(\sum_{i=0}^\infty \sigma_i^2((z_{jk}^*)^i-z^i))^2$. 

Notice for odd-plus-constant activations, $\sigma_i$ is zero for even $i>0$. Recall our assumption in Lemma \ref{lemma:second_moment1} also requires that $\sigma_1\neq 0$. Since the analysis is invariant to the which entry of matrix $Z$ we are studying, we simplify the notation here and study the stationary points of $f(a)=\frac{1}{2}(\sum_{i\text{ odd}} \sigma_i^2(a^i-b^i))^2$ for some constants $b$ and $\sigma_i$, where $\sigma_1\neq 0$.\footnote{The zero component has been cancelled out.} 
\begin{eqnarray*}
	f'(a) &=& \left(\sum_{i\text{ odd}} \sigma_i^2(a^i-b^i)\right)\left(\sum_{i\text{ odd}}i\sigma_i^2 a^{i-1} \right)\\
	&=& (a-b)\left(\sigma_1^2+\sum_{i\geq 3\text{ odd}} \sigma_i^2\frac{a^i-b^i}{a-b}\right)\left(\sigma_1^2 + \sum_{i\geq 3\text{ odd}}i\sigma_i^2 a^{i-1} \right)\\
	&=&(a-b)(\text{I})(\text{II}).
\end{eqnarray*}
Notice now $f'(a)=0\Leftrightarrow a=b$. This is because the polynomial $f'(a)$ is factorized to $a-b$ and two factors I and II that are always positive. Notice here we use $\frac{a^i-b^i}{a-b}$ to denote $\sum_{j=0}^i a^jb^{i-j}$, which is always nonnegative. This is simply because $a^i-b^i$ always shares the same sign as $a-b$ when $i$ is odd. Therefore I=$\sigma_1^2+\sum_{i\geq 3\text{ odd}} \sigma_i^2\frac{a^i-b^i}{a-b}>0, \forall a$. 

Meanwhile, since $a^{i-1}$ is always nonnegative for each odd $i$, we have II$=\sigma_1^2 + \sum_{i\geq 3\text{ odd}}i\sigma_i^2 a^{i-1}$ is also always positive for any $a$.

Next, for activation like ReLU, loss $\tilde{g}_{jk}(z) = \frac{1}{2}(h(z)-h(z_{jk}^*))^2$, where $h(x)= \frac{1}{\pi}(\sqrt{1-x^2}+(\pi-cos^{-1}(x))x)$ \citep{daniely2016toward}. Therefore $h'(-1)=0$ for any $z_{jk}^*$. This fact prevents us from getting the same conclusion for ReLU. 

However, for leaky ReLU with coefficient of leakage $\alpha\in (0,1)$, $\phi(x)=\max\{x,\alpha x\}=(1-\alpha)\relu(x)+\alpha x$. 

We have 
\begin{align*}
&\E_{\bz\sim\Ncal(0,I_{k\times k}} \left[\phi(\ba_i^\top \bz)\phi(\ba_j^\top \bz)\right]\\
=& (1-\alpha)^2\E_{\bz}  \relu(\ba_i^\top \bz)\relu(\ba_j^\top \bz) + (1-\alpha)\alpha\E_{\bz}  \relu(\ba_i^\top \bz)\ba_j^\top \bz\\
& + (1-\alpha)\alpha\E_{\bz}  \ba_i^\top \bz\relu(\ba_j^\top \bz) +\alpha^2 \E_{\bz} \ba_i^\top \bz \ba_j^\top \bz\\
=&(1-\alpha)^2h(\ba_i^\top \ba_j) + \alpha \ba_i^\top \ba_j 
\end{align*}
Therefore for leaky ReLU $\tilde{g}_{jk}(z)=\frac{1}{2}((1-\alpha)^2(h(z)-h(z_{jk^*}))+\alpha(z-z_{jk}^*))^2 $, and $\tilde{g}_{jk}'(z)=((1-\alpha)^2(h(z)-h(z_{jk^*}))+\alpha(z-z_{jk}^*))((1-\alpha)^2h'(z)+\alpha ). $ Now with $\alpha>0$, $ (1-\alpha)^2h'(z)+\alpha\geq \alpha$ for all $z$ and $\tilde{g}_{jk}(z)=0\Leftrightarrow z=z_{jk}^*$. 

To sum up, for odd activations and leaky ReLU, since each $\tilde{g}_{jk}(z)$ only has stationary point of $z=z_{jk}^*$, the stationary point $Z$ of $\tilde{g}(Z)=\sum_{jk}\tilde{g}_{jk}$ also satisfy $Z=Z^*=A^*(A^*)^\top$. 

\end{proof}

\begin{proof}[Proof of Theorem \ref{thm:second_moment}]

Instead of directly looking at the second-order stationary point of Problem \ref{problem:unit_covariance}, we look at the following problem on its reparametrized version:

\begin{problem}
	\begin{eqnarray*}
		&\min_{Z}& \left\{\tilde{g}(Z)=\frac{1}{2}\left\|\sum_{i=0}^\infty \sigma_i^2\left((Z^*)^{\circ i} - Z^{\circ i}\right)\right\|^2_F\right\}\\
		&\text{s.t. }& z_{ii}=1,\forall i.\\
		&& Z \succeq 0. 
	\end{eqnarray*}	
Here $Z^*=A^*(A^*)^\top$ and satisfies $z^*_{ii}=1,\forall i$. 
	\label{problem:reparametrize}
\end{problem}
Compared to function $g$ in the original problem \ref{problem:unit_covariance}, it satisfies that $\tilde{g}(AA^\top)\equiv g(A)$. 

A matrix $Z$ satisfies the first-order stationary point for Problem \ref{problem:reparametrize} if there exists a vector $\sigma$ such that:
\begin{eqnarray*}
\left\{
\begin{array}{l}
z_{ii}=1,\\
Z\succeq 0,\\
S\succeq 0,\\
SZ = 0,\\
S=\nabla_Z g(Z)-\text{diag}(\sigma).
\end{array}\right.
\end{eqnarray*}

Therefore for a stationary point $Z$, since $Z^*=A^*(A^*)^\top \succeq 0,$ and $S\succeq 0$, we have $\langle S, Z^*-Z\rangle  = \langle S, Z^*\rangle \geq 0$. 
Meanwhile, 
\begin{align*}
&\langle Z^*-Z, S\rangle \\
=&\langle Z^*-Z,\nabla_Z f(Z) - \text{diag}(\sigma)\rangle \\
=& \langle Z^*-Z, \nabla_Z f(Z) \rangle  \shortrnote{($\Diag(Z^*-Z)=0$) }\\
=& \sum_{i,j} (z^*_{ij}-z_{ij})g'_{ij}(z_{ij})\\
=& \sum_{i,j} (z_{ij}-z^*_{ij})P(z_{ij})(z^*_{ij}-z_{ij}) \longrnote{(Refer to proof of Lemma \ref{lemma:second_moment1} for the value of $g'$)}\\
=& -\sum_{ij} (z_{ij}-z^*_{ij})^2P(z_{ij})\\
\leq & 0 \shortrnote{($P$ is always positive)}
\end{align*}
Therefore $\langle S, Z^*-Z\rangle =0$, and this only happens when $Z=Z^*$. 

Finally, from \cite{journee2008low} we know that any first-order stationary point for Problem \ref{problem:reparametrize} is a second-order stationary point for our original problem \ref{problem:unit_covariance} \footnote{Throughout the analysis for low rank optimization in \cite{journee2008low}, they require function $\tilde{g}(Z)$ to be convex. However, by carefully scrutinizing the proof, one could see that this condition is not required in building the connection of first-order and second-order stationary points of $g(A)$ and $\tilde{g}(Z)$. For more cautious readers, we also show a relaxed version in the next section, where the equivalence of SOSP of $g$ and FOSP of $\tilde{g}$ is a special case of it.}. Therefore we conclude that all second-order stationary point for Problem \ref{problem:unit_covariance} are global minimum $A$: $AA^\top = A^*(A^*)^\top$.
\end{proof}

\subsection{Landscape Analysis for Non-unit Generating Vectors}
In the previous argument, we simply assume that the norm of each generating vectors $\ba_i$ to be 1. This practice simplifies the computation but is not practical. Since we are able to estimate $\|\ba_i\|$ for all $i$ first, we could analyze the landscape of our loss function for general matrix $A$. 

The main tool is to use the multiplication theorem of Hermite functions:
\begin{equation*}
h^{\alpha}_n(x):=h_n(\alpha x)=\sum_{i=0}^{\floor{\frac{n}{2}}}\alpha^{n-2i}(\alpha^2-1)^i\binom{n}{2i}\frac{(2i)!}{i!}2^{-i}h_{n-2i}(x).
\end{equation*}
For the ease of notation, we denote the coefficient as $\eta_{\alpha}^{n,i}:=\alpha^{n-2i}(\alpha^2-1)^i\binom{n}{2i}\frac{(2i)!}{i!}2^{-i}$. 
We extend the calculations for Hermite inner product for non-standard distributions. 
\begin{lemma}
	\label{lemma:hermite1}
	Let $(x, y)$ be normal variables that follow joint distribution $\Ncal(0, [[\alpha^2,\alpha\beta\rho];[\alpha\beta\rho,\beta^2]])$. Then,
	\begin{equation}
	\E [h_m(x)h_n(y)] =\left\{\begin{array}{cc}
	\sum_{i=0}^{\floor{\frac{l}{2}}} \eta_{\alpha}^{l,i}\eta_{\beta}^{l,i} \rho^{l-2i} & \text{ if }m\equiv n ~(\text{mod }2)\\
	0 & \text{o.w.}
	\end{array} \right.
	\end{equation}
Here $l=\min\{m,n\}$.	
\end{lemma}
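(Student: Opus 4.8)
The plan is to reduce to the standard correlated bivariate Gaussian by rescaling, expand each Hermite factor via the multiplication theorem stated just above, and then invoke the orthogonality of Hermite polynomials under a correlated Gaussian. By symmetry of the claimed formula under $(m,\alpha)\leftrightarrow(n,\beta)$ we may assume $m\le n$, so $l=m$.

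First I would write $x=\alpha u$ and $y=\beta v$, where $(u,v)$ is the standard bivariate normal with unit marginals and correlation $\rho$; this is a faithful representation since $\Var(x)=\alpha^2$, $\Var(y)=\beta^2$, and $\Cov(x,y)=\alpha\beta\rho$. Applying $h_m(\alpha u)=\sum_{i=0}^{\floor{m/2}}\eta_\alpha^{m,i}h_{m-2i}(u)$ and $h_n(\beta v)=\sum_{j=0}^{\floor{n/2}}\eta_\beta^{n,j}h_{n-2j}(v)$ and using linearity of expectation gives
\begin{equation*}
\E[h_m(x)h_n(y)]=\sum_{i=0}^{\floor{m/2}}\sum_{j=0}^{\floor{n/2}}\eta_\alpha^{m,i}\eta_\beta^{n,j}\,\E_{(u,v)}\!\left[h_{m-2i}(u)h_{n-2j}(v)\right].
\end{equation*}

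Next I would record the elementary identity $\E_{(u,v)}[h_p(u)h_q(v)]=\rho^p\,\one[p=q]$ for standard bivariate normals with correlation $\rho$. The diagonal case $p=q$ is exactly the claim of \citet{ge2017learning} quoted above, applied with $\phi=h_p$ and unit vectors of inner product $\rho$; the off-diagonal case $p\neq q$ follows by applying that same claim to $\phi=h_p+h_q$ (whose Hermite coefficients are $1$ at $p$ and $q$) and subtracting the two diagonal contributions, using $\E[h_p(u)h_q(v)]=\E[h_q(u)h_p(v)]$ by symmetry.

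Substituting this identity, a term survives only when $m-2i=n-2j$. If $m\not\equiv n\ (\mathrm{mod}\ 2)$ this is impossible for all $i,j$, so the expectation is $0$, which is the second case. If $m\equiv n\ (\mathrm{mod}\ 2)$, set $t=(n-m)/2\ge 0$; the constraint becomes $j=i+t$, and one checks that as $i$ ranges over $0,\dots,\floor{l/2}$ the pairs $(i,i+t)$ sweep out exactly the admissible index pairs (i.e.\ $m-2i\ge 0$ and $i+t\le\floor{n/2}$), giving $\sum_{i=0}^{\floor{l/2}}\eta_\alpha^{m,i}\eta_\beta^{n,i+t}\rho^{m-2i}$. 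The main obstacle is the final algebraic step: using the explicit closed form of the multiplication-theorem coefficients to rewrite $\eta_\alpha^{m,i}\eta_\beta^{n,i+t}$ as the symmetric product $\eta_\alpha^{l,i}\eta_\beta^{l,i}$ and to confirm the index ranges coincide; the remaining ingredients (the rescaling and the orthogonality identity) are routine.
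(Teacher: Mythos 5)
Your route is exactly the paper's: rescale to $\hat{x}=x/\alpha$, $\hat{y}=y/\beta$, expand both factors with the multiplication theorem, and kill cross terms with $\E[h_p(\hat{x})h_q(\hat{y})]=\rho^p\,\one[p=q]$. Up to the collection step your argument is correct, and in two places it is actually more careful than the paper: you justify the cross-term orthogonality from the quoted claim of \cite{ge2017learning} (the trick with $\phi=h_p+h_q$ works; the paper's citation at that step is a circular self-reference), and you track the surviving indices honestly, arriving at $\sum_{i=0}^{\floor{l/2}}\eta_{\alpha}^{m,i}\eta_{\beta}^{n,i+t}\,\rho^{m-2i}$ with $t=(n-m)/2$ when $m\le n$, including the correct range check.

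However, the step you flag as "the main obstacle" is a genuine gap, and it cannot be closed: $\eta_{\alpha}^{m,i}\eta_{\beta}^{n,i+t}$ is \emph{not} equal to $\eta_{\alpha}^{l,i}\eta_{\beta}^{l,i}$ when $m\neq n$. A concrete check: take $m=0$, $n=2$, so $l=0$ and the stated formula gives $\eta_{\alpha}^{0,0}\eta_{\beta}^{0,0}=1$, whereas a direct computation gives $\E[h_0(x)h_2(y)]=\E[h_2(y)]=\eta_{\beta}^{2,1}$, which equals $\beta^2-1$ (up to the normalization of $h_2$) and depends on $\beta$. More generally the product $\eta_{\alpha}^{m,i}\eta_{\beta}^{n,i+t}$ carries a factor $(\beta^2-1)^{t}$ and different binomial weights, so the symmetric form holds only when $m=n$ (i.e.\ $t=0$) or the larger-degree variable has unit scale. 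The paper's own proof silently asserts this last equality, so your stopping point is precisely where its argument is unjustified; the correct conclusion of your derivation is the asymmetric formula $\E[h_m(x)h_n(y)]=\sum_{i=0}^{\floor{l/2}}\eta_{\alpha}^{m,i}\eta_{\beta}^{n,i+(n-m)/2}\,\rho^{l-2i}$ for $m\le n$ of equal parity (and $0$ otherwise), which reduces to the stated lemma when $m=n$. Rather than trying to force the symmetric product, you should finish by recording this corrected coefficient (and, if one wants the lemma as used downstream, note that the later computation of the coefficients $c_k$ should be adjusted accordingly).
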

\begin{proof}
	Denote the normalized variables $\hat{x}=x/\alpha$, $\hat{y}=y/\beta$. Let $l=\min\{m,n\}$.
\begin{align*}
&\E [h_m(x)h_n(y)]\\
 =& \E [h_m^{\alpha}(\hat{x})h_n^{\beta}(\hat{y})] \\
=& \sum_{i=0}^{\floor{\frac{m}{2}}}\sum_{j=0}^{\floor{\frac{n}{2}}} \eta_{\alpha}^{m,i}\eta_{\beta}^{n,j}\E [h_{m-2i}(\hat{x})h_{n-2j}(\hat{y})]\\
=& \sum_{i=0}^{\floor{\frac{m}{2}}}\sum_{j=0}^{\floor{\frac{n}{2}}} \eta_{\alpha}^{m,i}\eta_{\beta}^{n,j} \delta_{(m-2i),(n-2j)}\rho^{n-2j} \shortrnote{(Lemma \ref{lemma:hermite1})}\\
=&\left\{\begin{array}{cc}
\sum_{i=0}^{\floor{\frac{l}{2}}}\eta_{\alpha}^{l,i}\eta_{\beta}^{l,i} \rho^{l-2i} & \text{ if }m\equiv n \text{ (mod 2)}\\
0 & \text{ o.w.}
\end{array} \right. .
\end{align*}	
\end{proof}

Now the population risk becomes
\begin{align*}
g(A) =& 
\frac{1}{2}\left\|\E_{\bx\sim \Dcal} \left[ \bx\bx^\top\right]-\E_{\bz\sim \Ncal(0,I_{k\times k})}\left[\phi(A\bz)\phi(A\bz)^\top\right] \right\|^2 \\
=& \frac{1}{2}\sum_{i,j\in[d]} \left(\E_{\bz\sim \Ncal(0,I_{k_0\times k_0})} \phi((\ba_i^*)^\top \bz)\phi((\ba_j^*)^\top \bz) -\E_{\bz\sim \Ncal(0,I_{k\times k})}\phi(\ba_i^\top \bz)\phi(\ba_j^\top \bz) \right)^2\\
\equiv & \frac{1}{2} \sum_{i,j} \tilde{g}_{ij}(z_{ij}). 
\end{align*}
To simplify the notation, for a specific $i,j$  pair, we write $\hat{x}=\ba_i^\top \bz/\alpha$, $\alpha=\|\ba_i\|$ and $\hat{y}=\ba_j^\top \bz/\beta$, where $\beta=\|\ba_j\|$. Namely we have $(\hat{x},\hat{y})
\sim \Ncal(0,[[1,\rho ];[\rho ,1]])$, where $\rho=\cos\langle \ba_i, \ba_j\rangle$.  Again, recall $\phi(\alpha \hat{x} )=\sum_{k \text{ odd}}\sigma_ih_i(\alpha \hat{x}) = \sum_{k \text{ odd}}\sigma_ih_i^{\alpha}(\hat{x})$.

\begin{align*}
&\E_{\bz\sim \Ncal(0,I_{k\times k})}[\phi(\alpha \hat{x})\phi(\beta \hat{y} )]\\
 = & \E\left[\sum_{m \text{ odd}} \sigma_m h_m^{\alpha}(\hat{x} ) \sum_{n \text{ odd}}\sigma_n h_n^{\beta }(\hat{y} )\right]  \\
 =&\sum_{m,n \text{ odd}}\sigma_m\sigma_n\E_{S}[ h_m^{\alpha}(\hat{x} )h_n^{\beta}(\hat{y} )] \\
 =& \sum_{m\text{ odd}}\sigma_m\sum_{n\leq m \text{ odd}}\sigma_n \sum_{k=0}^{\floor{\frac{n}{2}}}\eta_{\alpha}^{n,k}\eta_{\beta}^{n,k} \rho^{n-2k}
\end{align*}
Therefore we could write out explicitly the coefficient for each term $\rho^k, k$ odd, as:
$c_k = \sum_{n\geq k \text{ odd}}\sigma_n \eta_{\alpha}^{n,\frac{n-k}{2}}\eta_{\beta}^{n,\frac{n-k}{2}}(\sum_{m \geq n} \sigma_m)$. We have $\tilde{g}_{ij}(z_{ij})=(\sum_{k \text{ odd}}c_k z_{ij}^k - \sum_{k \text{ odd}}c_k (z^*_{ij})^k)^2$.

Now suppose $\sigma_i$ to have the same sign, and $\|\alpha_i\|\geq 1,\forall $ or $\|\alpha_i\|\leq 1,\forall i$, each coefficient $c_i\geq 0$. Therefore still the only stationary point for $g(Z)$ is $Z^*$.

\section{Omitted Proofs for Sample Complexity}
\subsection{Omitted Proofs for Relation on Approximate Stationary Points}
\begin{proof}[Proof of Lemma \ref{lemma:optimality}]
We first review what we want to prove.  
For a matrix $A$ that satisfies $\epsilon$-approximate SOSP for Eqn. (\ref{eqn:origin}), we define $S_A = \nabla_Z \tilde{g}(AA^\top) - \sum_{i=1}^n \lambda_i X_i$. The conditions ensure that $A, \lambda, S_A$ satisfy: 
\begin{equation}
\label{eqn:trace_is_0}
\left\{\begin{array}{ll}
\Tr(A^\top X_i A)=y_i,&\\
\|S_A \tilde{\ba}_i\|_2\leq \epsilon\|\tilde{\ba}_i\|_2,&  \{\tilde{\ba}_j\}_j \text{ span the column space of }A\\
\Tr(B^\top D_A \nabla_A \Lcal(A,\lambda)[B]) \geq -\epsilon \|B\|^2_F, &\forall B \text{ s.t. } \Tr(B^\top X_i A) =0.
\end{array} \right.
\end{equation}
We just want to show $Z:=AA^\top, \sigma:=\lambda$, and $S:=S_A$ satisfies the conditions for $\epsilon$-FOSP of Eqn. (\ref{eqn:sdo}). 
Therefore, by going over the conditions, its easy to tell that all other conditions automatically apply and it remains to show $S_A\succeq -\epsilon I$. 

By noting that $\nabla_A \Lcal(A,\lambda)=2S_A A$, one has:
\begin{align}
\nonumber 
&\frac{1}{2}\Tr(B^\top D_A \nabla_A \Lcal(A,\lambda)[B])\\
\nonumber 
=& \Tr(B^\top S_A B) + \Tr(B^\top D_A\nabla_Z \tilde{g}(AA^\top)[B]A)-\sum_{i=1}^n D_A \lambda_i[B]\Tr(B^\top X_i A) \longrnote{(from Lemma 5 of \cite{journee2008low})}\\
\label{eqn:sosp_expand}
=& \Tr(B^\top S_A B) + \Tr(AB^\top D_A\nabla_Z \tilde{g}(AA^\top)[B]) \longrnote{(From Eqn. (\ref{eqn:trace_is_0}) we have $\Tr(B^\top X_i A) =0$)}
\end{align}
Notice that $A\in \R^{d\times k}$ and we have chosen $k=d$ for simplicity. We first argue when $A$ is rank-deficient, i.e. rank$(A)<k$. There exists some vector $\bv\in \R^k$ such that $A\bv=0$. Now for any vector $\bb\in \R^d$, let $B=\bb\bv^\top$. Therefore $AB^\top = A\bv\bb^\top =0$. From (\ref{eqn:sosp_expand}) we further have:
\begin{align*}
&\frac{1}{2}\Tr(B^\top D_A \nabla_A \Lcal(A,\lambda)[B])\\
=& \Tr(B^\top S_A B) + \Tr(AB^\top D_A\nabla_Z \tilde{g}(AA^\top)[B])\\
=&\Tr(\bv \bb^\top S_A \bb\bv^\top)=\|\bv\|^2 \bb^\top S_A\bb \\ 
\geq& -\epsilon/2\|B\|_F^2  \shortrnote{(from (\ref{eqn:trace_is_0}))}\\
=&-\epsilon/2 \|\bv\|^2\|\bb^\top\|^2
\end{align*}
Therefore from the last three rows we have $\bb^\top S_A \bb \geq -\epsilon/2 \|\bb\|^2$ for any $\bb$, i.e. $S_A\succeq -\epsilon/2 I_{d\times d}$. On the other hand, when $A$ is full rank, the column space of $A$ is the entire $\R^d$ vector space, and therefore $S_A\succeq -\epsilon I_{d\times d}$ directly follows from the second line of the $\epsilon$-SOSP definition. 

\end{proof} 

\subsection{Detailed Calculations}
Recall the population risk 
\begin{eqnarray*}
	g(A)&\equiv &
	\frac{1}{2}\left\|\E_{\bx\sim \Dcal} \left[ \bx\bx^\top\right]-\E_{\bz\sim \Ncal(0,I_{k\times k})}\left[\phi(A\bz)\phi(A\bz)^\top\right] \right\|^2_F.
\end{eqnarray*}
Write the empirical risk on observations as:
\begin{eqnarray*}
	g_n(A)&\equiv &
	\frac{1}{2}\left\|\frac{1}{n}\sum_{i=1}^n \bx_i\bx_i^\top -\E_{\bz\sim \Ncal(0,I_{k\times k})}\left[\phi(A\bz)\phi(A\bz)^\top\right] \right\|^2_F.
\end{eqnarray*}

\begin{claim}
	\begin{equation*}
	\nabla g(A) - \nabla g_n(A) = 2 \E_{\bz}\left[\Diag(\phi'(A\bz))(X-X_n)\phi(A\bz)\bz^\top \right],
	\end{equation*}
	where $X=\E_{\bx\sim \Dcal}[\bx\bx^\top]$, and $X_n = \frac{1}{n}\sum_{i=1}^n\bx_i\bx_i^\top$. 
\end{claim}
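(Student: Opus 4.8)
The claim to prove is the gradient formula
\[
\nabla g(A) - \nabla g_n(A) = 2\,\E_{\bz}\!\left[\Diag(\phi'(A\bz))(X-X_n)\phi(A\bz)\bz^\top\right],
\]
where $X = \E_{\bx\sim\Dcal}[\bx\bx^\top]$ and $X_n = \frac1n\sum_i \bx_i\bx_i^\top$. The plan is a direct computation: both $g$ and $g_n$ have the form $\frac12\|M - \Sigma(A)\|_F^2$ where $\Sigma(A) = \E_{\bz}[\phi(A\bz)\phi(A\bz)^\top]$ is common to both, and only the constant matrix $M$ differs ($M=X$ for $g$, $M=X_n$ for $g_n$). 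So the difference of gradients only involves the cross term that is linear in $M$, and the purely-$\Sigma$ part cancels.

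First I would compute $\nabla_A$ of the generic function $h_M(A) = \frac12\|M-\Sigma(A)\|_F^2$. Using the chain rule, the directional derivative in direction $B$ is $Dh_M(A)[B] = -\langle M-\Sigma(A),\, D\Sigma(A)[B]\rangle$. Subtracting the instances $M=X$ and $M=X_n$, the $\Sigma(A)$-only terms cancel and we are left with $D(g-g_n)(A)[B] = -\langle X - X_n,\, D\Sigma(A)[B]\rangle$; note the sign works out because $Dg[B]-Dg_n[B] = -\langle X,D\Sigma[B]\rangle + \langle X_n, D\Sigma[B]\rangle = -\langle X-X_n, D\Sigma[B]\rangle$, and since $X-X_n$ will appear, let me track the sign carefully at the end. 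Next I would compute $D\Sigma(A)[B]$: since $\Sigma(A) = \E_{\bz}[\phi(A\bz)\phi(A\bz)^\top]$, differentiating the product gives
\[
D\Sigma(A)[B] = \E_{\bz}\!\left[(\phi'(A\bz)\circ (B\bz))\,\phi(A\bz)^\top + \phi(A\bz)\,(\phi'(A\bz)\circ(B\bz))^\top\right],
\]
using that $D(\phi(A\bz))[B] = \phi'(A\bz)\circ (B\bz) = \Diag(\phi'(A\bz))B\bz$.

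Then I would substitute into the inner product. Writing $\Delta = X - X_n$ (symmetric), we get
\[
\langle \Delta, D\Sigma(A)[B]\rangle = \E_{\bz}\!\left[ 2\,(\Diag(\phi'(A\bz))B\bz)^\top \Delta\, \phi(A\bz)\right]
\]
by symmetry of $\Delta$ (the two terms in $D\Sigma$ contribute equally). Rewriting the scalar $(\Diag(\phi'(A\bz))B\bz)^\top \Delta \phi(A\bz)$ as $\langle B,\, \Diag(\phi'(A\bz))\Delta\,\phi(A\bz)\,\bz^\top\rangle$ via the identity $\bu^\top B \bz\,c = \langle B, \bu c\,\bz^\top\rangle$ — more precisely $(\Diag(\phi'(A\bz))B\bz)^\top(\Delta\phi(A\bz)) = \Tr\big(\bz^\top B^\top \Diag(\phi'(A\bz))\Delta\phi(A\bz)\big) = \langle B, \Diag(\phi'(A\bz))\Delta\phi(A\bz)\bz^\top\rangle$ — I read off that
\[
\nabla(g-g_n)(A) = -2\,\E_{\bz}\!\left[\Diag(\phi'(A\bz))(X-X_n)\phi(A\bz)\bz^\top\right],
\]
and fixing the sign convention ($\nabla g - \nabla g_n$ picks up $+\Delta = X-X_n$ since $-\langle X,\cdot\rangle+\langle X_n,\cdot\rangle$ would give $-(X-X_n)$; reconciling with the claimed statement one uses $\nabla g - \nabla g_n$ corresponds to $\Delta = X - X_n$ entering with the stated sign because the gradient of $-\langle M,\Sigma(A)\rangle$ is $-\,D\Sigma^*[M]$ and the relevant combination is $(-X)-(-X_n)=X_n-X$, wait — I'd reconcile this carefully so the final answer matches). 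The only genuinely fiddly part is bookkeeping the Hadamard-product/$\Diag$ manipulations and the sign; there is no conceptual obstacle, since everything is an elementary chain-rule plus trace-cyclicity computation. I would close by noting the second displayed identity in the claim (the directional-derivative-of-gradient formula) follows by differentiating this expression once more in a second direction $B$, again by the product rule, now picking up the $\phi''$ term from differentiating $\phi'(A\bz)$.
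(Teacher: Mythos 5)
Your proposal is correct and takes essentially the same route as the paper's own proof: the $\Sigma$-only terms cancel, you linearize $\Sigma(A)=\E_{\bz}[\phi(A\bz)\phi(A\bz)^\top]$ in the direction $B$, and use symmetry of $X-X_n$ plus trace cyclicity to read off the gradient. The sign you hesitated over is not an error on your part: your computation yields $2\,\E_{\bz}[\Diag(\phi'(A\bz))(X_n-X)\phi(A\bz)\bz^\top]$, which is exactly what the paper's appendix derivation also produces, so the claim as displayed simply carries a flipped sign relative to its own proof --- immaterial for the subsequent lemmas, which only use the norm of this difference.
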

\begin{proof}
\begin{align*}
& \nabla g(A)-\nabla g_n(A) = \nabla(g(A)-g_n(A))\\
 =& \frac{1}{2}\nabla\left\langle X-X_n, X+X_n-2\E_{\bz\sim \Ncal(0,I_{k\times k})}\left[\phi(A\bz)\phi(A\bz)^\top\right] \right\rangle\\
 =& \nabla\left\langle X_n-X, \E_{\bz\sim \Ncal(0,I_{k\times k})}\left[\phi(A\bz)\phi(A\bz)^\top\right] \right\rangle
\end{align*}
Now write $S(A)=\phi(A\bz)\phi(A\bz)^\top$. 
\begin{align*}
& \left[S(A+\Delta A)-S(A)\right]_{ij}\\
=& \phi(\ba_i^\top \bz + \Delta \ba_i^\top \bz)\phi(\ba_j^\top \bz + \Delta \ba_j^\top \bz)-\phi(\ba_i^\top \bz)\phi(\ba_j^\top \bz)\\
=& \phi'(a_i^\top\bz)\Delta \ba_i^\top \bz \phi(\ba_j^\top \bz) +  \phi'(a_j^\top\bz)\Delta \ba_j^\top \bz \phi(\ba_i^\top \bz) + \Ocal(\|\Delta A\|^2)
\end{align*}
Therefore 
\begin{align*}
& \left[S(A+\Delta A)-S(A)\right]_{i:}\\
=& \phi'(a_i^\top\bz)\Delta \ba_i^\top \bz \phi(A\bz)^\top +  (\phi'(A\bz)\circ \Delta A\bz)^\top \phi(\ba_i^\top \bz) + \Ocal(\|\Delta A\|^2)
\end{align*}
Therefore 
\begin{equation}
\label{eqn:delta_S}
S(A+\Delta A) - S(A)=\Diag(\phi'(A\bz))\Delta A\bz\phi(A\bz)^\top+\phi(A\bz)\bz^\top \Delta A^\top \Diag(\phi'(A\bz)).
\end{equation}
And
\begin{align*}
&g(A+\Delta A)-g_n(A+\Delta A)-(g(A)-g_n(A))\\
=& \langle X_n-X,\E_{\bz}\left[S(A+\Delta A)-S(A) \right]\rangle \\
=& \E_{\bz}\langle X_n-X, \Diag(\phi'(A\bz))\Delta A\bz\phi(A\bz)^\top+\phi(A\bz)\bz^\top \Delta A^\top \Diag(\phi'(A\bz))\rangle\\
=& 2\E_{\bz} \langle \Diag(\phi'(A\bz))(X_n-X)\phi(A\bz)\bz^\top,\Delta A\rangle.
\end{align*}
Finally we have $\nabla g(A)-\nabla g_n(A)= 2\E_{\bz}\left[ \Diag(\phi'(A\bz))(X_n-X)\phi(A\bz)\bz^\top\right]$.
\end{proof}

\begin{claim}
\label{claim:hessian}
For arbitrary matrix $B$, the directional derivative of $\nabla g(A) -\nabla g_n(A)$ with direction $B$ is:
\begin{eqnarray*}
&& D_A \nabla g(A)[B] -D_A \nabla g_n(A)[B]\\
 &=& 2\E_{\bz}\left[ \Diag(\phi'(A\bz))(X_n-X)\phi'(A\bz)\circ (B\bz)\bz^\top \right]\\
&& + 2\E_{\bz}\left[ \Diag(\phi''(A\bz)\circ ( B\bz))(X_n-X)\phi(A\bz)\bz^\top \right]
\end{eqnarray*}
\begin{proof}
	\begin{align*}
	& g(A+tB) \\
	=& 2\E_{\bz}\left[ \Diag(\phi'(A\bz + tB\bz))(X_n-X)\phi(A\bz+tB\bz)\bz^\top \right] \\
	= & 2\E_{\bz}\left[ \Diag(\phi'(A\bz) + t(B\bz)\circ \phi''(A\bz))(X_n-X)(\phi(A\bz)+t\phi'(A\bz)\circ (B\bz) )\bz^\top \right] +\Ocal(t^2)
	\end{align*}
Therefore 
\begin{align*}
&\lim_{t\rightarrow 0} \frac{g(A+tB)-g(A)}{t}\\
=& 2\E_{\bz}\left[ \Diag(\phi'(A\bz))(X_n-X)\phi'(A\bz)\circ (B^\top\bz)\bz^\top \right]\\
& + 2\E_{\bz}\left[ \Diag(\phi''(A\bz)\circ ( B\bz))(X_n-X)\phi(A\bz)\bz^\top \right]
\end{align*}
	
\end{proof}

\end{claim}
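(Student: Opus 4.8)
The plan is to differentiate the closed form for $\nabla g(A) - \nabla g_n(A)$ established in the preceding claim, namely
$$\nabla g(A) - \nabla g_n(A) = 2\,\E_{\bz}\!\left[\Diag(\phi'(A\bz))\,(X_n - X)\,\phi(A\bz)\,\bz^\top\right],$$
by perturbing $A \mapsto A + tB$ and extracting the coefficient of $t$ in a first-order Taylor expansion. Write $G(A) \equiv \nabla g(A) - \nabla g_n(A)$; then $D_A\nabla g(A)[B] - D_A\nabla g_n(A)[B] = \lim_{t\to 0}\tfrac{1}{t}\big(G(A+tB) - G(A)\big)$, so it suffices to expand $G(A+tB)$ to order $t$ and read off the linear term.

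First I would record the two coordinatewise Taylor expansions that drive everything. Since $\phi$ is differentiable and, by Assumption \ref{assump:obs}, smooth, applying the one-dimensional expansion entrywise to the vector $A\bz + tB\bz$ gives $\phi(A\bz + tB\bz) = \phi(A\bz) + t\,\phi'(A\bz)\circ(B\bz) + \Ocal(t^2)$ and $\phi'(A\bz + tB\bz) = \phi'(A\bz) + t\,\phi''(A\bz)\circ(B\bz) + \Ocal(t^2)$, where $\circ$ is the Hadamard product and $\phi',\phi''$ act entrywise. Substituting these into $G(A+tB) = 2\,\E_{\bz}\!\left[\Diag(\phi'(A\bz+tB\bz))(X_n - X)\phi(A\bz+tB\bz)\bz^\top\right]$ and multiplying out, the $t^0$ term reproduces $G(A)$, and the $t^1$ term splits into exactly two pieces according to whether the perturbation hit the $\Diag(\phi'(\cdot))$ factor or the $\phi(\cdot)$ factor, yielding
$$2\,\E_{\bz}\!\left[\Diag(\phi'(A\bz))(X_n - X)\big(\phi'(A\bz)\circ(B\bz)\big)\bz^\top\right] \;+\; 2\,\E_{\bz}\!\left[\Diag(\phi''(A\bz)\circ(B\bz))(X_n - X)\phi(A\bz)\bz^\top\right],$$
which is the claimed formula; dividing by $t$ and letting $t\to 0$ finishes the identity at the level of these expansions.

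The only step requiring genuine care — and the one I expect to be the main (mild) obstacle — is justifying the interchange of the limit with the expectation $\E_{\bz}$, since the integrand mixes the Gaussian vector $\bz$, the fixed random matrix $X_n - X$, and the nonlinearities $\phi,\phi',\phi''$. Here I would invoke Assumption \ref{assump:obs}: $\phi$ is $1$-Lipschitz and $1$-smooth, so $|\phi'|\le 1$ and $|\phi''|\le 1$ entrywise, while $\|\phi(A\bz)\|$ grows at most linearly in $\|A\bz\|$; consequently each entry of the difference quotient $\tfrac{1}{t}(G(A+tB)-G(A))$ is bounded, uniformly for $|t|\le 1$, by a fixed polynomial in $\|\bz\|$ times $\|X_n - X\|\,\|A\|\,\|B\|$, which is integrable against the standard Gaussian density. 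Dominated convergence then licenses passing the limit inside $\E_{\bz}$, so the term-by-term computation above applies verbatim. (Alternatively one may appeal to smoothness of $A\mapsto \E_{\bz}[\phi(A\bz)\phi(A\bz)^\top]$, already used implicitly in defining $\nabla g$ and $\nabla g_n$.) Everything remaining is bookkeeping with the $\Diag(\cdot)$ and Hadamard-product notation of the statement.
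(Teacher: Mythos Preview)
Your proposal is correct and follows essentially the same approach as the paper: perturb $A\mapsto A+tB$ in the closed-form expression for $\nabla g(A)-\nabla g_n(A)$, Taylor-expand $\phi$ and $\phi'$ entrywise to first order, and read off the coefficient of $t$. The only difference is that you additionally justify the interchange of limit and expectation via dominated convergence using Assumption~\ref{assump:obs}, a point the paper's proof elides.
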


\subsection{Omitted Proofs for Observation Sample Complexity}

\begin{proof}[Proof of Lemma \ref{lemma:error_Xm}]
	For each $\bx_i=\phi(A\bz_i), \bz_i\sim\Ncal(0,I_{k\times k} )$. Each coordinate $|x_{i,j}|=|\phi(\ba_j^\top\bz_i)|\leq |\ba_j^\top\bz_i|$ since $\phi$ is 1-Lipschitz. \footnote{For simplicity, we analyze as if $\phi(0)=0$ w.o.l.g. throughout this section, since the bias term is canceled out in the observation side with $\phi(A^*\bz)$ and the learning side with $\phi(A\bz)$.}. Without loss of generality we assumed $\|\ba_j\|=1,\forall j$, therefore $\ba_j^\top \bz\sim \Ncal(0,I_{k\times k})$. For all $i\in [n], j\in [d]$ $|x_{i,j}|\leq \log(nd/\delta)$ with probability $1-\delta$. 	
	
	Then by matrix concentration inequality (\citep{vershynin2010introduction} Corollary 5.52), we have with probability $1-\delta$:
	$(1-\epsilon)X\preceq X_n \preceq (1+\epsilon)X$ if $n\geq \Omega(d/\epsilon^2 \log^2(nd/\delta) )$. Therefore set $n=\tilde{\Theta}(d/\epsilon^2 \log^2(1/\delta))$ will suffice. 
\end{proof}

\begin{proof}[Proof of Lemma \ref{lemma:concentration_m}]
	\begin{eqnarray*}
		X_{ij} &= &\E_{\bz\sim \Ncal(0,I_{k\times k})}\phi(\ba_i^\top \bz)\phi(\ba_j^\top \bz)\\
		&=&\left\{
		\begin{array}{ll}
			0 & i\neq j\\
			\E[\phi^2(\ba_i^\top\bz)]\leq \frac{2}{\pi} & i=j
		\end{array}
		\right.
	\end{eqnarray*}
	Therefore $\|X\|_2\leq \frac{2}{\pi}$. Together with Lemma \ref{lemma:error_Xm}, $\|X-X_n\|\leq \epsilon \frac{2}{\pi}$ w.p $1-\delta$. Recall \begin{equation*}
	\nabla g(A) - \nabla g_n(A) = 2 \E_{\bz}\left[\Diag(\phi'(A\bz))(X-X_n)\phi(A\bz)\bz^\top \right]:=2\E_{\bz}G(\bz) ,
	\end{equation*}
	where $G(\bz)$ is defined as $\Diag(\phi'(A\bz))(X-X_n)\phi(A\bz)\bz^\top$. We have $\|G(z)\|\leq \|A\|\|\bz\|^2\|X-X_n\|$. 
	\begin{eqnarray*}
		\|\nabla g(A)-\nabla g_n(A)\|_2 &=& 2\|\E_{\bz}[G(\bz)]\|\\
		&\leq& 2\E_{\bz}\|G(\bz)\|\\
		&\leq& 2\E_{\bz} \|A\|\|\bz\|^2\|X-X_n\|\\
		&\leq& 2\|A\|\epsilon \frac{2}{\pi}  \E_{\bz}\|\bz\|^2 \\
		&=& 2\|A\|\epsilon d\frac{2}{\pi}
	\end{eqnarray*}

	For the directional derivative, we make the concentration bound in a similar way. Denote 
	$$D(\bz) = \Diag(\phi'(A\bz))(X_n-X)\phi'(A\bz)\circ (B\bz)\bz^\top + \Diag(\phi''(A\bz)\circ ( B\bz))(X_n-X)\phi(A\bz)\bz^\top. $$
	\begin{eqnarray*}
		\|D(\bz)\|\leq \|X_n-X\|_2\|B\|\|\bz\|^2(1+\|\bz\|\|A\|).
	\end{eqnarray*}
	Therefore $\|D_A \nabla g(A)[B] -D_A\nabla g_n(A)[B]\|\leq \Ocal(\epsilon d^{3/2}\|A\|\|B\|)$ with probability $1-\delta$. 
\end{proof}

\subsection{Omitted Proofs on Bounding Mini-Batch Size}

Recall $$\tilde{g}_{m,n}(A)\equiv 
\frac{1}{2}\left\|\frac{1}{n}\sum_{i=1}^n \bx_i\bx_i^\top - \frac{1}{m}\sum_{j=1}^m\phi(A\bz_j)\phi(A\bz_j)^\top \right\|^2_F.$$
Write $S_j(A)\equiv \phi(A\bz_j)\phi(A\bz_j)^\top$. Then we have 
\begin{align*}
\tilde{g}_{m,n}(A)=&\frac{1}{2} \left\langle X_n-\frac{1}{n}\sum_{j=1}^m S_j(A), X_n-\frac{1}{m}\sum_{j=1}^m S_j(A)\right\rangle\\
= &\frac{1}{2m^2}\sum_{i,j}\langle S_i(A),S_j(A)\rangle - \frac{1}{n}\sum_{j=1}^m \langle S_j(A), X_n \rangle + \frac{1}{2}\|X_n\|^2_F
\end{align*}

On the other hand, our target function is:
\begin{align*}
g_n(A)\equiv &
\frac{1}{2}\left\|\frac{1}{n}\sum_{i=1}^n \bx_i\bx_i^\top -\E_{\bz\sim \Ncal(0,I_{k\times k})}\left[\phi(A\bz)\phi(A\bz)^\top\right] \right\|^2_F\\
= & \frac{1}{2}\|\E_{S}[S]\|^2_F - \langle \E_{S}[S], X_n\rangle +\frac{1}{2} \|X_n\|^2_F   
\end{align*}
Therefore $\E_{S}\tilde{g}_{m,n}(A)-g_n(A)=\frac{1}{2m}(\E_{S}\|S(A)\|^2_F-\|\E_{S}S(A)\|^2_F)$. 

\begin{claim}
	\begin{equation*}
	\nabla \E_{S}\tilde{g}_{m,n}(A)-\nabla g_n(A) = \frac{2}{m}\E_{\bz} \left[\Diag(\phi'(A\bz))S(A)\phi(A\bz)\bz^\top - \Diag(\phi'(A\bz))\E_S[S(A)]\phi(A\bz)\bz^\top \right]. 
	\end{equation*}
\end{claim}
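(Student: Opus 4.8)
The plan is to differentiate the two Frobenius-norm terms appearing in the identity just derived above, namely (writing $S(A):=\phi(A\bz)\phi(A\bz)^\top$ and $\E_S[S(A)]=\E_{\bz}[\phi(A\bz)\phi(A\bz)^\top]$) $\E_{S}\tilde g_{m,n}(A)-g_n(A)=\tfrac{1}{2m}\bigl(\E_{\bz}\|S(A)\|_F^2-\|\E_{\bz}[S(A)]\|_F^2\bigr)$. Since $\nabla_A$ commutes with $\E_{\bz}$ here, it suffices to compute $\nabla_A\E_{\bz}\|S(A)\|_F^2$ and $\nabla_A\|\E_{\bz}[S(A)]\|_F^2$ and subtract, then divide by $2m$.

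First I would take the directional derivative of $\|S(A)\|_F^2=\langle S(A),S(A)\rangle$ along an arbitrary direction $\Delta A$, plugging in the first-order expansion of $S(\cdot)$ recorded in Eqn.~(\ref{eqn:delta_S}), i.e. $S(A+\Delta A)-S(A)=T+T^\top+\Ocal(\|\Delta A\|^2)$ with $T:=\Diag(\phi'(A\bz))\,\Delta A\,\bz\,\phi(A\bz)^\top$. Because $S(A)$ is symmetric, $\langle S(A),T^\top\rangle=\langle S(A),T\rangle$, so the first-order change of $\langle S(A),S(A)\rangle$ is $4\langle S(A),T\rangle$; a cyclic-trace rearrangement rewrites $\langle S(A),T\rangle$ as $\langle \Diag(\phi'(A\bz))\,S(A)\,\phi(A\bz)\,\bz^\top,\ \Delta A\rangle$, which reads off $\nabla_A\|S(A)\|_F^2=4\,\Diag(\phi'(A\bz))\,S(A)\,\phi(A\bz)\,\bz^\top$; taking $\E_{\bz}$ gives the first gradient. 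The second term is the identical computation with the fixed matrix $M:=\E_{\bz}[S(A)]$ in place of $S(A)$: linearity of expectation pushes $\E_{\bz}$ past the perturbation, the directional derivative becomes $4\,\E_{\bz}\langle M,T\rangle$, and the same trace manipulation yields $\nabla_A\|\E_{\bz}[S(A)]\|_F^2=4\,\E_{\bz}\bigl[\Diag(\phi'(A\bz))\,\E_S[S(A)]\,\phi(A\bz)\,\bz^\top\bigr]$. Subtracting these and multiplying by $\tfrac{1}{2m}$ produces exactly the stated formula.

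This is essentially a bookkeeping exercise rather than a substantive argument, so I do not expect a real obstacle; the two places that require a bit of care are (i) the symmetrization constant — each squared Frobenius norm contributes a factor $4$ (a $2$ from the square and a $2$ from collapsing $T+T^\top$ using symmetry of $S(A)$ and of $\E_S[S(A)]$), and these combine with the $\tfrac{1}{2m}$ prefactor to give precisely the $\tfrac{2}{m}$ in the claim; and (ii) justifying the interchange of $\nabla_A$ and $\E_{\bz}$, which follows from dominated convergence since $\phi'$ is bounded by Assumption~\ref{assump:obs} and $\phi(A\bz),\bz$ have Gaussian–polynomial tails that are integrable.
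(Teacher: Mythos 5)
Your proposal is correct and follows essentially the same route as the paper: both differentiate the bias identity $\E_{S}\tilde g_{m,n}(A)-g_n(A)=\tfrac{1}{2m}\bigl(\E_{\bz}\|S(A)\|_F^2-\|\E_{\bz}S(A)\|_F^2\bigr)$ using the first-order expansion of $S(A)$ from Eqn.~(\ref{eqn:delta_S}), the symmetry of $S(A)$ to collapse $T+T^\top$, and a cyclic-trace rearrangement, with the factors combining to $\tfrac{2}{m}$ exactly as you describe. Your added remark on interchanging $\nabla_A$ with $\E_{\bz}$ is a minor justification the paper leaves implicit.
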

\begin{proof}
\begin{align*}
&\langle \nabla \E_{S}\tilde{g}_{m,n} -\nabla g_n, \Delta A\rangle \\
= & \E_{S}\tilde{g}_{m,n}(A+\Delta A) + g_n(A+\Delta A) -(\E_{S}\tilde{g}_{m,n}(A) + g_n(A)) + \Ocal(\|\Delta A\|^2)\\
= & \frac{1}{2m} \left(\E_S \|S(A+\Delta A)\|_F^2 -\E_S \|S(A)\|_F^2 - \|\E_S S(A+\Delta A)\|_F^2 + \|\E_S S(A)\|_F^2 \right)+ \Ocal(\|\Delta A\|^2)\\
= & \frac{1}{m} \left(\E_S \langle S(A),S(A+\Delta A) -S(A)\rangle - \langle \E_S [S(A)], E_S[S(A+\Delta A)-S(A)] \right)+ \Ocal(\|\Delta A\|^2)\\
= & \frac{1}{m} \left(\langle \E_{\bz} \langle S(A), \Diag(\phi'(A\bz))\Delta A\bz\phi(A\bz)^\top\rangle - \langle \E_S [S(A)], \E_{\bz}\Diag(\phi'(A\bz))\Delta A\bz\phi(A\bz)^\top\rangle \right)\\
& + \Ocal(\|\Delta A\|^2)  \shortrnote{(from Eqn. (\ref{eqn:delta_S}) and symmetry of $S$)}\\
=& \langle \frac{2}{m}\E_{\bz} \left[\Diag(\phi'(A\bz))S(A)\phi(A\bz)\bz^\top - \Diag(\phi'(A\bz))\E_S[S(A)]\phi(A\bz)\bz^\top \right], \Delta A\rangle  + \Ocal(\|\Delta A\|^2)
\end{align*}
\end{proof} 

Similarly to the derivation in the previous subsection, we again derive the bias in the directional derivative:
\begin{claim}
	For arbitrary matrix direction $B$,
	\begin{align*}
	&D_A \nabla \E_{S}\tilde{g}_{m,n}(A)[B]-D_A \nabla g_n(A)[B] \\
	=& \frac{2}{m}\E_{\bz} {\big[}\Diag(\phi''(A\bz)\circ (B\bz))(S(A)-\E_{S}S(A))\phi(A\bz)\bz^\top\\
	& + \Diag(\phi'(A\bz))\left((\phi'(A\bz)\circ(B\bz))\phi(A\bz)^\top -\E_{\bz}[(\phi'(A\bz)\circ(B\bz))\phi(A\bz)^\top] \right)\phi(A\bz)\bz^\top\\
	& + \Diag(\phi'(A\bz))\left(\phi(A\bz)(\phi'(A\bz)\circ(B\bz))^\top-\E_{\bz}[\phi(A\bz)(\phi'(A\bz)\circ(B\bz))^\top] \right)\phi(A\bz)\bz^\top\\
	& + \Diag(\phi'(A\bz))(S(A)-\E_{S}S(A))(\phi'(A\bz)\circ(B\bz))\bz^\top{\big]}
	\end{align*}
\end{claim}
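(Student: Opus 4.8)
The plan is to obtain this Hessian-difference formula exactly the way the two preceding claims in this subsection obtained their gradient- and Hessian-difference formulas: substitute $A\mapsto A+tB$ into the gradient-difference identity from the preceding claim and extract the coefficient of $t$. Recall that the preceding claim gives
\begin{equation*}
\nabla \E_{S}\tilde{g}_{m,n}(A)-\nabla g_n(A) = \frac{2}{m}\E_{\bz}\left[\Diag(\phi'(A\bz))\big(S(A)-M\big)\phi(A\bz)\bz^\top\right],
\end{equation*}
where $S(A)=\phi(A\bz)\phi(A\bz)^\top$ and $M:=\E_{S}[S(A)]=\E_{\bz'\sim\Ncal(0,I_{k\times k})}[\phi(A\bz')\phi(A\bz')^\top]$ is a constant matrix (it does not depend on the outer $\bz$). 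So it remains to compute the directional derivative of the right-hand side in direction $B$.

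First I would record the two first-order Taylor expansions already used earlier in the appendix: for the coordinate-wise activation,
\begin{equation*}
\phi'((A+tB)\bz)=\phi'(A\bz)+t\,\phi''(A\bz)\circ(B\bz)+\Ocal(t^2),\qquad \phi((A+tB)\bz)=\phi(A\bz)+t\,\phi'(A\bz)\circ(B\bz)+\Ocal(t^2),
\end{equation*}
and, as a consequence (this is Eqn.~(\ref{eqn:delta_S}) with $\Delta A=tB$ and the identity $\Diag(\phi'(A\bz))B\bz=\phi'(A\bz)\circ(B\bz)$),
\begin{equation*}
S(A+tB)=S(A)+t\left[(\phi'(A\bz)\circ(B\bz))\phi(A\bz)^\top+\phi(A\bz)(\phi'(A\bz)\circ(B\bz))^\top\right]+\Ocal(t^2),
\end{equation*}
with the identical expansion for $M$ after applying $\E_{\bz'}$ to the bracket. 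Plugging these into the product $\Diag(\phi'(A\bz))(S(A)-M)\phi(A\bz)\bz^\top$ and collecting the order-$t$ term, the product rule produces exactly one contribution per $A$-dependent factor: differentiating $\Diag(\phi'(A\bz))$ gives the line containing $\Diag(\phi''(A\bz)\circ(B\bz))$; differentiating $S(A)$ gives the two lines in which $\phi'(A\bz)\circ(B\bz)$ is sandwiched on the left or on the right of a $\phi(A\bz)$; differentiating $M$ gives the $\E_{\bz}$-averaged versions of those same two expressions, which is where the minus signs inside the parentheses on lines two and three come from; and differentiating the trailing $\phi(A\bz)$ gives the final line, with $\phi'(A\bz)\circ(B\bz)$ to the right of $S(A)-M$. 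The trailing $\bz^\top$ is constant in $A$ and contributes nothing. Dividing by $t$, sending $t\to0$, and passing the limit through $\E_{\bz}$ — justified by Assumption~\ref{assump:obs}, which bounds $|\phi'|\le1$ and $|\phi''|\le1$, together with the $\mathrm{poly}(\|\bz\|)$ envelope of the integrand and Gaussian tail decay — yields the claimed identity.

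I do not expect a genuine obstacle here: the argument is entirely mechanical term-by-term substitution. The only points that need care are (i) bookkeeping of which of the four factors is being differentiated, so that each of the four resulting terms lands with the correct placement of $\Diag(\cdot)$, of $S(A)-M$, and of the $\phi(A\bz)$ vectors; and (ii) remembering that differentiation commutes with $\E_{S}$, so the constant matrix $M$ is \emph{not} dropped but instead contributes precisely the expectation-corrected terms on lines two and three.
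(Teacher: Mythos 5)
Your proposal is correct and follows exactly the route the paper intends: the paper states this claim without proof, deferring to "the derivation in the previous subsection," which is precisely the first-order expansion of $\phi((A+tB)\bz)$, $\phi'((A+tB)\bz)$, and $S(A+tB)$ followed by the product rule over the four $A$-dependent factors that you carry out, and each of your four contributions (including the expectation-corrected terms from differentiating $\E_{S}[S(A)]$, which depends on $A$ though not on the outer $\bz$) lands in the right place with the $\tfrac{2}{m}$ prefactor intact. No gap; the interchange of derivative and Gaussian expectation is justified as you say under Assumption~\ref{assump:obs}.
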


\subsection{Omitted Proof of the Main Theorem}

\begin{proof}[Proof of Lemma \ref{lemma:recovery_error}]
	On one hand, suppose $Z$ is an $\epsilon$-FOSP property of $\tilde{g}$ in (\ref{eqn:sdo}) along with the matrix $S$ and vector $\sigma$, we have:
	\begin{align}
	\nonumber 
	&\langle \nabla \tilde{g}(Z),Z-Z^*\rangle \\
	\nonumber 
	=& \langle S,Z-Z^*\rangle \longrnote{(since $Z-Z^*$ has 0 diagonal entries)} \\
	\nonumber 
	\leq& \|P_T(S)\|_2\|P_{T^\circ}(Z-Z^*)\|_F \longrnote{($T$ is the tangent cone of PSD matrices at $Z$)}\\
	\nonumber 
	\leq & \|P_T(S)\|_2\|Z-Z^*\|_F\\
	\nonumber
	= & \max_{j}\{\tilde{\ba}_j^\top S \tilde{\ba}_j\}\|Z-Z^*\|_F \longrnote{($\tilde{\ba}_j$ is the basis of the column space of $Z$ )}\\
	\label{eqn:lhs}
	\leq & \epsilon \|Z-Z^*\|_F \longrnote{(from the definition of $\epsilon$-FOSP)}
	\end{align}
	
	On the other hand, from the definition of $\tilde{g}$, we have:
	\begin{align}
	\nonumber
	&\langle Z-Z^*, \nabla \tilde{g}(Z)\rangle \\
	\nonumber
	=& \sum_{ij} (z_{ij}-z^*_{ij})\tilde{g}_{ij}'(z_{ij})\\
	\nonumber
	=& \sum_{ij} (z_{ij}-z^*_{ij})^2 \sum_{k \text{ odd}} \sigma_k^2 P_k(z_{ij}) \sum_{k \text{ odd}} \sigma_k^2 k z_{ij}^{k-1}\\
	\label{eqn:rhs}
	\geq & \|Z-Z^*\|_F^2 \sigma_1^4 
	\end{align}
	Here polynomial $P_k(z_{ij})\equiv (z_{ij}^k-(z^*_{ij})^k)/(z-z^*)$ is always positive for $z\neq z^*$ and $k$ to be odd. 
	
	Therefore by comparing (\ref{eqn:lhs}) and (\ref{eqn:rhs}) we have $ \epsilon\|Z-Z^*\|_F\geq \|Z-Z^*\|_F^2\sigma_1^4 $, i.e. $\|Z-Z^*\|_F\leq \Ocal(\epsilon) $.
\end{proof}

\begin{proof}[Proof of Theorem \ref{thm:main}]
	From Theorem 31 from \cite{ge2015escaping}, we know for small enough learning rate $\eta$, and arbitrary small $\epsilon$, there exists large enough $T$, such that Algorithm \ref{algorithm:main} generates an output $A^{(T)}$ that is sufficiently close to the second order stationary point for $f$. Or formally we have,
	\begin{equation*}
	\left\{\begin{array}{lc}
	\Tr((A^{(T)})^\top X_i A^{(T)})=y_i,&\\
	\|(\nabla_A f(A^{(T)}) -\sum_{i=1}\lambda_i X_i A^{(T)})_{:,j}\|_2 \leq \epsilon\min\|A_{j,:}\|_2 ,&\forall j\in[k]\\
	\Tr(B^\top D_A \nabla_A \Lcal_f(A^{(T)},\lambda)[B])\geq -\epsilon \|B\|_2^2, &\forall B, s.t. \Tr(B^\top X_i A)=0
		\end{array}\right. 
	\end{equation*} 
	
	 $\Lcal_f(A,\lambda) = f(A)-\sum_{i=1}^d\lambda_i(\Tr(A^\top X_i A) - y_i) $. Let $\{\tilde{\ba}_i=A^{(T)}\br_i\}_i^k$ to form the basis of the column vector space of $A^{(T)}$. Then the second line is a sufficient condition for the following: $\|\tilde{\ba}_j^\top(\nabla_A f(A^{(T)}) -\sum_{i=1}\lambda_i X_i A^{(T)})\br_j\|_2 \leq \epsilon ,\forall j\in[k]$. 
	
Now with the concentration bound from Lemma \ref{lemma:concentration_n}, suppose our batch size $m\geq \Ocal(d^5/\epsilon)$, we have $\|\nabla_A g_n(A^{(T)})-\nabla_A f(A^{(T)})\|_2\leq \epsilon$, and $\|D_A\nabla_A g_n(A^{(T)})[B] -D_A\nabla_A f(A^{(T)})[B]\|_2\leq \epsilon \|B\|_2$ for arbitrary $B$. Therefore again we get:
 	\begin{equation*}
 \left\{\begin{array}{lc}
 \Tr((A^{(T)})^\top X_i A^{(T)})=y_i&\\
 \|\tilde{\ba}_j^\top(\nabla_A g_n(A^{(T)}) -\sum_{i=1}\lambda_i X_i A^{(T)})\br_j\|_2 \leq 2\epsilon, &\forall j \in [k] \\
 \Tr(B^\top D_A \nabla_A \Lcal_{g_m}(A^{(T)},\lambda)[B]) \geq -2\epsilon \|B\|_2^2, &\forall B, s.t. \Tr(B^\top X_i A)=0 
 \end{array}\right. 
 \end{equation*} 

Next we turn to the concentration bound from Lemma \ref{lemma:concentration_m}. Suppose we have when the sample size $n\geq \Ocal(d^5/\epsilon^2\log^2(1/\delta))$, $\|D_A\nabla_A g(A)[B]- D_A\nabla_A g_n(A)[B]\|_2\leq \Ocal(\epsilon \|B\|_2)$, and $\|\nabla g(A)-\nabla g_n(A) \|_2\leq \Ocal(\epsilon)$ with probability $1-\delta$. 
Therefore similarly we get $A^{(T)}$ is an $\Ocal(\epsilon)$-SOSP for $g(A)=\frac{1}{2}\left\|\sum_{i=0}^\infty \sigma_i^2\left((A^*(A^*)^\top)^{\circ i} - (AA^\top)^{\circ i}\right)\right\|^2_F$. 
	
Now with Lemma \ref{lemma:optimality} that connects the approximate stationary points, we have $Z:=A^{(T)}(A^{(T)})^\top$ is also an $\epsilon$-FOSP of $\tilde{g}(Z)=\frac{1}{2}\left\|\sum_{i=0}^\infty \sigma_i^2\left((Z^*)^{\circ i} - Z^{\circ i}\right)\right\|^2_F$.	

Finally with Lemma \ref{lemma:recovery_error}, we get $\|Z-Z^*\|_F\leq \Ocal(\epsilon)$. 
	
\end{proof}

\end{document}